\definecolor{Mea_Deep_Color}{rgb}{0.3, 0.35, 0.45}
\definecolor{Mea_Cov_Color}{rgb}{0.5, 0.58, 0.7}
\pgfplotsset{width=7cm,compat=1.13}
\newcommand{\mum}[1]{\SI[multi-part-units = single]{#1}{\micro\meter}}
\numberwithin{equation}{section}
\numberwithin{figure}{section}
\theoremstyle{plain}
\newtheorem{theorem}{Theorem}[section]
\newtheorem{lemma}[theorem]{Lemma}
\newtheorem{proposition}[theorem]{Proposition}
\theoremstyle{definition}
\newtheorem{definition}[theorem]{Definition}
\newtheorem{remark}[theorem]{Remark}
\newcommand{\bitem}{\begin{itemize}}
\newcommand{\eitem}{\end{itemize}}
\newcommand{\mc}[1]{\mathcal{#1}}
\newcommand{\II}{\mathbb{I}}
\newcommand{\R}{\mathbb{R}}
\newcommand{\EE}{\mathbb{E}}
\newcommand{\Q}{\mathbb{Q}}
\newcommand{\bpm}{\begin{pmatrix}}
\newcommand{\epm}{\end{pmatrix}}
\newcommand{\bvm}{\begin{vmatrix}}
\newcommand{\evm}{\end{vmatrix}}
\newcommand{\bsm}{\left(\begin{smallmatrix}}
\newcommand{\esm}{\end{smallmatrix}\right)}
\newcommand{\T}{\top}
\newcommand{\ol}[1]{\overline{#1}}
\newcommand{\la}{\langle}
\newcommand{\ra}{\rangle}
\newcommand{\w}{\omega}
\newcommand{\gdw}{\Leftrightarrow}
\newcommand{\eins}{\mathbb{1}}
\DeclareMathSymbol{\mydiv}{\mathbin}{symbols}{"04}
\DeclareMathOperator{\Diag}{Diag}
\DeclareMathOperator{\dom}{dom}
\DeclareMathOperator{\argmin}{arg min}
\DeclareMathOperator{\ggrad}{grad}
\DeclareMathOperator{\Exp}{Exp}
\DeclareMathOperator{\expm}{expm}
\DeclareMathOperator{\DSC}{DSC}
\DeclareMathOperator{\MAE}{MAE}
\newcommand{\Depth}{4}
\newcommand{\Height}{6}
\newcommand{\Width}{2}
\newcommand{\yy}{1}
\newcommand{\xa}{5}
\newcommand{\ya}{-0.2}
\newcommand{\za}{1}
\newcommand{\xb}{6.4}
\newcommand*\circled[1]{\tikz[baseline=(char.base)]{
		\node[shape=circle,draw,inner sep=2pt] (char) {#1};}}
\definecolor{Layer_1}{rgb}{0.12156863, 0.46666667, 0.70588235}
\definecolor{Layer_2}{rgb}{0.68235294, 0.78039216, 0.90980392}
\definecolor{Layer_3}{rgb}{1.        , 0.49803922, 0.05490196}
\definecolor{Layer_4}{rgb}{0.17254902, 0.62745098, 0.17254902}
\definecolor{Layer_5}{rgb}{0.59607843, 0.8745098 , 0.54117647}
\definecolor{Layer_6}{rgb}{0.83921569, 0.15294118, 0.15686275}
\definecolor{Layer_7}{rgb}{0.58039216, 0.40392157, 0.74117647}
\definecolor{Layer_8}{rgb}{0.77254902, 0.69019608, 0.83529412}
\definecolor{Layer_9}{rgb}{0.54901961, 0.3372549 , 0.29411765}
\definecolor{Layer_10}{rgb}{0.89019608, 0.46666667, 0.76078431}
\definecolor{Layer_11}{rgb}{0.96862745, 0.71372549, 0.82352941}
\definecolor{Layer_12}{rgb}{0.49803922, 0.49803922, 0.49803922}
\definecolor{Layer_13}{rgb}{0.7372549 , 0.74117647, 0.13333333}
\definecolor{Layer_14}{rgb}{0.85882353, 0.85882353, 0.55294118}
\definecolor{Stein_line}{rgb}{0,0.5,0,1}
\definecolor{Euc_line}{rgb}{1,0.3,0,0}
\definecolor{Riemann}{rgb}{0.58,0.08,0}
\newcommand{\markerone}{\raisebox{0.5pt}{\tikz{\node[fill = 
green,circle,minimum size=1pt,scale=0.5, label = 
			{[below = 
				0.5ex]}] () {$1$};}}}
\newcommand{\markersix}{\raisebox{0.5pt}{\tikz{\node[fill = 
lime,circle,minimum size=1pt,scale=0.5, label = 
									{[below = 
										0.5ex]}] () {$6$};}}}
\newcommand{\markerfour}{\raisebox{0.5pt}{\tikz{\node[fill = 
yellow,circle,minimum size=1pt,scale=0.5, label = 
												{[below = 
													0.5ex]}] () {$4$};}}}
\newcommand{\markerfive}{\raisebox{0.5pt}{\tikz{\node[fill = 
cyan,circle,minimum size=1pt,scale=0.5, label = {[below = 0.5ex]}] () 
{$5$};}}}
\newcommand{\markerthree}{\raisebox{0.5pt}{\tikz{\node[fill = 
orange,circle,minimum size=1pt,scale=0.5, label = {[below = 0.5ex]}]
(){$3$};}}}
\DeclareMathOperator{\logm}{logm}
\title[Assignment Flow For Order-Constrained OCT Segmentation]{Assignment Flow For Order-Constrained OCT Segmentation}
\author[D.~Sitenko, B.~Boll, C.~Schn\"{o}rr]{Dmitrij Sitenko, Bastian Boll, 
Christoph Schn\"{o}rr}
\address[D.~Sitenko]{Image and Pattern Analysis Group (IPA) and Heidelberg 
Collaboratory for Image Processing (HCI), Heidelberg 
	University, Germany}
\email{dmitrij.sitenko@iwr.uni-heidelberg.de}
\address[B.~Boll]{Image and Pattern Analysis Group (IPA) and Heidelberg 
Collaboratory for Image Processing (HCI), Heidelberg 
	University, Germany}
\email{bastian.boll@iwr.uni-heidelberg.de}
\address[C.~Schn\"{o}rr]{Image and Pattern Analysis Group, Heidelberg 
University, Germany} 
\email{schnoerr@math.uni-heidelberg.de}
\urladdr{\url{https://ipa.math.uni-heidelberg.de}}
\date{} 
\newbox{\myorcidaffilbox}
\sbox{\myorcidaffilbox}{\large\includegraphics[height=1.7ex]{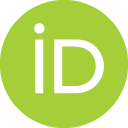}}
\newcommand{\orcidaffil}[1]{%
	\href{https://orcid.org/#1}{\usebox{\myorcidaffilbox}\,#1}}
\subjclass[2010]{68U10,68T05}
\begin{document}

\begin{abstract}
At the present time Optical Coherence Tomography (OCT) is among the most 
commonly used non-invasive imaging methods for the acquisition 
of large volumetric scans of human retinal tissues and vasculature. The 
substantial increase of accessible highly resolved 3D samples at the optic 
nerve head and the macula is directly linked to medical advancements in early 
detection of eye diseases. To resolve decisive information from 
extracted OCT volumes and to make it applicable for further diagnostic 
analysis, the exact identification of retinal layer
thicknesses serves as an 
essential task be done for each patient separately. However, the 
manual examination of multiple OCT scans in a row is a  demanding and time 
consuming task, which results in a lengthy qualification process and is 
frequently confounded in the presence of tissue-dependent speckle noise. 
Therefore, the elaboration of automated segmentation models has become an 
important task in the field of medical image processing.
 
We propose a novel, purely data driven \textit{geometric approach to 
order-constrained $3$D OCT retinal cell layer segmentation} which takes as 
input data in any metric space and comes along with basic operations that can 
be effectively computed in parallel.  As opposed to many established retina detection 
methods, our presented formulation avoids the use of any shape prior and 
accomplishes the natural order of the retina in a purely geometric way, while 
maintaining the high level of accuracy.   
This makes the approach unbiased and hence suited for the detection of local 
anatomical changes of retinal tissue structure. To demonstrate robustness of 
the proposed approach, we compare 
two different choices of features on a data set of manually annotated $3$D OCT 
volumes of healthy human retina.
The quality of computed segmentations is compared to the state of the art in 
terms of mean absolute error and the Dice similarity coefficient. Visualizations 
of segmented volumes are also provided. The  
results indicate a great potential for applying our method to the classification of 
diseased retina and opens a new research direction regarding the joint segmentation of 
retinal cell layers and blood vessel structures.      
\end{abstract}

\maketitle
\tableofcontents

\section{Introduction}\label{sec:Introduction}
\subsection{Overview, Motivation}
Optical Coherence Tomography (OCT) is a non-invasive imaging technique which 
measures the intensity response of back scattered light from millimeter 
penetration depth and provides information about retinal tissue structure in 
vivo to understand human eye functionalities, see Figure \ref{fig:Vis_Anotomy} 
for a more descriptive anatomical explanation. OCT devices record multiple 
two-dimensional B-scans in rapid succession and combine them to a single volume 
in a subsequent alignment step. We focus specifically on the application of OCT 
in ophthalmology for the aquisition of high-resolution volume scans of the human 
retina. Taking an OCT scan only takes minutes and can help detect symptoms of 
pathological conditions such as glaucoma, diabetes, multiple sclerosis or 
age-related macular degeneration. The relative ease of data acquisition also 
enables to use multiple OCT volume scans of a single patient over time to track 
the progression of a pathology or quantify the success of therapeutic treatment.
As a consequence of the technological progress in OCT imaging which was made 
over past few decades since its invention in 1991 \cite{Huang:1991aa}, more 
expertise for extraction of manual annotations is required which in the 
presence of big volumetric data sets is difficult to access.
 
To better leverage the availability of retinal OCT data in both clinical 
settings 
and empirical studies, much work is focused on the analysis of appropriate 
automatic feature extraction techniques. In particular, the access to such 
methods is especially crucial for achieving enhanced effectiveness of existing 
quantitative retinal multi cell layer 
segmentation approaches, and for increasing their clinical potential in real life 
applications, such as detection of fluid regions and reconstruction of vascular 
structures. The 
difficulty of 
these tasks lies in the challenging signal-to-noise ratio which is influenced by 
multiple factors including mechanical eye movement during registration and the 
presence of speckle noise.

In this paper, we extend the approach \cite{Astrom:2017ac} for labeling data on 
graphs to automatic cell layer segmentation in OCT data. After a feature 
extraction step, each voxel is labeled by smoothing local layer decisions and 
jointly leveraging a global geometric invariant -- the natural order of cell 
layers along the vertical axis of each B-scan, as shown in the third row of Figure 
\ref{fig:Volume_Vis_Introdunction}. We are able to produce high-quality 
segmentations 
of OCT volumes by using \emph{local} features as input for a purpose-built 
assignment flow variant which serves to incorporate global context in a 
controlled way. 
This is in contrast to common machine learning approaches which use essentially full B-scans as input. By incorporating global context into the feature extraction process, the latter methods are at increased risk of overfitting training data and potentially missinterpreting unseen pathologies.

Our segmentation approach is a smooth image labeling algorithm based on 
geometric numerical integration on an elementary statistical manifold. It can 
work with input data from any metric space, making it agnostic to the choice of 
feature extraction and suitable as plug-in replacement in diverse pipelines. In 
addition to respecting the natural order of cell layers, the proposed 
segmentation process has a high amount of built-in parallelism such that modern 
graphics acceleration hardware can easily be leveraged. 
We compare the effectiveness of our novel approach between a selection of input features ranging from traditional covariance descriptors to convolutional neural networks.
Figure \ref{fig:Volume_Vis_Introdunction} shows one specific example of an 
segmented OCT-volume with clearly visible difficulties of speckle noise and 
jagged patterns followed by an labeled (2D) cutout which is a typical result 
our 
novel approach.
\begin{figure}[t]
	\begin{tikzpicture}
	\node(D)[anchor=south west,inner 
	sep=0] at (18.6,7.6){}; 
	\node(A)[anchor=south west,inner 
	sep=0] at (17,5) 
	{\includegraphics[width=5cm,height=5cm]{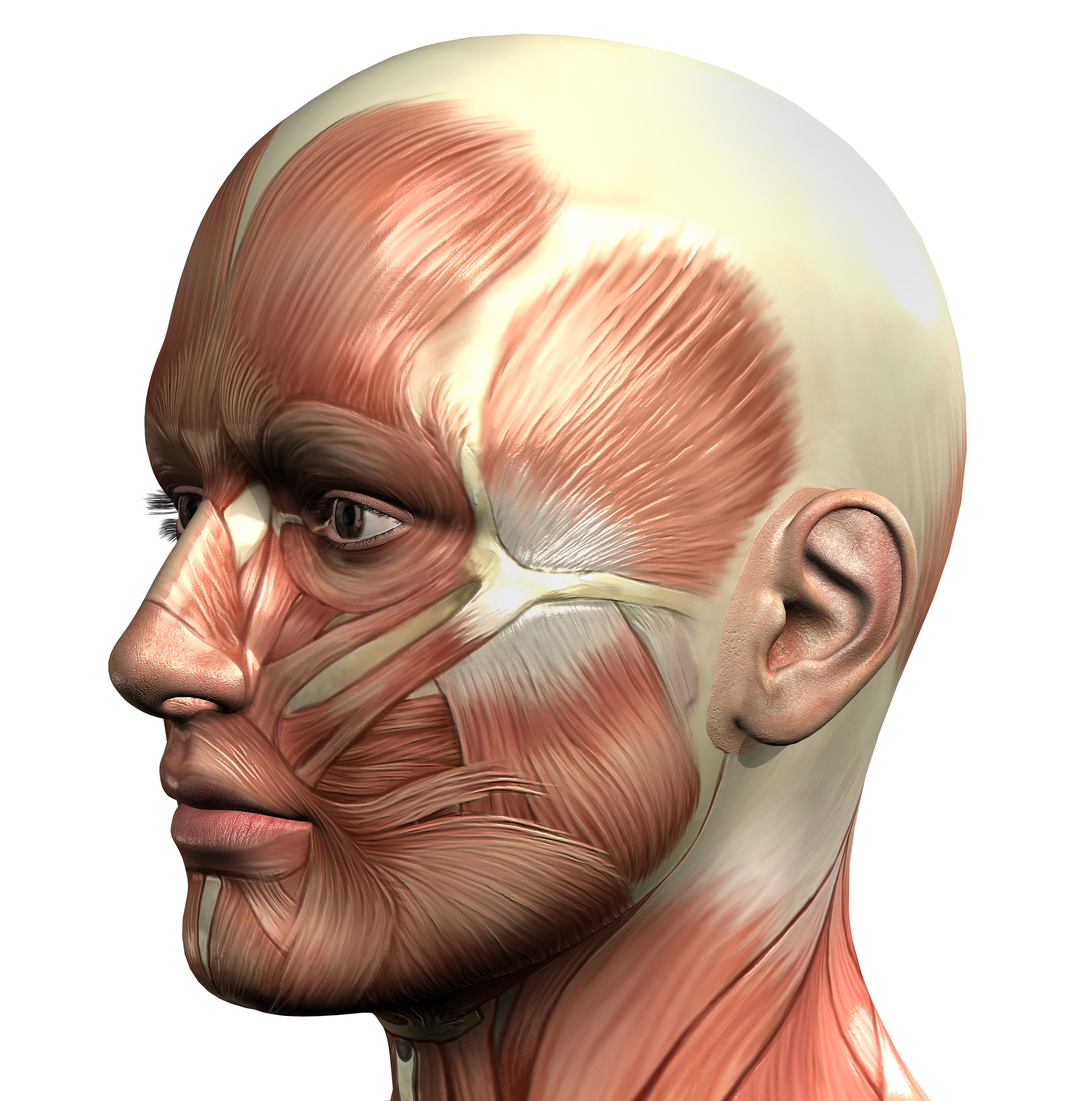}};
	\node(B)[inner sep=0pt,left = 4cm of A] 
	{\phantom{\includegraphics[width=.31\textwidth]{Graphics/4902.jpg}}};
	\node(C)[draw,very thick, blue!60, minimum size=0.5cm, at=(D)]{};
	\begin{scope}[thick,blue!40]
	\draw(C) -- (11,9.5);
	\draw(C) -- (11,5);
	\end{scope}
	\node[draw=blue!80,very thick,at=(B),inner sep=2pt,left = 4cm of 
	A]{\includegraphics[width=.30\textwidth]{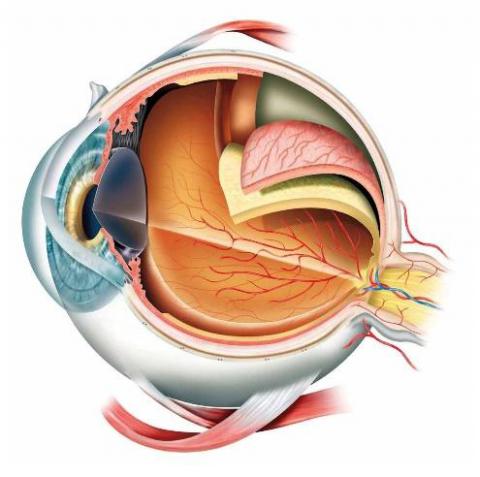}};
	\node(Fovea)[fill = green,circle,minimum size=1pt,scale=0.5, label = 
	{[below = 
		0.5ex]}] at (12.7,8) {$1$};
	\node(Sclera)[fill = red,circle,minimum size=1pt,scale=0.5, label = {[below 
	= 
		0.5ex]}] at (12.7,8.35) {$2$};
	\node(Vitreous humour)[fill = lime,circle,minimum size=1pt,scale=0.5, label 
	= 
	{[below = 0.5ex]}] at (12.7,7.65) {$6$};
	\node(Retina)[fill = yellow,circle,minimum size=1pt,scale=0.5, label = 
	{[below 
		= 0.5ex]}] at (12.7,9.05) {$4$};
	\node(Cornea)[fill = cyan,circle,minimum size=1pt,scale=0.5, label = 
	{[below = 
		0.5ex]}] at (12.7,9.4) {$5$};
	\node(Choroid)[fill = orange,circle,minimum size=1pt,scale=0.5, label = 
	{[below 
		= 0.5ex]}] at (12.7,8.7) {$3$};
	\path[->, draw,thick, color = green] (11.8, 7.4) -- (Fovea);
	\path[->, draw,thick, color = lime] (10.4, 7.3) -- (Vitreous humour);
	\path[->, draw,thick, color = red] (11.6, 8.7) -- (Sclera);
	\path[->, draw,thick, color = orange] (11.3,8.7) -- (Choroid);
	\path[->, draw,thick, color = yellow] (11,8) -- (Retina);
	\path[->, draw,thick, color = cyan] (8.35, 8.3) -- (Cornea);
	\end{tikzpicture}
	\captionsetup{font=footnotesize}
	\caption{Schematic illustration designed by \cite{kjpargeter}
	of human eye functionality: The light 
	enters 
		the Cornea \protect\markerfive \hspace{2pt}though the vitreous humour 
		\protect\markersix \hspace{2pt}towards 
		retina 
		\protect\markerfour \hspace{2pt}and 
		choroid \protect\markerthree \hspace{2pt}which are located around the 
		fovea 
		\protect\markerone.}
	\label{fig:Vis_Anotomy}
\end{figure}
\subsection{Related Work}
Effective segmentation of OCT volumes is a very active area of research. Here, 
we briefly review the current state of the art approaches originating from 
the broad research fields of graphical models, variational methods and machine 
learning.
\subsubsection{Graphical Models}

The first mathematical access to the problem is provided by the theory of graphical 
models which transforms the segmentation task into an optimization problem with hard
pairwise interaction constraints between voxels. Starting with Li et. al. 
\cite{Li:2006aa} and Haeker \cite{Haeker:2007aa}, simultaneous retina layer 
detection attempts were made by finding an $s$-$t$ minimum graph cut.
Garvin et. al. \cite{Garvin:2008aa} further extended this approach with a shape prior 
modeling layer boundaries. The methods benefit from low computational 
complexity, but are lacking of robustness in the presence of speckle and 
therefore require additional preprocessing steps. 
Along this line of reasoning, B.J. Anthony et. al \cite{Anthony:2010aa} used a two 
stage segmentation process by applying anisotropic diffusion in a preprocessing
step and consequently segmenting outer retina layers using graphical models.
Similarly, Kafieh et. al. \cite{Kafie:2013aa} proposed to use specific 
distances based on diffusion maps which are computed by coarse graining the 
original graph. However, increased performance for noisy OCT data gained by 
regularizing in this way comes at the cost of introducing bias in the 
preprocessing step which in turn inpairs robustness in settings with medical 
pathologies. 

Motivated by \cite{Song:2012aa}, Dufour et. al. \cite{Dufour:2013aa} comes up 
with a circular shape prior for segmentation of 6 retinal layers by 
incorporating soft constraints which are more 
suitable for the robust detection of pathological retina structures. 
Chiu. et al. \cite{Chiu:2015aa} relies on a graphical model approach as a 
postprocessing step after applying a supervised kernel regression 
classification with features extracted according to \cite{Quellec:2010aa}.
Rathke et. al. \cite{Rathke2014} reduced the overall complexity by a 
parallelizable segmentation approach based on probabilistic graphical models 
with global low-rank shape prior representing interacting retina tissues 
surfaces. While the \textit{global} shape prior works well for non-pathological 
OCT data, it cannot be adapted to the broad range of variations caused by 
\textit{local pathological} structure resulting in a inherent limitation of 
this approach.

\begin{figure}[ht!] 
	{\raisebox{10mm}{\begin{subfigure}[t]{1.6em}
				\caption[singlelinecheck=off]{}
		\end{subfigure}}\ignorespaces}
	\begin{subfigure}[t]{0.47\textwidth}
		\centering
		\includegraphics[width=6.5cm,height=3.2cm]{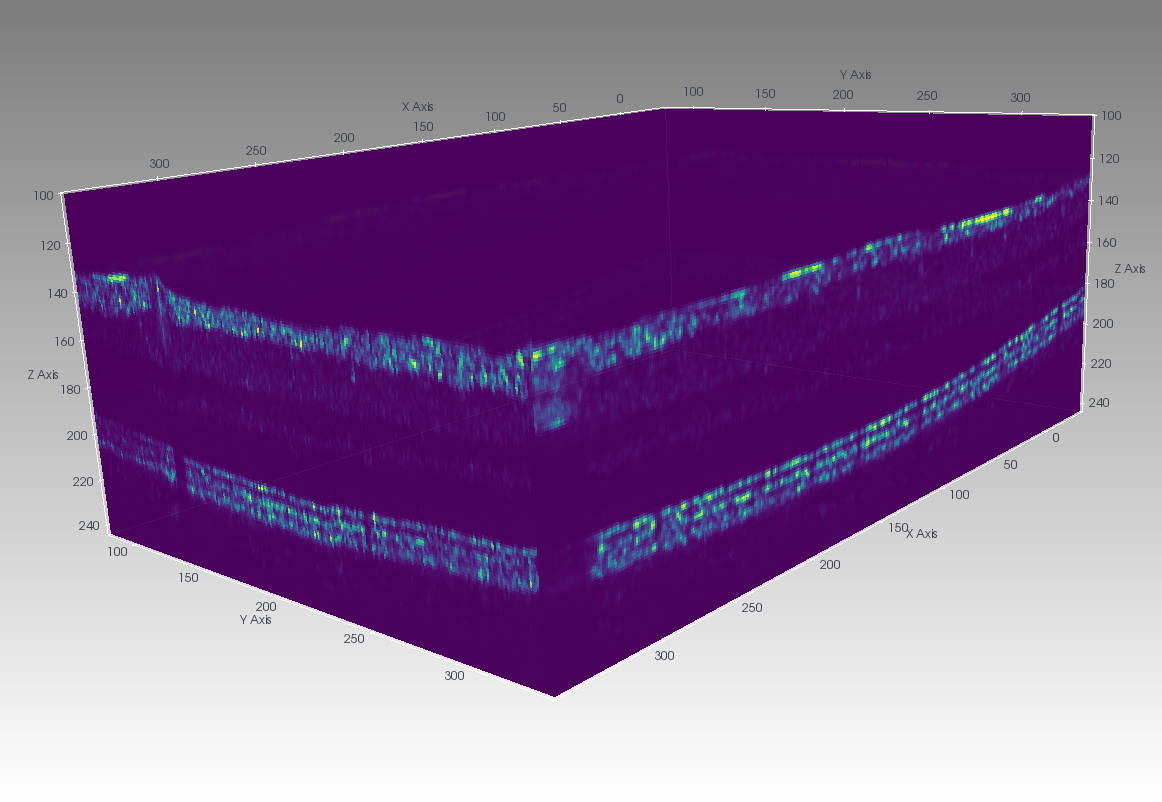}
	\end{subfigure}
	\begin{subfigure}[t]{0.47\textwidth}
		\centering
		\includegraphics[width=6.5cm,height=3.2cm]{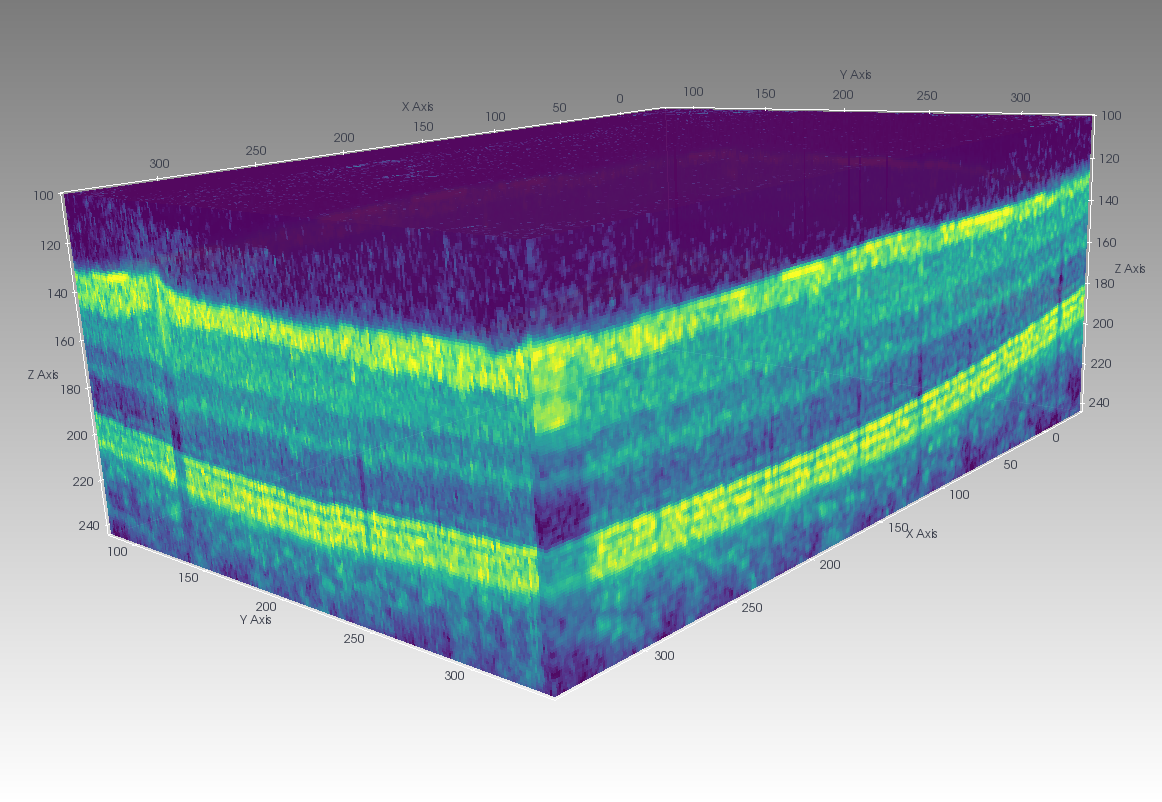}
	\end{subfigure}\\
\vspace{5pt}
	{\raisebox{10mm}{\begin{subfigure}[t]{1.6em}
				\caption[singlelinecheck=off]{}
		\end{subfigure}}\ignorespaces}
	\begin{subfigure}[t]{0.47\textwidth}
		\centering
		\includegraphics[width=6.5cm,height=3.2cm]{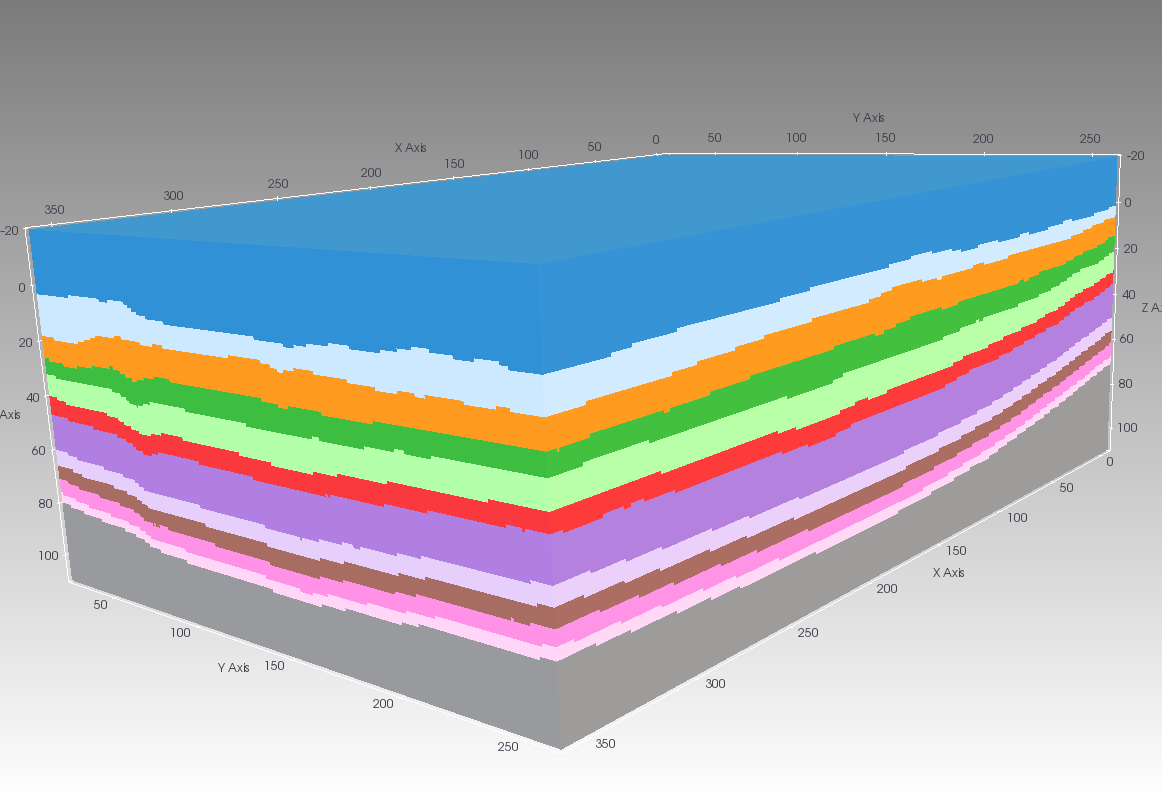}
	\end{subfigure}
	\begin{subfigure}[t]{0.47\textwidth}
		\centering
		\includegraphics[width=6.5cm,height=3.2cm]{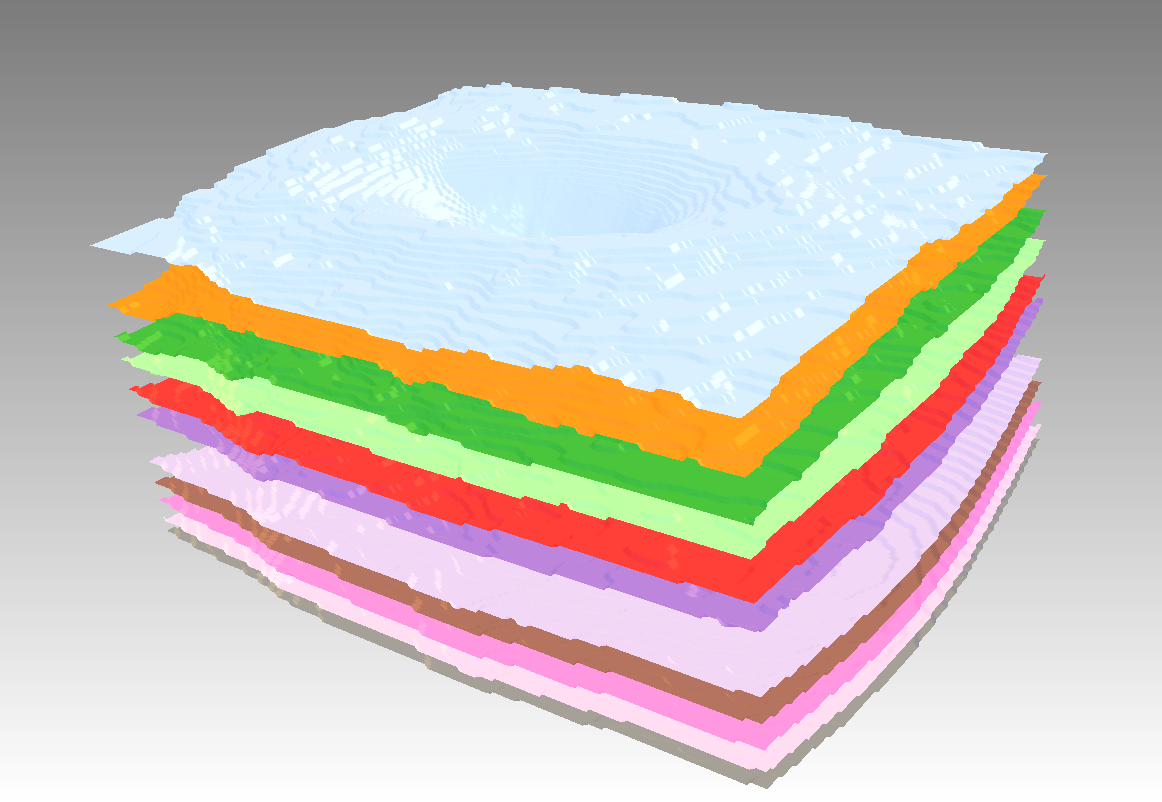}
	\end{subfigure}\\
\vspace{10pt}
{\raisebox{10mm}{\begin{subfigure}[t]{1.6em}
			\caption[singlelinecheck=off]{}
	\end{subfigure}}\ignorespaces}
		\begin{subfigure}[t]{0.945\textwidth}
		\centering
		\includegraphics[width=14.5cm,height=2cm]{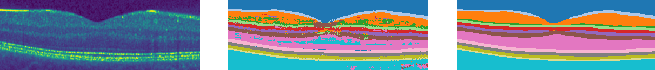}
	\end{subfigure}
	\captionsetup{font=footnotesize}
	\caption{\textbf{(a)}: $3$D OCT volume scan dimension $512 \times 512 
		\times 256$  of healthy 
		human retina with ambiguous locations of layer boundaries with 
		normalized view on the right. \textbf{(b)}: The 
		resulting segmentation of 11 layers displaying the order preserving 
		labeling of the proposed approach. Boundary surfaces 
		between different segmented cell layers are illustrated. \textbf{(c)}: Typical result 
		of 
		the proposed segmentation approach for a single 
		B-scan of healthy retina. \emph{Left}: raw OCT input data. 
		\emph{Middle}: 
		segmentation by locally selecting the label with maximum score for each voxel 
		after 
		feature extraction. \emph{Right}: segmentation by the proposed 
		assignment flow 
		approach using the same extracted features.}
	\label{fig:Volume_Vis_Introdunction}
\end{figure}

\subsubsection{Variational Methods}

Another category of layer detection methods focus on minimizing of energy 
functional to express the quantity of interest as the solution to an 
optimization problem. To this class of methods for retina detection level set 
approaches have proven to be particularly suitable by encoding each retina 
layer as the zero level sets of a certain functional. Yazdanpanah et. al. 
\cite{Yazdanpanah:2011aa} introduces a level set method for minimizing an 
active contour functional supported by a multiphase Chan Vese model 
\cite{Chan:2001} as circular shape prior, to avoid limitations of hard 
constraints as opposed to graphical model proposed by \cite{Garvin:2008aa}. 
Duan et. al. \cite{Duan:2015aa} suggests the approach to model layer boundaries 
with a mixture of Mumford Shah and Vese and Osher functionals by first 
preprocessing the data in the Fourier domain. A capable level set approach for 
joint segmentation of pathological retina tissues was reported in 
\cite{Novosel:2017aa}.
However, due to the involved hierarchical optimization, their method is computationally expensive. 
One common downside of the above algorithms are their inherent limitations to only 
include 
local notions of layer ordering, making their 
extension to cases with pathologically caused retina degeneracy a difficult 
task.

\subsubsection{Machine Learning}

Much recent work has focused on the use of deep learning to address the task of cell layer segmentation in a purely data driven way. 
The U-net architecture \cite{ronneberger2015unet} has proven influential in this domain because of its good predictive performance in settings with limited availability of training data. Multiple modifications of U-net have been proposed to specifically increase its performance in OCT applications \cite{roy2017relaynet,liu2018semi}. The common methods largely rely on convolutional neural networks to predict layer segmentations for individual B-scans which are subsequently combined to full volumes.
These methods have also been used as part of a two-stage pipeline where additional prior knowledge such as local regularity and global order of cell layers along a spatial axis is incorporated through graph-based methods \cite{fang2017automatic} or a second machine learning component \cite{he19order}. However, because global context is already used in feature extraction, the risk of overfitting remains and unseen pathologies may result in unpredictable behavior.

\subsection{Contribution, Organization}

We propose a geometric assignment approach to retinal layer segmentation. By leveraging a continuous characterization of layer ordering, our method is able to simultaneously perform local regularization and incorporate the global topological ordering constraint in a \textit{single smooth} labeling process. 
The segmentation is computed from a distance matrix containing pairwise distances between data for each voxel and prototypical data for each layer in some feature space. This highlights the ability to extract features from raw OCT data in a variety of different ways and to use the proposed segmentation as a plug-in replacement for other graph-based methods.

As a result of the proposed method, it becomes possible to compute high-quality cell layer segmentations of OCT volumes by using only local features for each voxel. This is in contrast to competing deep learning approaches which commonly use information from an entire B-scan as input. In addition, the exclusive use of local features combats bias introduced through limited data availability in training and makes incorporation of three-dimensional information easily possible without limiting runtime scalability. 
To demonstrate this, we implement two feature extraction approaches. The first is based on identifying each datum with a covariance descriptor and finding prototypical descriptors as cluster centers. For each voxel, Riemannian distances to the prototypical descriptors are used as input for subsequent segmentation.
The second is based on training a convolutional neural network to classify small voxel patches of raw OCT data. Predicted class scores for each voxel are subsequently used as input for the proposed segmentation method.

The final pipeline enables robust cell layer segmentation for raw OCT volumes 
at scale, labeling an entire OCT volume in the time frame between 30 seconds 
and several minutes on a single GPU and in general leads to increased  
performance in the case of more informative features. This is without using any 
prior knowledge other than local regularity and order of cell layers. In 
particular, no global shape prior is used thus making our proposed approach suited for 
retina detection in OCT volumes with observable pathological patterns.

Our paper considerably elaborates the conference version \cite{Sitenko:2020aa} and is organized as follows. The assignment flow approach is summarized 
in Section \ref{sec:Assignment-Flow} and extended in Section 
\ref{sec:Ordered-Segmentation} in order to take into account the order of 
layers as a global constraint. In Section \ref{sec:OCT-Segmentation}, we 
consider the Riemannian manifold $\mc{P}_{d}$ of positive definite matrices as 
a suitable feature space for local OCT data descriptors. Various Riemannian 
metrics are discussed with regard to computational efficiency of clustering.
The resulting features are subsequently compared to local features extracted by 
a convolutional network in Section \ref{sec:Experimental-Results}. Performance 
measures for OCT segmentation will be reported for our novel approach and for two 
other state-of-the-art methods with available standalone software, that were evaluated in detail 
 as summarized in Section 
\ref{sec:Experimental-Results}. In Section \ref{sec:discussion}, we shortly 
discuss the access to appropriate ground truth data and the impact of feature locality underlying our 
approach.

\section{Assignment Flow}\label{sec:Assignment-Flow}

We summarize the assignment flow approach introduced by \cite{Astrom:2017ac} and refer to the recent survey \cite{Schnorr:2019aa} for more background and a review of recent related work.

\subsection{Assignment Manifold}\label{sec:assignmentManifold}
Let $(\mc{F},d_{\mc{F}})$ be a metric space and
\begin{subequations}\label{eq:def-mc-F-Fast}
\begin{align}\label{eq:def-mcF-n}
\mc{F}_{n} 
&= \{f_{i} \in \mc{F} \colon i \in \mc{I}\},\qquad |\mc{I}|=n
\intertext{
given data. Assume that a predefined set of prototypes
} \label{eq:def-mcF-ast}
\mc{F}_{\ast} 
&= \{f^{\ast}_{j} \in \mc{F} \colon j \in \mc{J}\},\qquad |\mc{J}|=c
\end{align}
\end{subequations}
is given. \textit{Data labeling} denotes the assignments
\begin{equation}\label{eq:fj-fi}
j \to i,\qquad f_{j}^{\ast} \to f_{i}
\end{equation}
of a single prototype $f_{j}^{\ast} \in \mc{F}_{\ast}$ to each data point $f_{i} \in \mc{F}_{n}$.
The set $\mc{I}$ is assumed to form the vertex set of an  undirected graph $\mc{G}=(\mc{I},\mc{E})$ which defines a relation $\mc{E} \subset \mc{I} \times \mc{I}$ and neighborhoods
\begin{equation}\label{eq:def-Ni}
\mc{N}_{i} = \{k \in \mc{I} \colon ik \in \mc{E}\} \cup \{i\},
\end{equation}
where $ik$ is a shorthand for the unordered pair (edge) $(i,k)=(k,i)$. We require these neighborhoods to satisfy the symmetry relation 
\begin{equation}\label{eq:Ni-Nk}
k \in \mc{N}_{i} \quad\gdw\quad i \in \mc{N}_{k},\qquad\forall i,k \in \mc{I}.
\end{equation}

The assignments (labeling) \eqref{eq:fj-fi} are represented by matrices in the set
\begin{equation}\label{eq:def-W-ast}
\mc{W}_{\ast} = \{W \in \{0,1\}^{n \times c} \colon W\eins_{c}=\eins_{n}\}
\end{equation}
with unit vectors $W_{i},\,i \in \mc{I}$, called \textit{assignment vectors}, as row vectors. These assignment vectors are computed by numerically integrating the assignment flow below \eqref{eq:assignment-flow} in the following  geometric setting. The integrality constraint of \eqref{eq:def-W-ast} is relaxed and vectors
\begin{equation}\label{eq:def-Wi}
W_{i} = (W_{i1},\dotsc,W_{ic})^{\T} \in \mc{S},\quad i \in \mc{I},
\end{equation}
that we still call \textit{assignment vectors},
are considered
on the elementary Riemannian manifold
\begin{equation}\label{eq:def-S}
(\mc{S},g),\qquad
\mc{S} = \{p \in \Delta_{c} \colon p > 0\}
\end{equation}
with the probability simplex $\Delta_{c}=\big\{p \in \R_{+}^{c}\colon \sum_{i=1}^{c} = \la\eins,p\ra=1\big\}$, the barycenter
\begin{equation}\label{eq:barycenter-S}
\eins_{\mc{S}} = \frac{1}{c}\eins_{c} \in \mc{S},
\qquad(\text{barycenter})
\end{equation}
tangent space
\begin{equation}\label{eq:def-T0}
T_{0}
= \{v \in \R^{c} \colon \la\eins_{c},v \ra=0\}
\end{equation}
and tangent bundle $T\mc{S} = \mc{S} \times T_{0}$,
the orthogonal projection
\begin{equation}\label{eq:def-Pi0}
\Pi_{0} \colon \R^{c} \to T_{0},\qquad
\Pi_{0} = I - \eins_{\mc{S}}\eins^{\T}
\end{equation}
and the Fisher-Rao metric
\begin{equation}\label{schnoerr-eq:FR-metric-S}
g_{p}(u,v) = \sum_{j \in \mc{J}} \frac{u^{j} {v}^{j}}{p^{j}},\quad p \in \mc{S},\quad
u,v \in T_{0}.
\end{equation}
Based on the linear map
\begin{equation}\label{eq:def-Rp}
R_{p} \colon \R^{c} \to T_{0},\qquad
R_{p} = \Diag(p)-p p^{\T},\qquad p \in \mc{S}
\end{equation}
that satisfies
\begin{equation}\label{eq:Rp-Pi0}
R_{p} = R_{p} \Pi_{0} = \Pi_{0} R_{p},
\end{equation}
exponential maps and their inverses are defined by
\begin{subequations}\label{eq:schnoerr-eq:exp-maps}
\begin{align}
\Exp &\colon \mc{S} \times T_{0} \to \mc{S}, &
(p,v) &\mapsto
\Exp_{p}(v) = \frac{p e^{\frac{v}{p}}}{\la p,e^{\frac{v}{p}}\ra},
\label{eq:Exp0} \\ \label{eq:IExp0}
\Exp_{p}^{-1} &\colon \mc{S} \to T_{0}, &
q &\mapsto  \Exp_{p}^{-1}(q) = R_{p} \log\frac{q}{p},
\\
\exp_{p} &\colon T_{0} \to \mc{S}, &
\exp_{p} &= \Exp_{p} \circ R_{p},
\label{schnoerr-eq:def-exp-p} \\ \label{schnoerr-eq:def-exp-p-inverse}
\exp_{p}^{-1} &\colon \mc{S} \to T_{0}, &
\exp_{p}^{-1}(q) &= \Pi_{0}\log\frac{q}{p}.
\end{align}
\end{subequations}
Applying the map $\exp_{p}$ to a vector in $\R^{c} = T_{0} \oplus \R\eins$ does not depend on the constant component of the argument, due to \eqref{eq:Rp-Pi0}.

\begin{remark}\label{rem:Exp-map}
The map $\Exp$ corresponds to the e-connection of information geometry \cite{Amari:2000aa}, rather than to the exponential map of the Riemannian connection. Accordingly, the affine geodesics \eqref{eq:Exp0} are not length-minimizing. But they provide a close approximation \cite[Prop.~3]{Astrom:2017ac} and are more convenient for numerical computations.
\end{remark}

The \textit{assignment manifold} is defined as
\begin{equation}\label{schnoerr-eq:def-mcW}
(\mc{W},g),\qquad \mc{W} = \mc{S} \times\dotsb\times \mc{S}.\qquad (n = |\mc{I}|\;\text{factors})
\end{equation}
We identify $\mc{W}$ with the embedding into $\R^{n\times c}$ 
\begin{equation}\label{eq:mcW-matrix-embed}
  \mc{W} = \{ W \in \R^{n\times c} \colon W\eins_c = \eins_n\text{ and } W_{ij} > 0 \text{ for all } i\in[n], j\in [c]\}.
\end{equation}
Thus, points $W \in \mc{W}$ are row-stochastic matrices $W \in \R^{n \times c}$ with row vectors $W_{i} \in \mc{S},\; i \in \mc{I}$ that represent the assignments \eqref{eq:fj-fi} for every $i \in \mc{I}$. We set
\begin{equation}\label{schnoerr-eq:TmcW}
\mc{T}_{0} := T_{0} \times\dotsb\times T_{0}
\qquad (n = |\mc{I}|\;\text{factors}).
\end{equation}
Due to \eqref{eq:mcW-matrix-embed}, the tangent space $\mc{T}_0$ can be identified with
\begin{equation}\label{eq:mcT-matrix-embed}
  \mc{T}_0 = \{ V \in \R^{n\times c} \colon V\eins_c = 0\}.
\end{equation}
Thus, $V_i \in T_{0}$ for all row vectors of $V \in \R^{n \times c}$ and $i \in \mc{I}$. All mappings defined above factorize in a natural way and apply row-wise, e.g.~$\Exp_{W} = (\Exp_{W_{1}},\dotsc,\Exp_{W_{n}})$ etc.

\subsection{Assignment Flow}

Based on \eqref{eq:def-mcF-n} and \eqref{eq:def-mcF-ast}, the distance vector field
\begin{equation}\label{eq:def-distance-vector}
D_{\mc{F};i} = \big(d_{\mc{F}}(f_{i},f_{1}^{\ast}),\dotsc,d_{\mc{F}}(f_{i},f_{c}^{\ast})\big)^{\T},\qquad i \in \mc{I}
\end{equation}
is well-defined. These vectors are collected as row vectors of the \textit{distance matrix}
\begin{equation}\label{eq:def-distance-matrix}
D_{\mc{F}} \in S_{+}^{n},
\end{equation}
where $S_{+}^{n}$ denotes the set of symmetric and entrywise nonnegative matrices.

\begin{remark}\label{rem:Feature_Choice}
In this paper, we build upon two different types of features to determine 
vectors \eqref{eq:def-distance-vector} which are serving as input before 
mapping the assembled matrix \eqref{eq:def-distance-matrix} onto the assignment 
manifold as explained below. Hereby, the first class of features access our 
model by calculating distance to prototypes \eqref{eq:def-mc-F-Fast} with 
metric introduced in section \eqref{sec:Rmeans} while the second feature class  
directly possess the form of \eqref{eq:def-distance-matrix} as argued in 
section \eqref{sec:experiments_cnn}.
\end{remark}

The \textit{likelihood map} and the \textit{likelihood vectors}, respectively, are defined as
\begin{equation}\label{schnoerr-eq:def-Li}
L_{i} \colon \mc{S} \to \mc{S},\qquad
L_{i}(W_{i})
= \exp_{W_{i}}\Big(-\frac{1}{\rho}D_{\mc{F};i}\Big)
= \frac{W_{i} e^{-\frac{1}{\rho} D_{\mc{F};i}}}{\la W_{i},e^{-\frac{1}{\rho} D_{\mc{F};i}} \ra},\qquad i \in \mc{I},
\end{equation}
where the scaling parameter $\rho > 0$ is used for normalizing the a-prior unknown scale of the components of $D_{\mc{F};i}$ that depends on the specific application at hand.

A key component of the assignment flow is the interaction of the likelihood vectors through \textit{geometric} averaging within the local neighborhoods \eqref{eq:def-Ni}. Specifically, using  weights
\begin{equation}\label{eq:weights-Omega-i}
  \omega_{ik} > 0\quad \text{for all}\; k \in \mc{N}_{i},\;i \in \mc{I}\quad\text{with}\quad \sum_{k \in \mc{N}_{i}} w_{ik}=1,
\end{equation}
the \textit{similarity map} and the \textit{similarity vectors}, respectively, are defined as
\begin{equation}\label{eq:def-Si}
S_{i} \colon \mc{W} \to \mc{S},\qquad
S_{i}(W) = \Exp_{W_{i}}\Big(\sum_{k \in \mc{N}_{i}} w_{ik} \Exp_{W_{i}}^{-1}\big(L_{k}(W_{k})\big)\Big),\qquad i \in \mc{I}.
\end{equation}
If $\Exp_{W_{i}}$ were the exponential map of the Riemannian (Levi-Civita) connection, then the argument inside the brackets of the right-hand side would just be the negative Riemannian gradient with respect to $W_{i}$ of center of mass objective function comprising the points $L_{k},\,k \in \mc{N}_{i}$, i.e.~the weighted sum of the squared Riemannian distances between $W_{i}$ and  $L_{k}$   \cite[Lemma 6.9.4]{Jost:2017aa}. In view of Remark \ref{rem:Exp-map}, this interpretation is only approximately true mathematically, but still correct informally: $S_{i}(W)$ moves $W_{i}$ towards the geometric mean of the likelihood vectors $L_{k},\,k \in \mc{N}_{i}$. Since $\Exp_{W_{i}}(0)=W_{i}$, this mean precisely is $W_{i}$ if the aforementioned gradient vanishes.

The \textit{assignment flow} is induced on the assignment manifold $\mc{W}$ by the locally coupled system of nonlinear ODEs
\begin{subequations}\label{eq:assignment-flow}
\begin{align}
\dot W &= R_{W}S(W),\qquad W(0)=\eins_{\mc{W}},
\label{eq:assignment-flow-a}
\\
\label{eq:assignment-flow-b}
\dot W_{i} &= R_{W_{i}} S_{i}(W),\qquad W_{i}(0)=\eins_{\mc{S}},\quad i \in \mc{I},
\end{align}
\end{subequations}
where $\eins_{\mc{W}} \in \mc{W}$ denotes the barycenter of the assignment manifold \eqref{schnoerr-eq:def-mcW}. The solution $W(t)\in\mc{W}$ is numerically computed by geometric integration \cite{Zeilmann:2020aa} and determines a labeling $W(T) \in \mc{W}_{\ast}$ for sufficiently large $T$ after a trivial rounding operation. Convergence and stability of the assignment flow have  been studied by \cite{Zern:2020aa}.

\section{OCT Data Representation by Covariance Desciptors}\label{sec:OCT-Segmentation}

In this section, we work out the basic geometric notation for representation of 
OCT data by means of covariance descriptors \cite{Tuzel:2006aa}.
Specifically, the metric data space $(\mc{F},d_{\mc{F}})$ underlying 
\eqref{eq:def-mc-F-Fast} will be identified with the Riemannian manifold 
$(\mc{P}_{d},d_{g})$ of positive definite matrices of dimension $d \times d$, 
with Riemannian metric $g$ and Riemannian distance $d_{g}$ as specified in 
section \ref{sec:Experimental-Results}. In particular regarding the computation 
of corresponding prototypes \eqref{eq:def-mcF-ast}, an important aspect 
concerns the trade-off between respecting the Riemannian distance $d_{g}$ of 
the matrix manifold $\mc{P}_{d}$ and approximating surrogate distance 
functions, that enable to compute more efficiently Riemannian means of 
covariance descriptors while adopting their natural geometry. We review and 
discuss various choices in Section \ref{sec:Rmeans} after reviewing few 
required concepts of Riemannian geometry in Section \ref{sec:Pd}.

\subsection{The Manifold $\mc{P}_{d}$}
\label{sec:Pd}

We collect few concepts related to data $p \in \mc{M}$ taking values on a general Riemannian manifold $(\mc{M},g)$ with Riemannian metric $g$; see, e.g., \cite{Lee:2013aa,Jost:2017aa} for background reading. Then we apply these concepts to the specific manifold $(\mc{P}_{d},g)$ and the corresponding distance $d_{g}$, keeping the symbol $g$ for the metric for simplicity. We refer to, e.g., \cite{Bhatia:2007aa,Bhatia:2013aa,Pennec:2004aa,Moakher:2006aa} for further reading and to the references in Section \ref{sec:Rmeans}.

Let $\gamma\colon [0,1]\to\mc{M}$ a smooth curve connecting two points $p = \gamma(0)$ and $q = \gamma(1)$. The \textit{Riemannian distance} between $p$ and $q$ is given by 
\begin{subequations}
\begin{align}\label{eq:R_Distance}
 d_g(p,q) &= \min_{\gamma\colon \gamma(0) = p,\gamma(1) = q} L(\gamma)
\intertext{with}
L(\gamma) &= \int_{0}^{1}\|\dot\gamma(t)\|_{\gamma(t)}\dd{t}
= \int_{0}^1 
\sqrt{g_{\gamma(t)}\big(\dot{\gamma}(t),\dot{\gamma}(t)\big)} \dd{t}.
\end{align}
\end{subequations}
Assume the minimum of the right-hand side of \eqref{eq:R_Distance} is attained at $\ol{\gamma}$. Then the \textit{exponential map} at $p$ is defined on some neighborhood $V_{p} \subseteq T_{p}\mc{M}$ of $0$ in the tangent space to $\mc{M}$ at $p$ by
\begin{equation}\label{eq:def-exp-M}
\exp_{p} \colon V_{p} \supseteq T_{p}\mc{M} \;\to\; U_{p} \subseteq\mc{M},\qquad
v\mapsto \exp_{p}(v) := \ol{\gamma}(1).
\end{equation}
This mapping is a diffeomorphism of $V_{p}$ and its inverse $\exp_{p}^{-1} 
\colon U_{p} \to V_{p}$ exists on a corresponding open neighborhood $U_{p}$. 
Let $\mc{X}(\mc{M})$ denote the set of all smooth vector fields on $\mc{M}$, 
i.e.~$X \in \mc{X}(\mc{M})$ evaluates to a tangent vector $X_{p} \in 
T_{p}\mc{M}$ smoothly depending on $p$. The set of all smooth covector fields 
(one-forms) is denoted by $\mc{X}^{\ast}(\mc{M})$, and $df(X)$ denotes the 
action of the differential $df \in \mc{X}^{\ast}(\mc{M})$ of a smooth function 
$f \colon \mc{M}\to\R$ on a vector field $X$. The \textit{Riemannian gradient} 
of $f$ is the vector field $\ggrad f \in \mc{X}(\mc{M})$ defined by
\begin{equation}\label{Riemann_Gradient_Def}
g(\ggrad f,X) = df(X) = X f,\qquad
\forall X \in \mc{X}(\mc{M}).
\end{equation}

We now focus on the following problem: Given a set of points $\{p_{i}\}_{i \in [N]}\subset \mc{M}$, compute the \textit{weighted Riemannian mean} as minimizer of the objective function
\begin{equation}\label{eq:w-R-mean}
\ol{p} = \arg\min_{q \in \mc{M}}J(q),\qquad
J(q) = \sum_{i \in [N]} \w_{i}d^{2}_{g}(q,p_{i}),\qquad
\w_{i}>0,\;\forall i,\quad \sum_{i\in [N]}\w_{i}=1.
\end{equation}
The Riemannian gradient of this objective function is given by \cite[Lemma 6.9.4]{Jost:2017aa}
\begin{equation}
\ggrad J(p) = -\sum_{i\in [N]}\w_{i} \exp_{p}^{-1}(p_{i}).
\end{equation}
Hence the Riemannian mean $\ol{p}$ is determined by the optimality condition
\begin{equation}\label{eq:optCond-Rmean-general}
\sum_{i\in [N]} \w_{i}\exp^{-1}_{\ol{p}}(p_{i}) = 0.
\end{equation}
A basic numerical method for computing $\ol{p}$ is the fixed point iteration
\begin{equation}\label{eq:Rmean-FPiteration-general}
q_{(t+1)} = \exp_{q_{(t)}}\Big(\sum_{i\in [N]} \w_{i}\exp^{-1}_{q_{(t)}}(p_{i})\Big),\qquad
t=1,2,\dotsc
\end{equation}
that may converge for a suitable initialization $q_{(0)}$ to $\ol{p}$.

We now focus on the specific manifold $(\mc{P}_{d},g)$ 
\begin{equation}\label{eq:def-mcPd}
\mc{P}_{d} = \{S\in\R^{d\times d}\colon S=S^{\T},\, S\;\text{is positive definite}\}
\end{equation}
equipped with the Riemannian metric
\begin{equation}
\label{AIRM}
g_{S}(U,V) = \tr(S^{-1} U S^{-1} V),\qquad U,V\in T_{S}\mc{P}_{d} = \{S \in \R^{d\times d}\colon S^{\T}=S\}.
\end{equation}
The Riemannian distance \eqref{eq:R_Distance} is given by
\begin{equation}\label{eq:def-g-mcPd}
d_{\mc{P}_{d}}(S,T) = \Big(\sum_{i \in [d]}\big(\log\lambda_{i}(S,T)\big)^{2}\Big)^{1/2},
\end{equation}
whereas the exponential map \eqref{eq:def-exp-M} reads
\begin{equation}\label{eq:def-exp-mcPd}
\exp_{S}(U) = S^{\frac{1}{2}}\expm(S^{-\frac{1}{2}}US^{-\frac{1}{2}})S^{\frac{1}{2}},
\end{equation}
and $\expm(\cdot)$ denotes the matrix exponential. 
Finally, given a smooth objective function $J \colon \mc{P}_{d} \to \R$, the Riemannian gradient is given by
\begin{equation}\label{eq:R-grad-Pd}
\ggrad J(S) = S\big(\partial J(S)\big)S \in T_{S}\mc{P}_{d},
\end{equation}
where the symmetric matrix $\partial J(S)$ denotes the Euclidean gradient of $J$ at $S$. Since $\mc{P}_{d}$ is a simply connected, complete and nonpositively curved Riemannian manifold \cite[Section 10]{Bridson:1999aa}, the exponential map \eqref{eq:def-exp-mcPd} is globally defined and bijective, and the Riemannian mean always exists and is uniquely defined as minimizer of the objective function \eqref{eq:w-R-mean}, after substituting the Riemannian distance \eqref{eq:def-g-mcPd}.

\subsection{Computing Prototypical Covariance Descriptors}
\label{sec:Rmeans}
In this section, we focus on the computational differential geometric framework 
required for extraction of prototypes \eqref{eq:def-mcF-ast} as Riemannian 
means from a set of covariance descriptors assembled from OCT data. Application 
details are reported in Section \ref{sec:Experimental-Results}. Particularly 
with regard to more efficient handling present volumetric data and to reduce 
the computational costs, a surrogate metrics and distances are reviewed in 
Sections \ref{sec:Log_Euclid} and \ref{sec:S-distance}. Their qualitative 
comparison is reported in Section \ref{sec:Experimental-Results}.

\subsubsection{Computing Riemannian Means}\label{sec:Rmeans-approximation} 
Given a set of covariance descriptors 
\begin{equation}\label{eq:def-mcSN}
\mc{S}_{N} = \{(S_1,\w_{1}),\dots , (S_{N},\w_{N})\} \subset \mathcal{P}_d
\end{equation}
together with positive weights $\w_{i}$, we next focus on the solution of the  
problem \eqref{eq:w-R-mean} for specific geometry \eqref{eq:def-mcPd},
\begin{equation}\label{eq:def-Rmean}
\ol{S} = \arg\min_{S \in \mc{P}_{d}} J(S;\mc{S}_{N}),\qquad
J(S;\mc{S}_{N}) = \sum_{i \in [N]} \w_{i}d_{\mc{P}_{d}}^{2}(S,S_{i}),
\end{equation}
with the distance $d_{\mc{P}_{d}}$ given by \eqref{eq:def-g-mcPd}. From \eqref{eq:def-exp-mcPd}, we deduce
\begin{equation}
U = \exp_{S}^{-1} \circ \exp_{S}(U)
= S^{\frac{1}{2}} \logm\big(S^{-\frac{1}{2}}\exp_{S}(U)S^{-\frac{1}{2}}\big) S^{\frac{1}{2}}
\end{equation}
with the matrix logarithm $\logm=\expm^{-1}$ 
\cite[Section 11]{Higham:2008aa}. As a result, optimality condition \eqref{eq:optCond-Rmean-general} reads
\begin{equation}\label{eq:OC-Pd-mean}
\sum_{i \in [N]} \w_{i} \ol{S}^{\frac{1}{2}} \logm\big(\ol{S}^{-\frac{1}{2}}S_{i} \ol{S}^{-\frac{1}{2}}\big) \ol{S}^{\frac{1}{2}} = 0.
\end{equation}
Applying the corresponding basic fixed iteration 
\eqref{eq:Rmean-FPiteration-general} has two drawbacks, however 
\cite{Congedo:2015aa}: Convergence is not theoretically guaranteed and if the 
iteration converges, than at a linear rate only. Since each iterative step 
requires nontrivial numerical matrix decomposition that has to be applied 
multiple times to every voxel (vertex) of a 3D gridgraph, this results in an 
overall quite expensive approach, in particular when larger data sets are 
involved as is the case for highly resolved 3D OCT volumetric scans.

The following variant proposed by \cite{Bini:2013aa} is guaranteed to converge 
at a \textit{quadratic} rate assuming the matrices $\{S_1,\dots,S_N \}$ to 
pairwise commute. Using the 
parametrization 
\begin{equation}
S = L L^{\T}
\end{equation}
corresponding to the Cholesky decomposition replacing 
the map of fixed point iteration \eqref{eq:Rmean-FPiteration-general} with its 
linearization leads to the following fixed point iteration
\begin{equation}
F_{\tau}(L; \mc{S}_{N}) = L L^{\T}- \tau \sum_{i\in [N]} \w_{i} L^{\T}  \logm(L^{-\T}S_i^{-1}L^{-1})L ,\qquad 
\tau > 0, 
\end{equation}
with damping parameter $\tau$. 
Comparing to \eqref{eq:OC-Pd-mean} shows that the basic idea is to compute the Riemannian mean $\ol{S}$ as fixed point of the iteration
\begin{equation}\label{eq:Pd-mean-adaptive}
\ol{S} = \lim_{t \to \infty} S_{(t)},\qquad
S_{(t+1)} =  F(S_{(t)};\mc{S}_{N}).
\end{equation}
Algorithm \ref{Algorithm Riemannian Mean} provides a refined variant of this iteration including adaptive stepsize selection. See \cite{Congedo:2015aa} for alternative algorithms that determine the Riemannian mean.

\begin{algorithm}
	\textbf{Initialization} \\
	$\epsilon$ (termination threshold) \\
	$t = 0, \quad S_{(0)} =  L L^{\T}$,  
	with $S_{(0)}$ solving \eqref{eq:LE-mean}. \\
	$c_0 = \frac{\lambda_{\max}(S_{(0)})}{\lambda_{\min}(S_{(0)})
	)}$,
	$ \{\alpha_0, \beta_0 \} = 
	\big[\frac{\log(c_0)}{c_0-1},c_0\frac{\log(c_0)}{c_0-1}\big]$ (condition number 
		and step size selection parameters)
	\\
	$\tau_0 = \frac{2}{\alpha_0+\beta_0}$ \\
	$S_{(1)} = F_{\tau}(L;\mc{S}_{N})$   (iterative step) \\
	$	\epsilon_{1} = \big\|\sum_{i\in [N]} \omega_i 
	\logm(S^{\frac{1}{2}}_{(1)} S^{-1}_i  
	S^{\frac{1}{2}}_{(1)}\big\|_{F}, \quad t = 1$\\
	\While{$\epsilon_{t} > \epsilon$}{
		$S_{(t)} =  L L^{\T}$\\
		$c_{t} = \frac{\lambda_{\max}(S_{(t)})}{\lambda_{\min}(S_{(t)})}$\\
		\If{$c_{t} = 1$}{stop}$
		\{\alpha_{t}, \beta_{t} \} = \{\sum_{k = 
			0}^{t}\frac{\log(c_k)}{c_k-1},c_k\frac{\log(c_k)}{c_k-1}\}$ 
		\\
		$ \tau_{t} = \frac{2}{\alpha_{t}+\beta_{t}}$ \\
		$S_{(t+1)} = F_{\tau_{t}}(L;\mc{S}_{N})$ \\
		$	\epsilon_{t+1} := \big\| \sum_{i\in [N]} \omega_i 
		\logm(S^{\frac{1}{2}}_{(t+1)} S^{-1}_i  
		S^{\frac{1}{2}}_{(t+1)})\big\|_{F} 
		, \quad t \leftarrow t+1$\\
	} 
	\caption{Fixed Point Iteration for Computing the Riemannian Matrix Mean.}
	\label{Algorithm Riemannian Mean}
\end{algorithm}

\subsubsection{Log-Euclidean Distance and Means}\label{sec:Log_Euclid}
A computationally cheap approach was proposed by \cite{Arsigny:2007aa} (among several other ones). Based on the operations
\begin{subequations}
\label{Log_Euclid}
\begin{align}
S_{1} \odot S_{2} 
&= \expm\big(\logm(S_{1}+\logm(S_{2})\big)),
\\
\lambda \cdot S &= \expm\big(\lambda\logm(S)\big),
\end{align}
\end{subequations}
the set $(\mc{P}_{s},\odot,\cdot)$ becomes isomorphic to the vector space where $\odot$ plays the role of addition. Consequently, the mean of the data $\mc{S}_{N}$ given by \eqref{eq:def-mcSN} is defined analogous to the arithmetic mean by
\begin{equation}\label{eq:LE-mean}
\ol{S} = \expm\Big(\sum_{i\in [N]}\w_{i}\logm(S_{i})\Big).
\end{equation}
While computing the mean is considerably cheaper than integrating the flow 
\eqref{eq:R-grad-Pd} using approximation Algorithm  \ref{Algorithm Riemannian 
Mean}, the critical drawback of relying on \eqref{eq:LE-mean} is not taking 
into account the (curved structure) of the manifold $\mc{P}_{d}$. Therefore, in 
the next section, we additionally consider another approximation of the 
Riemannian mean that better respects the underlying geometry but can still be 
evaluated more efficiently than the Riemannian mean of Section \ref{sec:Rmeans-approximation}.

\subsubsection{$S$-Divergence and Means}\label{sec:S-distance}
A general approach to the approximation of the objective function \eqref{eq:w-R-mean} is to replace the squared Riemannian $d_{g}^{2}(p,q)$ distance by a divergence function 
\begin{equation}\label{eq:D-approximates-dg2}
D(p,q) \approx \frac{1}{2} d_{g}^{2}(p,q)
\end{equation}
that satisfies 
\begin{subequations}
\begin{align}
D(p,q) &\geq 0
\qquad\text{and}\qquad
D(p,q)=0 \;\gdw\;p=q,
\\ \label{eq:D-Hessian}
\partial_{1}^{2} D(p,q) &\succ 0,\quad \forall p \in \dom D(\cdot,q).
\end{align}
\end{subequations}
We refer to, e.g., \cite{Censor:1997aa,Bauschke:1997aa} for a complete definition. Property \eqref{eq:D-Hessian} says that, for any feasible $p$, the Hessian with respect to the first argument is positive definite. In fact, suitable divergence functions $D$ recover in this way locally the metric tensor of the underlying manifold $\mc{M}$, in order to qualify as a surrogate for the squared Riemannian distance \eqref{eq:D-approximates-dg2}.

For the present case $\mc{M}=\mc{P}_{d}$ of interest, 
Sra \cite{Sra:2015aa} proposed the divergence function, called \textit{Stein divergence}
\begin{equation}
\label{Stein_div}
D_{s}(S_{1},S_{2}) = \log\det\Big(\frac{S_{1}+S_{2}}{2}\Big) - \frac{1}{2}\log\det (S_{1} S_{2}),\qquad
S, S_{1},S_{2} \in \mc{P}_{d}.
\end{equation}
Regarding the task of evaluating the Riemannian distance \eqref{eq:def-g-mcPd}, 
which is required for the second term of problem \eqref{eq:def-Rmean} for 
subsequential extraction of prototypes \eqref{eq:def-mcF-ast} in Section 
\eqref{sec:Experimental-Results},  while avoiding to solve the numerically 
involved numerical generalized eigenvalue problem, we replace 
\eqref{eq:def-Rmean} by  
\begin{equation}
\ol{S} = \arg\min_{S\in\mc{P}_{d}} J_{s}(S;\mc{S}_{N}),\qquad
J_{s}(S;\mc{S}_{N})
= \sum_{i\in [N]}\w_{i} D_{s}(S,S_{i}).
\end{equation}
The resulting Riemannian gradient flow reads
\begin{subequations}
\begin{align}
\dot S 
&= -\ggrad J_{s}(S;\mc{S}_{N})
\overset{\eqref{eq:R-grad-Pd}}{=} 
-S \partial J(S;\mc{S}_{N}) S
\\
&= -\frac{1}{2}\big(S R(S;\mc{S}_{N}) S - S\big),\qquad
R(S;\mc{S}_{N})
= \sum_{i\in [N]}\w_{i}\Big(\frac{S+S_{i}}{2}\Big)^{-1}.
\end{align}
\end{subequations}
Discretizing the flow using the geometric explicit Euler scheme with step size $h$, 
\begin{subequations}
\begin{align}
S_{(t+1)} 
&= \exp_{S_{(t)}}\big(-h \ggrad J_{s}(S_{(t)};\mc{S}_{N})\big)
\\
&\overset{\eqref{eq:def-exp-mcPd}}{=}
S_{(t)}^{\frac{1}{2}} \expm\Big(\frac{h}{2}\big(I-S_{(t)}^{\frac{1}{2}}R(S_{(t)};\mc{S}_{N})S_{(t)}^{\frac{1}{2}}\big)\Big) S_{(t)}^{\frac{1}{2}}
\end{align}
\end{subequations}
and using the log-Euclidean mean \eqref{eq:LE-mean} as initial point $S_{(0)}$, defines Algorithm \ref{Algorithm Riemannian Mean S} as listed below.

\begin{algorithm}
	\textbf{Initialization} \\
	$\epsilon$ (termination threshold) \\
	$t = 0, \quad S_{(0)}$ solves \eqref{eq:LE-mean} \\
	$ \epsilon_{0} > \epsilon $ (any value $\epsilon_{0}$) 
	\\
	\While{$\epsilon_{t} > \epsilon$}{
		$L L^{\T} = S_{(t)}$ \\
		$L_{i} L_{i}^{\T} = \frac{S_{(t)}+S_{i}}{2}$ for $i \in [N]$ \\
		$U = I - S_{(t)}^{\frac{1}{2}}\big(\sum_{i\in [N]}\w_{i}(L_{i}L_{i}^{\T})^{-1}\big) S_{(t)}^{\frac{1}{2}}$
		\\
		$S_{(t+1)} = S_{(t)}^{\frac{1}{2}}\expm(\frac{h}{2} U) S_{(t)}^{\frac{1}{2}}$ \\
		$	\epsilon_{t+1} := 
		\|U\|_{F}
		, \quad t \leftarrow t+1$\\
	} 
	\caption{Computing the Geometric Matrix Mean Based on the $S$-divergence.}
	\label{Algorithm Riemannian Mean S}
\end{algorithm}

\clearpage


\section{Ordered Layer Segmentation}
\label{sec:Ordered-Segmentation}

In this section, we work out an extension of the assignment flow (Section \ref{sec:Assignment-Flow}) 
which is able to respect the order of cell layers as a global constraint while remaining in the same smooth geometric setting. In particular, existing schemes for numerical integration still apply to the novel variant.

\subsection{Ordering Constraint}

With regard to segmenting OCT data volumes, the order 
of cell layers is crucial prior knowledge. In this paper we focus on 
segmentation of the following 11 retina layers: Retinal Nerve 
Fiber Layer (RNFL), Ganglion Cell Layer (GCL), Inner Nuclear Layer (INL), Outer 
Plexiform Layer (OPL), Outer Nuclear Layer (ONL), two photoreceptor layers 
(PR1, PR2) separated by the External Limiting Membrane (ELM)
and the Retinal Pigment Epithelium (RPE) together with the Choroid Section (CS).
Figure \ref{OCT_Acquisition} also contains positions for the Internal Limiting 
Membrane (ILM) and Brunch Membrane (BM).

\begin{figure}[ht]
	\begin{adjustwidth*}{1cm}{1.5cm} 
		\scalebox{.85}{\begin{tikzpicture}
\centering

\foreach \i/\Layer/\Name in 
{0/Layer_14/CS,1/Layer_13/CC,2/Layer_12/BM,3/Layer_11/RPE,4/Layer_10/PR2,5/Layer_9/PR1,
	6/Layer_8/ELM,7/Layer_7/ONL,8/Layer_6/OPL,
		9/Layer_5/INL,10/Layer_4/IPL,11/Layer_3/GCL,12/Layer_2/RNFL,13/Layer_1/ILM}{
	\coordinate (O) at (-15+\xb,-3.5+2*\i*0.25+2*\yy-0.2,0+\za);
	\coordinate (A) at (-15+\xb,-3.5+2*0.25\Width+2*\i*0.25+2*\yy-0.2,0+\za);
	\coordinate (B) at 
	(-15+\xb,-3.5+2*0.25\Width+2*\i*0.25+2*\yy-0.2,2*0.25\Height+\za);
	\coordinate (C) at (-15+\xb,-3.5+2*\i*0.25+2*\yy-0.2,2*0.25\Height+\za);
	\coordinate (D) at (-15+4*0.25\Depth+\xb,-3.5+2*\i*0.25+2*\yy-0.2,0+\za);
	\coordinate (E) at 
	(-15+4*0.25\Depth+\xb,-3.5+2*0.25\Width+2*\i*0.25+2*\yy-0.2,0+\za);
	\coordinate (F) at 
	(-15+4*0.25\Depth+\xb,-3.5+2*0.25\Width+2*\i*0.25+2*\yy-0.2,2*0.25\Height+\za);
	\coordinate (G) at 
	(-15+4*0.25\Depth+\xb,-3.5+2*\i*0.25+2*\yy-0.2,2*0.25\Height+\za);
	\draw[\Layer!80!black,fill=\Layer!80] (O) -- (C) -- (G) -- (D) -- 
	cycle;%
	\draw[\Layer!80!black,fill=\Layer!80] (O) -- (A) -- (E) -- (D) -- 
	cycle;%
	\draw[\Layer!80!black,fill=\Layer!80] (O) -- (A) -- (B) -- (C) -- 
	cycle;%
	\draw[\Layer!40!black,fill=\Layer!80,opacity=1] (D) -- (E) -- (F) -- 
	(G) -- cycle;
	\draw[\Layer!40!black,fill=\Layer!80,opacity=1] (C) -- (B) -- (F) -- 
	(G) -- 
	cycle;
	\draw[\Layer!40!black,fill=\Layer!80,opacity=1] (A) -- (B) -- (F) -- 
	(E) -- 
	cycle;
	\node[text width=8em,color = \Layer](\Name) at (O) {\Name};}

\draw[->,-latex, thin](-7.5,0) to[out=30,in=150] (-2,0);

\node[rounded corners=6pt, thin,align=center, fill=gray!20,
	text width=8em](C) at (-5.5,4) {Retina Layers};
	\draw[->] (C) |- (-6.5,2,2);

\coordinate (O) at (-5.7+\xa,-0.5+\ya,-1+\za);
\coordinate (A) at (-5.7+\xa,-0.5+\Width+\ya,-1+\za);
\coordinate (B) at (-5.7+\xa,-0.5+\Width+\ya,-1+0.25\Height+\za);
\coordinate (C) at (-5.7+\xa,-0.5+\ya,-1+0.25\Height+\za);
\coordinate (D) at (-2+0.25\Depth+\xa,-0.5+\ya,-1+\za);
\coordinate (E) at (-2+0.25\Depth+\xa,-0.5+\Width+\ya,-1+\za);
\coordinate (F) at (-2+0.25\Depth+\xa,-0.5+\Width+\ya,-1+0.25\Height+\za);
\coordinate (G) at (-2+0.25\Depth+\xa,-0.5+\ya,-1+0.25\Height+\za);
\draw[black,fill=blue!20,opacity=1] (O) -- (C) -- (G) -- (D) -- cycle;%
\draw[black,fill=blue!20,opacity=1] (O) -- (A) -- (E) -- (D) -- cycle;
\draw[black,fill=blue!20,opacity=1] (O) -- (A) -- (B) -- (C) -- cycle;
\draw[black,fill=blue!20,opacity=1] (D) -- (E) -- (F) -- (G) -- cycle;%
\draw[black,fill=blue!20,opacity=1] (C) -- (B) -- (F) -- (G) -- cycle;%
\draw[black,fill=blue!20,opacity=1] (A) -- (B) -- (F) -- (E) -- cycle;%

\coordinate (O) at (-5.15+\xa,0+\ya,2+\za);
\coordinate (A) at (-5.15+\xa,\Width+\ya,2+\za);
\coordinate (B) at (-5.15+\xa,\Width+\ya,2+0.25\Height+\za);
\coordinate (C) at (-5.15+\xa,0+\ya,2+0.25\Height+\za);
\coordinate (D) at (-3.5+0.25\Depth+\xa,\ya,2+\za);
\coordinate (E) at (-3.5+0.25\Depth+\xa,\Width+\ya,2+\za);
\coordinate (F) at (-3.5+0.25\Depth+\xa,\Width+\ya,2+0.25\Height+\za);
\coordinate (G) at (-3.5+0.25\Depth+\xa,0+\ya,2+0.25\Height+\za);
\draw[black,fill=blue!95,opacity=1] (O) -- (C) -- (G) -- (D) -- cycle;%
\draw[black,fill=blue!95,opacity=1] (O) -- (A) -- (E) -- (D) -- cycle;
\draw[black,fill=blue!95,opacity=1] (O) -- (A) -- (B) -- (C) -- cycle;
\draw[black,fill=blue!95,opacity=1] (D) -- (E) -- (F) -- (G) -- cycle;%
\draw[black,fill=blue!95,opacity=1] (C) -- (B) -- (F) -- (G) -- cycle;%
\draw[black,fill=blue!95,opacity=1] (A) -- (B) -- (F) -- (E) -- cycle;%

\coordinate (O) at (-3.5+\xa,0+\ya,2+\za);
\coordinate (A) at (-3.5+\xa,\Width+\ya,2+\za);
\coordinate (B) at (-3.5+\xa,\Width+\ya,2+0.25\Height+\za);
\coordinate (C) at (-3.5+\xa,0+\ya,2+0.25\Height+\za);
\coordinate (D) at (-3.5+0.25\Depth+\xa,\ya,2+\za);
\coordinate (E) at (-3.5+0.25\Depth+\xa,\Width+\ya,2+\za);
\coordinate (F) at (-3.5+0.25\Depth+\xa,\Width+\ya,2+0.25\Height+\za);
\coordinate (G) at (-3.5+0.25\Depth+\xa,0+\ya,2+0.25\Height+\za);
\draw[black,fill=yellow!85] (O) -- (C) -- (G) -- (D) -- cycle;
\draw[black,fill=yellow!85] (O) -- (A) -- (E) -- (D) -- cycle;
\draw[black,fill=yellow!85] (O) -- (A) -- (B) -- (C) -- cycle;
\draw[black,fill=yellow!85,opacity=0.8] (D) -- (E) -- (F) -- (G) -- cycle;%
\draw[black,fill=yellow!85,opacity=0.6] (C) -- (B) -- (F) -- (G) -- cycle;%
\draw[black,fill=yellow!85,opacity=0.8] (A) -- (B) -- (F) -- (E) -- cycle;%


\coordinate (O) at (-3.20+\xa,0+\ya,2+\za);
\coordinate (A) at (-3.20+\xa,\Width+\ya,2+\za);
\coordinate (B) at (-3.20++\xa,\Width+\ya,2+0.25\Height+\za);
\coordinate (C) at (-3.20++\xa,0+\ya,2+0.25\Height+\za);
\coordinate (D) at (-1.5+0.25\Depth+\xa,\ya,2+\za);
\coordinate (E) at (-1.5+0.25\Depth+\xa,\Width+\ya,2+\za);
\coordinate (F) at (-1.5+0.25\Depth+\xa,\Width+\ya,2+0.25\Height+\za);
\coordinate (G) at (-1.5+0.25\Depth+\xa,0+\ya,2+0.25\Height+\za);
\draw[black,fill=blue!95,opacity=1] (O) -- (C) -- (G) -- (D) -- cycle;%
\draw[black,fill=blue!95,opacity=1] (O) -- (A) -- (E) -- (D) -- cycle;
\draw[black,fill=blue!95,opacity=1] (O) -- (A) -- (B) -- (C) -- cycle;
\draw[black,fill=blue!95,opacity=1] (D) -- (E) -- (F) -- (G) -- cycle;%
\draw[black,fill=blue!95,opacity=1] (C) -- (B) -- (F) -- (G) -- cycle;%
\draw[black,fill=blue!95,opacity=1] (A) -- (B) -- (F) -- (E) -- cycle;%

\coordinate (O) at (-7.15+\xa,-2+\ya,-2+\za);
\coordinate (A) at (-7.15+\xa,-2+\Width+\ya,-2+\za);
\coordinate (B) at (-7.15+\xa,-2+\Width+\ya,-2+0.25\Height+\za);
\coordinate (C) at (-7.15+\xa,-2+\ya,-2+0.25\Height+\za);
\coordinate (D) at (-3.45+0.25\Depth+\xa,-2+\ya,-2+\za);
\coordinate (E) at (-3.45+0.25\Depth+\xa,-2+\Width+\ya,-2+\za);
\coordinate (F) at (-3.45+0.25\Depth+\xa,-2+\Width+\ya,-2+0.25\Height+\za);
\coordinate (G) at (-3.45+0.25\Depth+\xa,-2+\ya,-2+0.25\Height+\za);
\draw[black,fill=blue!40,opacity=0.4] (O) -- (C) -- (G) -- (D) -- cycle;%
\draw[black,fill=blue!40,opacity=0.4] (O) -- (A) -- (E) -- (D) -- cycle;
\draw[black,fill=blue!40,opacity=0.4] (O) -- (A) -- (B) -- (C) -- cycle;
\draw[black,fill=blue!40,opacity=0.4] (D) -- (E) -- (F) -- (G) -- cycle;%
\draw[black,fill=blue!40,opacity=0.4] (C) -- (B) -- (F) -- (G) -- cycle;%
\draw[black,fill=blue!40,opacity=0.4] (A) -- (B) -- (F) -- (E) -- cycle;%



\coordinate (O) at (0,0,0);
\coordinate (A) at (0,\Width,0);
\coordinate (B) at (0,\Width,\Height);
\coordinate (C) at (0,0,\Height);
\coordinate (D) at (\Depth,0,0);
\coordinate (E) at (\Depth,\Width,0);
\coordinate (F) at (\Depth,\Width,\Height);
\coordinate (G) at (\Depth,0,\Height);
\draw[red!60!black,fill=red!5,opacity=0.4] (O) -- (C) -- (G) -- (D) -- cycle;%
\draw[red!60!black,fill=red!5,opacity=0.4] (O) -- (A) -- (E) -- (D) -- cycle;%
\draw[red!60!black,fill=red!5,opacity=0.4] (O) -- (A) -- (B) -- (C) -- cycle;%
\draw[red!60!black,fill=red!5,opacity=0.4] (D) -- (E) -- (F) -- (G) -- cycle;%
\draw[red!60!black,fill=red!5,opacity=0.4] (C) -- (B) -- (F) -- (G) -- cycle;%
\draw[red!60!black,fill=red!5,opacity=0.4] (A) -- (B) -- (F) -- (E) -- cycle;%

\draw (2.8,3.5,2) node {\scriptsize{\color{black}$\circled{2}$}};
\draw (2.8,3,2) node[rotate = 180] 
{{\color{black}$\underbrace{\hspace{4cm}}$}};
\draw (3.9,-1.2,1) node[rotate = 225] {\scriptsize{\color{black}$\circled{3}$}};
\draw (5,1.8,2) node[rotate = 90] 
{{\color{black}$\underbrace{\hspace{2cm}}$}};
\draw (5,1.4,1) node[rotate = 270] {\scriptsize{\color{black}$\circled{1}$}};
\draw (3.95,-0.6,2) node[rotate = 45] 
{{\color{black}$\underbrace{\hspace{3.3cm}}$}};
\node[rounded corners=6pt, thin,align=center, fill=gray!20,
text width=8em](A_S) at (-1.85,4) {A-Scan};
\draw[->] (A_S) -| (1,1.3,1.5);
\node[rounded corners=6pt, thin,align=center, fill=gray!20,
text width=8em](B) at (3,4) {B-Scan};
\draw[->] (B) -| (1.5,1.5,2);

\end{tikzpicture}}
	\end{adjustwidth*}
	\caption{OCT volume acquisition: 
		\raisebox{.6pt}{\textcircled{\raisebox{-.9pt} 
				{1}}} is the A-scan axis (single A-scan is marked yellow). 
		Multiple A-scans taken 
		in rapid succession along axis 
		\raisebox{.6pt}{\textcircled{\raisebox{-.9pt} 
				{2}}} form a two-dimensional 
		B-scan (single B-scan is marked blue). The complete OCT volume is 
		formed by repeating this procedure along
		axis \raisebox{.6pt}{\textcircled{\raisebox{-.9pt} 
				{3}}}. A list of retina layers that we expect to find in every 
		A-scan is shown on the left.}
	\label{OCT_Acquisition}
\end{figure}

 To incorporate this knowledge into the 
geometric setting of Section \ref{sec:Assignment-Flow}, we require a smooth 
notion of ordering which allows to compare two probability distributions.
In the following, we assume prototypes $f^{\ast}_{j} \in \mc{F}$, $j \in [n]$ 
in some feature space $\mc{F}$ to be indexed such that ascending label indices 
reflect the physiological order of cell layers.

\begin{definition}[Ordered Assignment Vectors]\label{def:ordered_assignments}
A pair of voxel assignments $(w_i, w_j)\in \mc{S}^2$, $i < j$ within a single A-scan is called 
\emph{ordered}, if $w_j - w_i \in K = \{ By\colon y\in \R^c_+ \}$ with the 
matrix
\begin{equation}
	B = \left(\begin{array}{ccccc}
		-1 &    &        &    & \\
		1  & -1 &        &    & \\
		   & 1  & \ddots &    & \\
		   &    & \ddots & -1 & \\
		   &    &        & 1  & -1
		\end{array}
	\right)	\in \R^{c\times c}\ .
\end{equation} 
\end{definition}

This new continuous ordering of probability distributions is consistent with discrete ordering of layer indices in the following way.

\begin{lemma}
Let $w_i = e_{l_1}$, $w_j = e_{l_2}$, $l_1, l_2\in [c]$ denote two integral 
voxel assignments. Then $w_j-w_i\in K$ if and only if $l_1 \leq l_2$.
\end{lemma}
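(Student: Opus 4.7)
The plan is to verify the two implications separately by direct computation with the bidiagonal structure of $B$.

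First I would handle the forward direction $l_1 \le l_2$ by exhibiting an explicit non-negative preimage under $B$. Noting that $w_j - w_i = e_{l_2} - e_{l_1}$, and that row $k$ of $B$ acts on $y$ as $(By)_1 = -y_1$ and $(By)_k = y_{k-1}-y_k$ for $k\ge 2$, the natural candidate is the ``indicator'' vector
\begin{equation*}
y_k = \begin{cases} 1 & \text{if } l_1 \le k \le l_2-1, \\ 0 & \text{otherwise.} \end{cases}
\end{equation*}
A short case check on the index $k$ (splitting into $k<l_1$, $k=l_1$, $l_1<k<l_2$, $k=l_2$, $k>l_2$, plus the boundary case $l_1=1$ for the first row) shows that the components of $By$ are $-1$ at position $l_1$, $+1$ at position $l_2$, and $0$ elsewhere, which is precisely $e_{l_2}-e_{l_1}$. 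The degenerate case $l_1=l_2$ is covered by $y=0$. Hence $w_j-w_i\in K$.

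For the converse, I would argue by contradiction: assume $l_1 > l_2$ but $w_j - w_i = By$ for some $y \in \R^c_+$. The key observation is the telescoping identity
\begin{equation*}
\sum_{k=1}^m (By)_k \;=\; -y_1 + \sum_{k=2}^m (y_{k-1}-y_k) \;=\; -y_m,\qquad m \in [c].
\end{equation*}
Applying this with $m=l_2$ and using that $By = e_{l_2} - e_{l_1}$ has a single $+1$ at position $l_2$ and a $-1$ at position $l_1 > l_2$ (which lies outside the summation range), the left-hand side evaluates to $+1$, forcing $y_{l_2}=-1$. This contradicts $y\ge 0$, so no such $y$ exists and $w_j-w_i\notin K$.

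Neither step presents a real obstacle: the construction in the first direction is forced by the bidiagonal form of $B$, and the partial-sum identity in the second direction makes the ordering constraint visible as a sign condition. The only mild care required is the boundary handling of the first row ($k=1$), where $B$ is not tridiagonal but has the entry $-y_1$, which is why the telescoping identity in the converse direction cleanly isolates $-y_m$ rather than a difference.
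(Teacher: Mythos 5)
Your proof is correct and is essentially the paper's argument in a different guise: the telescoping identity $\sum_{k=1}^m (By)_k = -y_m$ is precisely the statement that $B^{-1} = -Q$ with $Q$ the lower-triangular all-ones matrix (which is how the paper phrases it), and the indicator vector you construct in the forward direction is exactly $B^{-1}(e_{l_2}-e_{l_1}) = \sum_{k=l_1}^{l_2-1} e_k$. Both routes reduce the claim to the sign pattern of the same vector, so there is no substantive difference between them.
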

\begin{proof}
$B$ is regular with inverse
\begin{equation}
	B^{-1} = -Q,\qquad Q_{i,j} = \begin{cases}1 & \text{if } i\geq j\\ 0 & 
\text{else}\end{cases}
\end{equation}
and $w_j-w_i\in K \gdw B^{-1}(w_j-w_i)\in \R^c_+$. It holds
\begin{equation}
	B^{-1}(w_j-w_i) = Qe_{l_1}-Qe_{l_2} = \sum_{k=l_1}^c e_k - \sum_{k=l_2}^c 
e_k
\end{equation}
such that $B^{-1}(w_j-w_i)$ has nonnegative entries exactly if $l_1 \leq l_2$.
\end{proof}

The continuous notion of order preservation put forward in Definition 
\ref{def:ordered_assignments} can be interpreted in terms of a related 
discrete graphical model. Consider a graph consisting of two nodes connected by a 
single edge. The order constrained image labeling problem on this graph can be 
written as the integer linear program
\begin{equation}
	\min_{W\in \{0,1\}^{2\times c}, M\in \Pi(w_i,w_j)} \la W, D\ra + \theta\la 
Q-\II, M\ra
\end{equation}
where $\Pi(w_i,w_j)$ denotes the set of coupling measures for marginals $w_i$, 
$w_j$ and $\theta \gg 0$ is a penalty associated with violation of the 
ordering constraint. By taking the limit $\theta \to \infty$ we find the more tightly 
constrained problem
\begin{equation}\label{eq:tightened_lp}
	\min_{W\in \{0,1\}^{2\times c}, M\in \Pi(w_i,w_j)} \la W, D\ra\qquad 
\text{s.t. }\la Q-\II, M\ra = 0\ .
\end{equation}
Its feasible set has an informative relation to Definition 
\ref{def:ordered_assignments} examined in Proposition \ref{prop:ordered_transport}.
\begin{lemma}\label{lem:pos_weight_redistribution}
Let $M \in \R^{c\times c}$ be an upper triangular matrix with non-negative entries above the diagonal and non-negative marginals
\begin{equation}\label{eq:lem_marginals}
	M\eins_c \geq 0,\qquad M^\top\eins_c \geq 0\ .
\end{equation}
Then there exists a modified matrix $M^1$ with the same properties such that $M^1 \geq 0$.
\end{lemma}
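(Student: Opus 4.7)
The plan is to eliminate the negative diagonal entries of $M$ one at a time using an elementary four-corner cycle move that preserves both the marginals and the non-negativity of the above-diagonal entries. Concretely, I would proceed iteratively: if $M \geq 0$, there is nothing to prove; otherwise, pick any $k$ with $M_{kk} < 0$ and show that the marginal conditions already supply enough positive mass in row $k$ and column $k$ to cancel the negativity at $(k,k)$.

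First I would establish two resource estimates. Because $M$ is upper triangular, the $k$-th row-sum inequality $M_{kk} + \sum_{j>k} M_{kj} \geq 0$ forces $\sum_{j > k} M_{kj} \geq -M_{kk} > 0$, so there is some $j > k$ with $M_{kj} > 0$. Symmetrically, the $k$-th column-sum inequality yields some $i < k$ with $M_{ik} > 0$. On the four entries $(k,k), (i,k), (k,j), (i,j)$ I would then define the redistribution
\[
M'_{kk} = M_{kk} + \delta,\quad M'_{ik} = M_{ik} - \delta,\quad M'_{kj} = M_{kj} - \delta,\quad M'_{ij} = M_{ij} + \delta,
\]
with $\delta = \min(-M_{kk}, M_{ik}, M_{kj}) > 0$, leaving all other entries unchanged. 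Since $i < k < j$, the modified entry $(i,j)$ lies strictly above the diagonal, so $M'$ remains upper triangular with non-negative entries above the diagonal; a short check shows that every row and column sum of $M'$ agrees with that of $M$, so all hypotheses of the lemma carry over to $M'$.

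To close the argument I would address termination. Each elementary move drives at least one of $M_{kk}, M_{ik}, M_{kj}$ to zero, and once $M_{kk}$ reaches $0$ it cannot become negative again, since the move only ever adds to the central diagonal entry and leaves all other diagonal entries untouched. The set of indices with $M_{kk} < 0$ therefore strictly shrinks after finitely many moves and the procedure yields the desired $M^1 \geq 0$. A cleaner variant collapses the iteration into a single macro-step per index via a transportation argument: the two resource estimates are exactly what is needed for a non-negative plan $A_{ij}$, $i < k < j$, with row marginals bounded by $(M_{ik})_{i<k}$, column marginals bounded by $(M_{kj})_{j>k}$, and total mass $-M_{kk}$ to exist.

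The main obstacle, to my mind, is verifying that the lemma's hypotheses \emph{persist} through each elementary move, i.e., that above-diagonal non-negativity and non-negative marginals are maintained simultaneously. This rests on the interplay between upper triangularity, which confines all possible negative entries of $M$ to the diagonal, and the marginal sign conditions, which provide exactly the global positive mass needed to patch up each individual negativity; once this interplay is made explicit, everything else is routine bookkeeping.
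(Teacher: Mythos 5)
Your proof is correct and follows essentially the same route as the paper's: your elementary four-corner move is exactly a scalar multiple of the paper's redistribution matrix $O^{lk,kr}$ (subtract mass at $(l,k)$ and $(k,r)$, add it at $(k,k)$ and $(l,r)$ with $l<k<r$), and your two resource estimates are the same use of the marginal conditions combined with upper triangularity. The only difference is presentational: the paper asserts in one shot that suitable coefficients $\alpha^k_{lr}\geq 0$ exist, whereas you run a greedy iteration with $\delta=\min(-M_{kk},M_{ik},M_{kj})$ and an explicit termination argument, which in fact supplies the detail the paper leaves implicit.
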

\begin{proof}
\eqref{eq:lem_marginals} directly implies $M_{11}\geq 0$ and $M_{cc}\geq 0$ because $M$ is upper triangular. For row indices $l\neq m$ and column indices $q\neq r$, define the matrix $O^{lm,qr}$ with
\begin{equation}\label{eq:redistribution_matrix}
	O^{lm,qr}_{ij} = \begin{cases}
		-1& \text{ if } (i,j)=(l,q) \lor (i,j)=(m,r)\\
		1 & \text{ if } (i,j)=(l,r) \lor (i,j)=(m,q)\\
		0 & \text{ else}
	\end{cases}\ .
\end{equation}
Then $O^{lm,qr}\eins = (O^{lm,qr})^\top\eins = 0$. Adding a matrix $O^{lm,qr}$ to $M$ does therefore not change its marginals, but it redistributes mass from the positions $(l,q)$ and $(m,r)$ to the positions $(l,r)$ and $(m,q)$. Due to \eqref{eq:lem_marginals}, it is possible to choose scalars $\alpha^k_{lr} \geq 0$ such that
\begin{equation}\label{eq:redistribution_sum}
	M + \sum_{2\leq k\leq c-1}\;\sum_{\substack{l < k\\ r > k}} \alpha^k_{lr} O^{lk,kr} \geq 0\ .
\end{equation}
\end{proof}
\begin{proposition}\label{prop:ordered_transport}
A pair of voxel assignments $(w_i, w_j)\in \mc{S}^2$ within an single A-scan is ordered if and only 
if the set
\begin{equation}\label{eq:ordered_couplings}
	\Pi(w_i,w_j) \cap \{ M\in\R^{c\times c}\colon \la Q-\II, M\ra = 0 \}
\end{equation}
is not empty.
\end{proposition}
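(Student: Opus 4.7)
The plan is to reduce the claim to an equivalence between a cumulative-sum inequality and the existence of a nonnegative upper-triangular coupling, and then to handle the two implications by a short cancellation and by an explicit construction, respectively.

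First, I would reformulate the intersection condition. Since any $M\in\Pi(w_i,w_j)$ is entrywise nonnegative and $Q-\II$ is the $0/1$ indicator of strictly lower-triangular positions, one has $\la Q-\II,M\ra=\sum_{p>q}M_{pq}$. Vanishing of this sum together with $M\geq 0$ forces $M_{pq}=0$ whenever $p>q$, so the set \eqref{eq:ordered_couplings} is precisely the collection of nonnegative upper-triangular matrices with row marginal $w_i$ and column marginal $w_j$. Combining the preceding lemma with the identity $B^{-1}=-Q$, the condition $w_j-w_i\in K$ becomes the componentwise inequality $F_i(k)\geq F_j(k)$ for every $k\in[c]$, where $F_p(k):=\sum_{m\leq k}(w_p)_m$ denotes the cumulative sum of $w_p$.

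For the implication ``nonempty $\Rightarrow$ ordered'', I would take any such $M$ and, after cancelling the top-left $k\times k$ block in the double sums defining $F_i(k)$ and $F_j(k)$, write
\begin{equation*}
F_i(k)-F_j(k)=\sum_{p\leq k<q}M_{pq}-\sum_{q\leq k<p}M_{pq}.
\end{equation*}
In the second sum $p>k\geq q$ forces $p>q$, so upper triangularity eliminates every term, while the first sum is manifestly nonnegative, giving $F_i(k)\geq F_j(k)$. For the implication ``ordered $\Rightarrow$ nonempty'', I would exhibit the comonotone coupling: setting $F_p(0):=0$ and $I_p^i:=(F_i(p-1),F_i(p)]$, $I_q^j:=(F_j(q-1),F_j(q)]$ as subintervals of $(0,1]$, define $M_{pq}:=|I_p^i\cap I_q^j|$. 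The marginals follow by telescoping over the partitions $\{I_p^i\}_p$ and $\{I_q^j\}_q$, and whenever $p>q$ the assumed chain $F_i(p-1)\geq F_i(q)\geq F_j(q)$ makes $I_p^i$ start only after $I_q^j$ has ended, so $M_{pq}=0$ and $M$ is upper triangular.

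The main obstacle is the second implication: one needs a single matrix that is simultaneously entrywise nonnegative, upper triangular, and realizes the prescribed marginals. The comonotone construction packages all three properties at once; as an alternative route consistent with the paper's setup, one could build any upper-triangular matrix with the required row and column sums (possibly with negative diagonal) and then invoke Lemma~\ref{lem:pos_weight_redistribution} to redistribute mass into the nonnegative cone without altering the marginals.
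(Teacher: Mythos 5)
Your proof is correct, and its forward direction takes a genuinely different route from the paper's own proof. For the implication ``nonempty $\Rightarrow$ ordered'' the two arguments are essentially the same: your cancellation identity $F_i(k)-F_j(k)=\sum_{p\le k<q}M_{pq}-\sum_{q\le k<p}M_{pq}$ recovers in one step exactly the block-sum formula that the paper establishes by induction on $r$ (its induction hypothesis \eqref{eq:induction_hypothesis} is precisely your first sum, the second sum having been killed by upper triangularity), so this half is the paper's computation, streamlined. The substantive difference is in ``ordered $\Rightarrow$ nonempty''. The paper argues by induction on the dimension $c$: an explicit $2\times 2$ base case \eqref{eq:plan_induction_start}, a reduction map $C_1^{c-1}$ shown to preserve orderedness, an intermediate matrix $M^0$ whose diagonal entries may be negative, and finally Lemma~\ref{lem:pos_weight_redistribution} to redistribute mass into the nonnegative cone without disturbing marginals or triangularity. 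You instead exhibit the monotone (quantile) coupling $M_{pq}=|I_p^i\cap I_q^j|$ in a single stroke: after recasting orderedness as the cumulative-sum dominance $F_i(k)\ge F_j(k)$, nonnegativity and both marginals hold by construction, and upper triangularity follows from the chain $F_i(p-1)\ge F_i(q)\ge F_j(q)$ for $p>q$, so neither the induction nor the redistribution lemma is needed. Your construction also makes transparent that the proposition is an instance of the classical equivalence between first-order stochastic dominance and the existence of a coupling supported on $\{(p,q)\colon p\le q\}$. What the paper's route buys is a self-contained argument by elementary matrix manipulation, with Lemma~\ref{lem:pos_weight_redistribution} as a reusable standalone device; what yours buys is brevity and a canonical explicit coupling rather than one produced by recursive correction. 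One caveat on your closing remark: the ``alternative route'' via Lemma~\ref{lem:pos_weight_redistribution} is essentially the paper's strategy, and producing the initial upper-triangular matrix with prescribed marginals and nonnegative off-diagonal entries still requires an argument (this is exactly what the paper's induction supplies), so that aside is not a shortcut — but since your comonotone construction is complete on its own, this does not affect the validity of your proof.
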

\begin{proof}
$"\Leftarrow"$ Suppose there exists a measure $M\in \R^{c\times c}$ with 
marginals $w_i$, $w_j$ and $\la Q-\II, M\ra = 0$. Then
\begin{equation}\label{eq:explicit_y_proof}
	w_j - w_i = By \;\Leftrightarrow\; Q(M-M^\top)\eins = y\ .
\end{equation}
It suffices to show that no entry of $y$ is negative. Define the shorthand $\zeta = (M-M^\top)\eins$. Further,
let $M_{\cdot,k}$ denote the $k$-th column of $M$ and let $M_{k,\cdot}$ denote the $k$-th row of $M$. 
$\zeta$ has entries
\begin{equation}
	\zeta_l = (M-M^\top)\eins|_l = \la M_{l,\cdot} - M_{\cdot,l}, \eins\ra = \sum_{k=l}^c M_{l,k} - \sum_{k=1}^l M_{k,l},\qquad l\in [c]\ .
\end{equation}
By \eqref{eq:explicit_y_proof}, the entries of $y$ read
\begin{equation}
	y_r = \sum_{q = 1}^r \zeta_q\ .
\end{equation}
We can now inductively show that $y_r \geq 0$ for all $r\in[c]$. The cases $r=1$ and $r=c$ are immediate:
\begin{align}
	y_1 &= \zeta_1 = \sum_{k=1}^c M_{1,k} - M_{1,1} = \sum_{k=2}^c M_{1,k} \geq 0\label{eq:induction_start}\\
	y_c &= \la \zeta, \eins\ra = \la M-M^\top, \eins\eins^\top\ra = \sum_{i,j\in [c]} M_{i,j} - \sum_{i,j\in [c]} M^\top_{i,j} = 0\ .
\end{align}
For $r\in\{2,\dotsc,c-1\}$ we make the hypothesis that
\begin{equation}\label{eq:induction_hypothesis}
	y_r = \sum_{q=1}^r \zeta_q = \sum_{k=r+1}^c\left( M_{1,k} + \dotsc + M_{r,k} \right) \geq 0
\end{equation}
which is consistent with the result for $r=1$ in \eqref{eq:induction_start}. It follows
\begin{align}
	y_{r+1} &= \sum_{q=1}^{r+1} \zeta_q\\
		&= \zeta_{r+1} + \sum_{k=r+1}^c\left( M_{1,k} + \dotsc + M_{r,k} \right)\label{eq:use_hypothesis}\\
		&= \sum_{k=r+1}^c M_{r+1,k} - \sum_{k=1}^{r+1} M_{k,r+1} + \sum_{k=r+1}^c\left( M_{1,k} + \dotsc + M_{r,k} \right)\\
		&= \sum_{k=r+2}^c M_{r+1,k} + \sum_{k=r+2}^c\left( M_{1,k} + \dotsc + M_{r,k} \right)\\
		&= \sum_{k=r+2}^c\left( M_{1,k} + \dotsc + M_{r,k} + M_{r+1,k} \right)
\end{align}
where we used \eqref{eq:induction_hypothesis} in \eqref{eq:use_hypothesis}. 
This completes the inductive step and thus shows $y\geq 0$.\\[1em]
$"\Rightarrow"$ Let $(w_i,w_j)$ be ordered. Following Definition 
\eqref{def:ordered_assignments}, it holds 
\begin{align}
	B^{-1}(w_j-w_i) = Q(w_i-w_j) \in \R^c_+. 
\end{align}
We show the existence of a transport plan $M \geq 0$ satisfying 
\begin{align}\label{eq:dis_constr}
	M \eins = w_i,\quad M^\top \eins = w_j
\end{align}
as well as the ordering constraint $\la Q-\II, M\ra = 0$ by direct construction.
For $c = 2$, 
\begin{equation}\label{eq:plan_induction_start}
M = \begin{pmatrix} 
		(w_j)_1 & (w_i)_1-(w_j)_1 \\
		0       & 1-(w_i)_1
	\end{pmatrix}
\end{equation} 
satisfies these requirements. Now, let $c > 2$ and define the mapping
\begin{align}
	C_{1}^{c-1}: \Delta_{c} &\to \Delta_{c-1}\\
						w   &\mapsto \tilde{w} = 
						(w_2,\dots,w_c) + \frac{w_1}{c-1}\eins_{c-1}.
\end{align}
If $(w_i, w_j) \in \Delta_c^2$ is ordered, then 
$(\tilde w_i,\tilde w_j) := (C_{1}^{c-1}(w_i), C_{1}^{c-1}(w_j)) \in \Delta_{c-1}^2$ is ordered as well because
\begin{equation}
	Q(\tilde w_i - \tilde w_j) = Q(\bar w_i - \bar w_j) + \frac{(w_i)_1 - (w_j)_1}{c-1}Q\eins \geq 0
\end{equation}
where $\bar w_i$ denotes the vector $((w_i)_2,\dotsc,(w_i)_c)$. Suppose a transport plan $\tilde{M} \in \R^{(c-1)\times(c-1)}$ exists such that
\begin{align}
	\tilde{M}\eins_{c-1} = \tilde{w}_i \quad \tilde{M}^\top\eins_{c-1} = 
	\tilde{w}_j, \quad \tilde{M} \geq 0.
\end{align}
To complete the inductive step, we consider the matrix 
\begin{align}
	M^0 := \begin{pmatrix}
	(w_j)_1 & s^\top \\
	0       & \tilde{M}-\frac{(w_i)_1}{c-1}I
	\end{pmatrix}, 
	\quad s = \frac{(w_i)_1-(w_j)_1}{c-1}\eins_{c-1}
\end{align}
which satisfies \eqref{eq:dis_constr} as well as $\la Q-\II, M^0\ra = 0$.
By Lemma \ref{lem:pos_weight_redistribution}, $M^0$ can be modified to yield a transport plan with the desired properties.
\end{proof}
Proposition \ref{prop:ordered_transport} shows that transportation plans 
between ordered voxel assignments $w_i$ and $w_j$ exist which do not move mass from 
$w_{i,l_1}$ to $w_{j,l_2}$ if $l_1 > l_2$. This characterizes order preservation for 
non-integral assignments as put forward in Definition 
\ref{def:ordered_assignments}.

\subsection{Ordered Assignment Flow}\label{sec:ordered_af}

Likelihoods as defined in \eqref{schnoerr-eq:def-Li} emerge by lifting 
$-\frac{1}{\rho}D_{\mc{F}}$ regarded as Euclidean gradient of $-\frac{1}{\rho}\la D_{\mc{F}}, W \ra$ 
to the assignment manifold. It is our goal to encode order preservation into a 
generalized likelihood matrix $L_\text{ord}(W)$. To this end, consider the assignment matrix
$W\in\mc{S}^N$ for a single A-scan consisting of $N$ voxels.
We define the related
matrix $Y(W)\in\R^{N(N-1)\times c}$ with rows indexed by 
pairs $(i,j)\in [N]^2$, $i\neq j$ in fixed but arbitrary order. Let the rows of $Y$ be given by
\begin{equation}
	Y_{(i,j)}(W) = \begin{cases}Q(w_j-w_i) & \text{if }i > j\\Q(w_i-w_j) & 
\text{if }i < j\end{cases}\ .
\end{equation}
By construction, an A-scan assignment $W$ is ordered exactly if all entries of the corresponding $Y(W)$ are 
nonnegative. 
This enables to express the ordering constraint on a single A-scan in terms of the energy objective
\begin{equation}\label{eq:a_scan_order_energy}
	E_\text{ord}(W) \;= \sum_{(i,j)\in [N]^2,\;i\neq j} 
	\phi(Y_{(i,j)}(W))\ .
\end{equation}
where $\phi\colon \R^c \to \R$ denotes a smooth approximation of $\delta_{\R^c_+}$. In our numerical
experiments, we choose
\begin{equation}
\label{gamma_par}
	\phi(y) = \left\la\gamma \exp \left( -\frac{1}{\gamma} y \right), 
\eins\right\ra
\end{equation}
with a constant $\gamma > 0$. Suppose a full OCT volume assignment matrix $W\in\mc{W}$ is given and denote
the set of submatrices for each A-scan by $C(W)$. Then order preserving assignments consistent with given distance
data $D_{\mc{F}}$ in the feature space $\mc{F}$ are found by minimizing the energy objective
\begin{equation}\label{eq:full_vol_energy}
	E(W) = \la D_{\mc{F}}, W \ra + \sum_{\mathclap{W_A\in 
			C(W)}} E_\text{ord}(W_A)\ .
\end{equation}
We consequently define the generalized likelihood map
\begin{equation}\label{eq:ordering_likelihood}
	L_\text{ord}(W) = \exp_W\left(-\nabla E(W)\right) = 
	\exp_W\left(-\frac{1}{\rho}D_{\mc{F}} - \sum_{\mathclap{W_A\in 
			C(W)}} \nabla E_\text{ord}(W_A)\right)
\end{equation}
and specify a corresponding assignment flow variant.
\begin{definition}[Ordered Assignment Flow]\label{def:ordered_af}
The dynamical system
\begin{equation}\label{eq:ordered_af}
	\dot W = R_WS(L_\text{ord}(W)),\qquad W(0) = \eins_\mc{W}
\end{equation}
evolving on $\mc{W}$ is called the \emph{ordered assignment flow}.
\end{definition}
By applying known numerical schemes \cite{Zeilmann:2020aa} for approximately integrating the flow \eqref{eq:ordered_af}, we find a class of discrete-time image labeling algorithms which respect the physiological cell layer ordering in OCT data. In chapter \ref{sec:Experimental-Results}, we benchmark the simplest instance of this class, emerging from the choice of geometric Euler integration.

\section{Experimental Results}\label{sec:Experimental-Results}

\subsection{Data, Competing Approaches, Performance Measures}

\subsubsection{OCT-Data}
In the following sections, after introducing key terminology in volumetric OCT 
data we describe experiments performed on a set of OCT volumes
depicting the intensity of light reflection in chorioretinal tissues centered 
around the fovea. The scans were obtained using a spectral domain OCT device 
(Heidelberg Engineering, Germany) for multiple patients
at a variety of resolutions by averaging various registered B-scan which share 
same location to reduce the speckle noise. This is representative of the fact 
that
different resolutions may be desirable in clinical settings at the preference of
medical practitioners. 
In the following, we always assume an OCT volume in question to consist of 
$N_B$ B-scans, each comprising $N_A$ A-scans
with $N$ voxels and use the term surface to represent the set of voxels located 
in the interface of two retina layers. See Figure \ref{OCT_Acquisition} for a 
schematic acquisition 
illustration of retina layers and separating membranes.

 In this publication we 
use OCT volumes of size $(N\times N_A \times N_B) = (498 \times 768 \times 61)$ 
with a approximate resolution of \SI{3.87}{\micro\meter}/voxel along 
$N$,$N_A$ 
direction and with a resolution of \SI{11,11}{\micro\meter}/voxel on $N_B$ 
axis. The 
volume set was divided into training set and the testing set where the latter 
consists of 8 volumes extracted 
from different patients without any observable pathological retina changes. 
Figure 
\eqref{fig:Intensity_Plot} 
provides a 
view on a Bscan located in the OCT volume on which the ordered 
assignment flow is validated. The right plot depicts the noisy signal along an 
A-scan indicated by a yellow vertical line which underpins 
the difficulty of segmenting the underlying data sets.

\begin{figure}[ht!]
	\begin{subfigure}[t]{0.33\textwidth}
		\includegraphics[width=5cm,height=4cm]{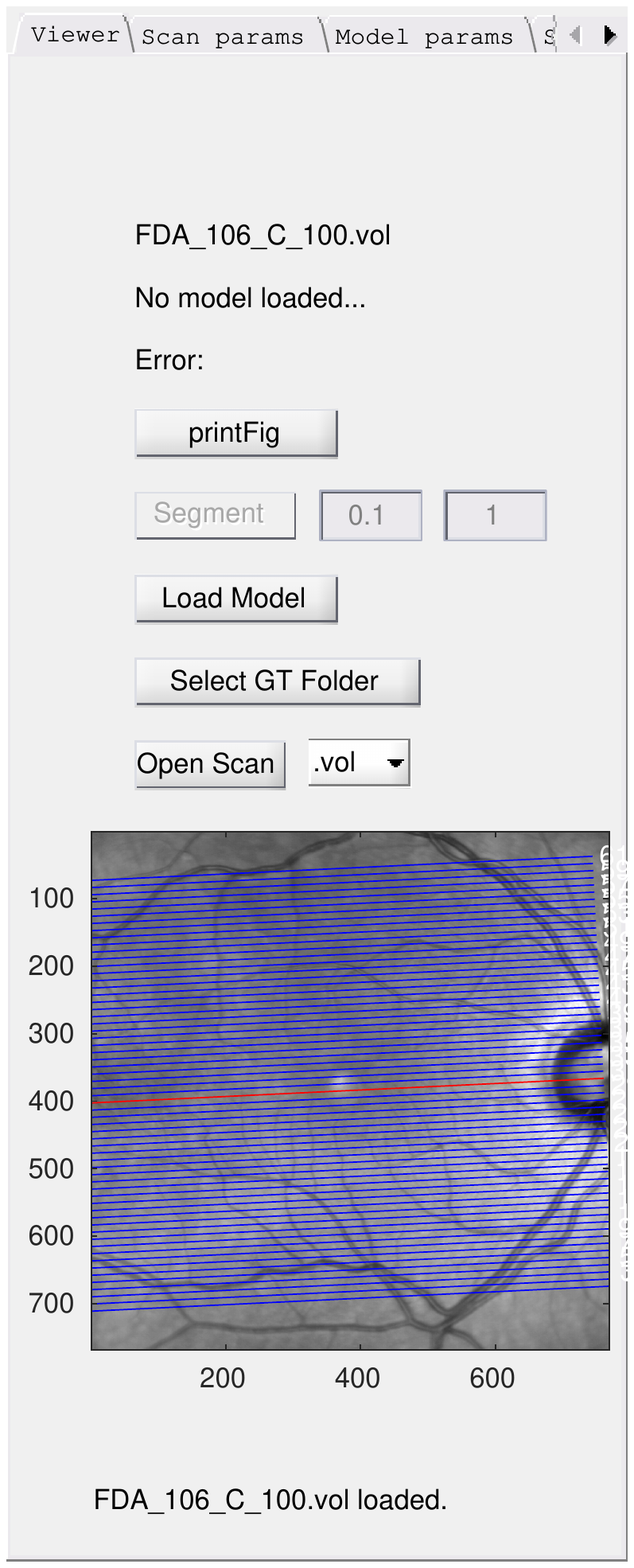}
		\caption{}
	\end{subfigure}
	\begin{subfigure}[t]{0.33\textwidth}
		\includegraphics[width=5cm,height=4cm]{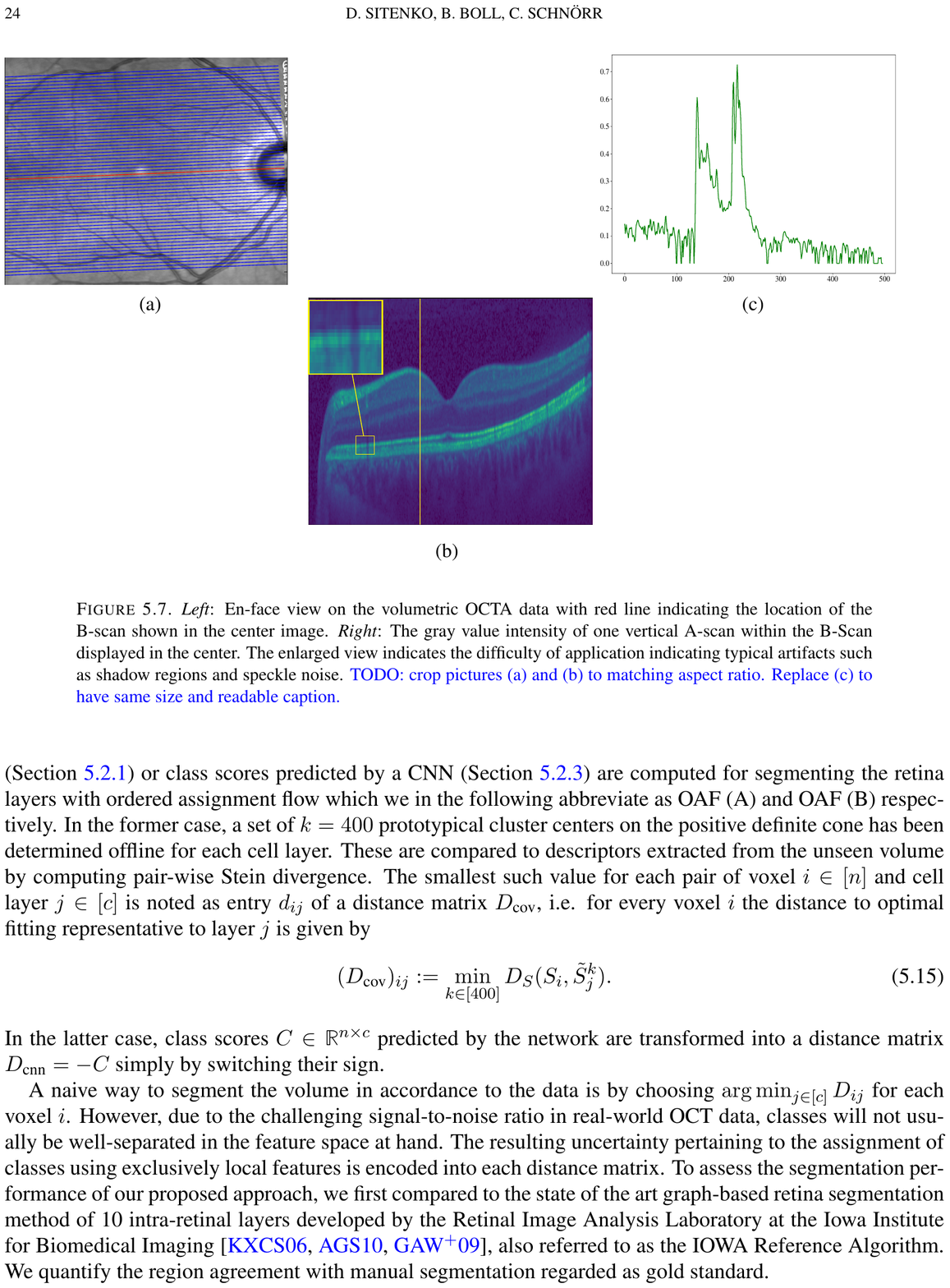}
		\caption{}
	\end{subfigure} 
	\begin{subfigure}[t]{0.32\textwidth}
		\scalebox{1.05}{\includegraphics[width=5cm,height=4cm]{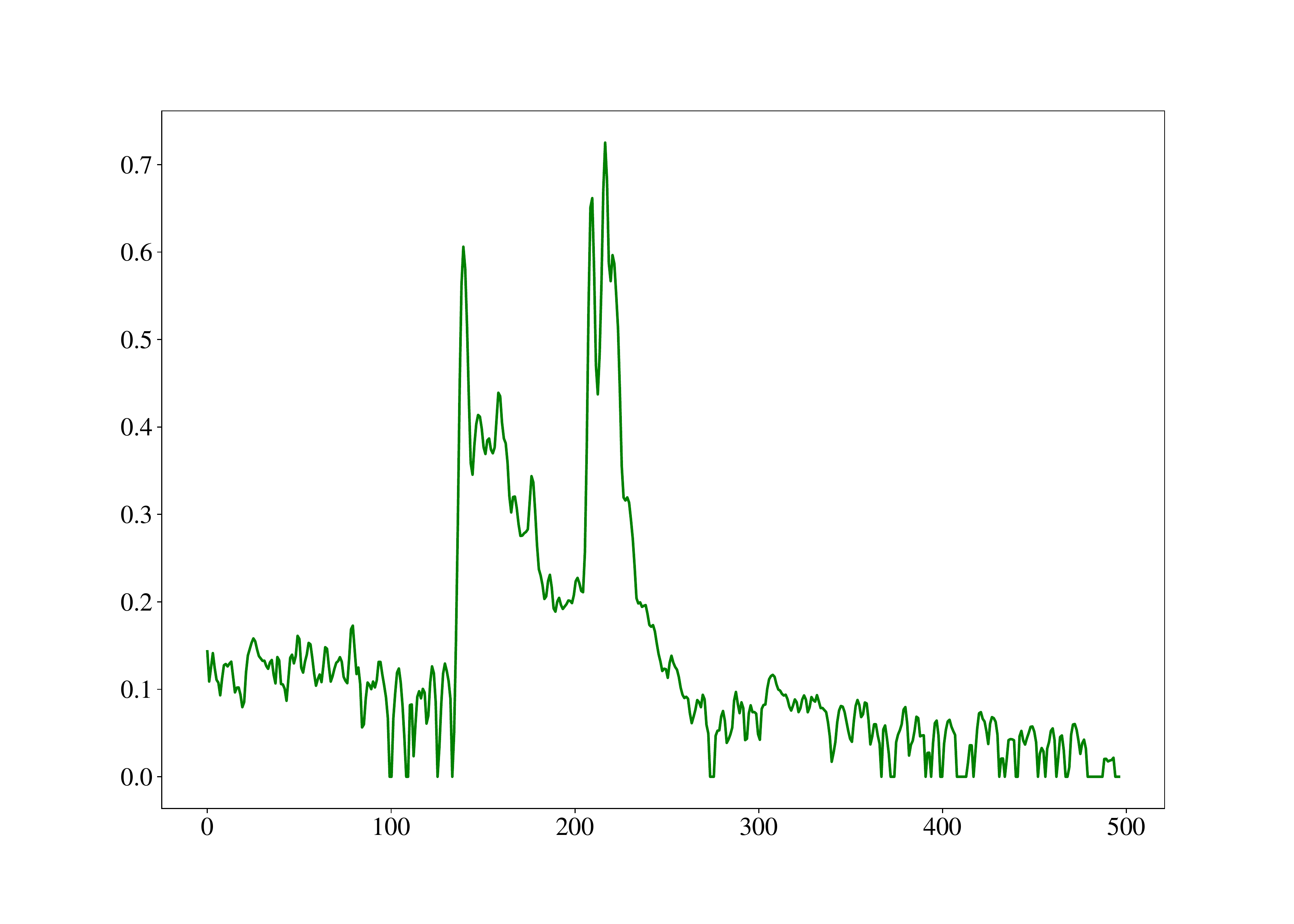}
		}\caption{}
	\end{subfigure} \hfill
	\captionsetup{font=footnotesize}
	\caption{\textbf{Left}: En-face view on the volumetric OCTA data with red 
	line 
		indicating the location of the B-scan shown in the center image. 
		\textbf{Center}: The enlarged view depicts typical artifacts such as 
		shadow regions and speckle noise. 
		\textbf{Right}: The gray value intensity of one vertical A-scan within 
		the 
		B-Scan represented by yellow line in the center image. The noisy 
		intensity plot indicates the difficulty of application to segment the 
		OCT-volume accurately while respecting layer ordering.} 
	\label{fig:Intensity_Plot}
\end{figure}

\subsubsection{Reference Methods}\label{sec:Reference_Methods}
To assess the segmentation performance of our proposed approach we compare 
ourselves to state of the art retina segmentation methods presented in 
Rathke \cite{Rathke2014} and Li \cite{Li:2006aa} which are applicable for both 
healthy and pathological patient data. In particular, we prefer these reference 
methods over \cite{Dufour:2013aa}, \cite{Song:2012aa} and \cite{Garvin:2008aa} 
because available implementations of the latter are limited to the segmentation of 
up to 9 retina layers.\\

\textbf{\textit{IOWA Reference Algorithm:}}
A well-known graph-based approach to segmentation of macular volume data was 
developed by the Retinal Image Analysis Laboratory at the Iowa Institute for 
Biomedical Imaging
\cite{Li:2006aa,Abramow:2010aa,Garvin:2008aa}.
The problem of localizing cell layer boundaries in 3D OCT volumes
is posed and ultimately transformed into a minimum $st$-cut problem on a non-trivially
constructed graph $G$.
To this end, a distance tensor $D_k\in\R^{N_B\times N_A\times N}$ is formed in a feature
extraction step for each boundary $k\in [c-1]$. This encodes $c-1$ separate binary
segmentation problems on a geometric graph $G_k$ spanning the volume. In each instance, 
voxels are to be classified as either belonging to boundary $k$ or not belonging 
to boundary $k$. By utilizing a (directed) neighborhood structure on each $G_k$,
smoothness constraints are introduced and regulated via user-specified stiffness 
parameters.
To model interactions between different boundaries, the graphs $G_k$ are combined 
to a global graph $G$, introducing additional edges between them. The latter set up
constraints on the distance between consecutive boundaries within each 
A-scan which can be used to enforce physiological ordering of cell layers.
On $G$, the problem of optimal boundary localization takes the form of minimal
closed set construction which is in turn transformed into a minimum $st$-cut
problem for which standard methods exist. Their standalone software is freely 
available for research purposes\footnote{see 
\url{https://www.iibi.uiowa.edu/oct-reference}}.\\

\textbf{\textit{Probabilistic Model:}}
Rathke et al. \cite{Rathke2014}
proposed a graph-based probabilistic approach for segmenting OCT volumes for
given data $y$ by leveraging the Bayesian ansatz
\begin{align}
	\label{Graphical_Model}
	p(y,s,b) = p(y|s)p(s|b)p(b)\ . 
\end{align}
Here, the tensor $b \in \R^{N_B\times N_A \times (c-1)}$ contains real-valued 
boundary positions between retina layers and $s$ denotes discrete (voxel-wise)
segmentation.
The appearance terms $p(y|s)$, $p(s|b)$ and $p(b)$ represent data 
likelihood, Markov random field regularizer and global shape prior 
respectively.
In order to approximate the desired posterior
\begin{align}
	p(b,s|y) = \frac{p(y|s)p(s|b)p(b)}{p(y)}\ , 
\end{align}
a variational inference strategy is employed. This aims to find a tractable
distribution $q$ decoupled into
\begin{equation}
	q(b,s) = q_b(b)q_s(s)
\end{equation}
which is close to $p(b,s|y)$ in terms of the relative entropy $\text{KL}(q\,|\,p)$.
The shape prior $p(b)$ is learned offline by maximum likelihood estimation in the space of normal 
distributions using a low-rank approximation of the involved covariance matrix.
Ordering constraints 
\begin{align}
1 \leq s_{1,ij} \leq s_{2,ij} \leq \cdots \leq s_{c-1,ij}, \quad ij \in [N_B] \times [N_A]
\end{align}
are enforced for the discrete segmentation $s$ and are not enforced for the 
continuous boundaries $b$. This is in contrast to the proposed model which 
integrates the ordering of retina layers by adding a cost function 
\eqref{eq:ordered_couplings} penalizing the overall deviation of soft 
assignments  by integrating \eqref{eq:assignment-flow} from the subspace of 
probability distributions satisfying \eqref{def:ordered_assignments}. The 
method comes along with a standalone software free available under \footnote{ 
\url{https://github.com/FabianRathke/octSegmentation}}.

\subsubsection{Performance Measures}

We will evaluate the computed segmentations 
by their direct comparison with manual annotations regarded as gold 
standard which were realized by an medical expert.
Respective metrics are suitable for segmentation tasks that involve multiple tissue 
types \cite{Crum:2006aa}.
Specifically, we report the mean DICE similarity coefficient \cite{Dice:1945aa} 
for each segmented cell layer.

\begin{definition}{\textit{\textbf{(DICE)}}}
Given two sets $A,B$ the \textit{DICE similarity coefficient} is defined as 
\begin{equation}\label{DICE}
	\DSC(A,B) := \frac{2|A \cap B|}{|A|+|B|} = \frac{2TP}{2TP+FP+FN} \in [0,1],
\end{equation} 
where $\{TP,FN,FP\}$ denotes the number of \textit{true 
positives}, \textit{false negatives} and \textit{false positives} respectively.
\end{definition}

The DICE similarity coefficient quantifies the region agreement between computed
segmentation results and manually labeled OCT volumes which serve as ground truth.
High similarity index $\DSC(A,B) \approx 1$ indicates large relative overlap 
between the sets $A$ and $B$.
This metric is well suited for average performance evaluation and 
appears frequently in the literature 
(e.g. \cite{Chiu:2015aa}, \cite{Yazdanpanah:2011aa} and \cite{Novosel:2017aa}). 
It is closely related to the positively
correlated Jaccard similarity measure \cite{Jaccard:1908aa} which in contrast 
to \eqref{DICE} is more strongly influenced by worst case performance.

In addition, we report the mean absolute error (MAE) of computed layer 
boundaries used in \cite{Rathke2014} and \cite{Garvin:2008aa} to make our 
results more directly comparable to these references.
 
\begin{definition}{\textbf{\emph{(Mean Absolute Error)}}}
 For a single A-scan indexed by $ij\in [N_B]\times [N_A]$,
 let $e_{ij} :=| g_{ij}-p_{ij} |$ denote the absolute difference between 
 a layer boundary position $g_{ij}$ in the gold standard segmentation
 and a predicted layer boundary $p_{ij}$. The mean absolute error
 (MAE) is defined as the mean value
 \begin{equation}\label{MAE}
 	\MAE(g,p) = \frac{1}{N_BN_A}\sum_{ij \in [N_B]\times [N_A]} e_i\ .
 \end{equation}
\end{definition}  

\subsection{Feature Extraction}

\subsubsection{Region Covariance Descriptors}\label{sec:experiments_cov_descr}
To apply the geometric framework proposed in Section 
\ref{sec:OCT-Segmentation} we next introduce the \emph{region covariance 
descriptors} \cite{Tuzel:2006aa} which have been widely applied in computer 
vision and medical imaging, see 
e.g.~\cite{Cherian:2016aa,Turaga:2016aa,DEPEURSINGE:2014aa,Korsuk:2015aa}. We 
model the raw intensity data for a given OCT volume by a mapping $I:\mc{D} \to 
\R_+$ where $\mc{D} \subset \R^3$ is the underlying spatial domain.
To each voxel $v \in \mc{D}$, we associate the local feature vector $f \colon 
\mc{D} \to \R^{10}$,
\begin{gather}
\label{Feature_Vector}
f : \mc{D} \to \R^{10} \\
v \mapsto 
( I(v), 
\nabla_x I(v), 
\nabla_y I(v), 
\nabla_z I(v), 
\sqrt{2}\nabla_{xy} I(v),
\sqrt{2}\nabla_{yz} I(v),
\nabla_{xx} I(v), 
\nabla_{yy} I(v), 
\nabla_{zz} I(v)
)^\top\ .
\end{gather}
assembled from the intensity $I(v)$ as well as first- and 
second-order responses of derivative filters capturing information from larger 
scales following \cite{HASHIMOTO:1987aa}. To improve the segmentation accuracy 
we combine the derivative filter 
responses from various scales in an computationally efficient way we first 
normalize the derivatives of the input volume $I(v)$ at every scale $\sigma_s$ 
by convolution each 
dimension with a 1D window:
\begin{align}
	\nabla_x \tilde{I}_{\sigma_s}(v) = \sigma_s^{2}\frac{\partial}{\partial 
	x}\tilde{G}(v,\sigma_s)
\end{align}
where $\tilde{G}(v,\sigma_s)$ is an approximation to Gaussian window 
$\big(G(v,\sigma_s)\ast I\big)(v)$ at scale $\sigma_s$ as in detail described 
in 
\cite{HASHIMOTO:1987aa}. Subsequently we follow the idea presented by 
\cite{Lindeberg:2004aa} by taking local maxima over scales
\begin{align}
	\nabla_x \tilde{I}(v) = \max_{\sigma_s} \nabla_x \tilde{I}_{\sigma_s}(v), 
\end{align}
which are serving for the mapping \eqref{Feature_Vector}. 

By introducing a suitable geometric graph spanning $\mc{D}$, we can associate a 
neighborhood $\mathcal{N}_i$ of fixed size with each voxel $i\in [n]$ as in 
\eqref{eq:def-Si}.
For each neighborhood, we define the regularized \textit{region covariance descriptor}
\begin{align}
\label{Cov_Des_Feature}
S_i := \sum_{j \in \mathcal{N}_i}\theta_{ij} 
(f_j-\overline{f_i})(f_j-\overline{f_i})^T + \epsilon I, \quad 
\overline{f_i} = \sum_{k \in \mathcal{N}_i}\theta_{ik}f_k,
\end{align}
as a weighted empirical covariance matrix with respect to feature vectors 
$f_{j}$. The small value $1 \gg \epsilon > 0$ acts as a regularization 
parameter enforcing positive definiteness of $S_i$. 
Diagonal entries of each covariance matrix $C_i$ are empirical
variances of feature channels in \eqref{Feature_Vector} while the 
off-diagonal entries represent empirical correlations within the region 
$\mathcal{N}_i$.

\subsubsection{Prototypes on $\mathcal{P}^d$}\label{sec:Stein_Div_Prototypes_Expirements}
In view of the assignment flow framework introduced in Section \ref{sec:Assignment-Flow},
we interpret region covariance descriptors \eqref{Cov_Des_Feature} as 
data points in the metric space $\mathcal{P}^d$ of 
symmetric positive definite matrices and model each retina tissue indexed by $l \in [c]$ with a random variable $S_l$ taking values in $\mathcal{P}^d$. 
Suppose we draw $N_l$ samples $\{S_l^k \}_{k = 1}^{N_l}$ from the distribution of $S_l$.
The most basic way to apply assignment flows to data in $\mathcal{P}^d$ is based on computing a prototypical element of $\mathcal{P}^d$ for each tissue layer, e.g. the Riemannian center of mass of $\{S_l^k \}_{k = 1}^{N_l}$. This corresponds to directly choosing $\mathcal{P}^d$ as feature space $\mc{F}$ in \eqref{eq:def-mcF-n}.
We find that superior empirical results are achieved by considering a dictionary of $K_l > 1$ prototypical elements for each layer $l\in [c]$.
This entails partitioning the samples $\{S_l^k \}_{k = 1}^{N_l}$ into $K_l$ disjoint subsets 
$\hat{S}^j_l \subseteq \{S_l^k \}_{k = 1}^{N_l}$, $j\in [K_l]$
with representatives $\tilde{S}^j_l$ determined offline. 

To find a set of representatives which captures the structure of the data, we minimize expected loss measured by the Stein divergence \eqref{Stein_div} leading to the $K$-means like functional 
\begin{align}
\label{K_means}
\EE_{p_l}(\tilde{S}_{l}) = \sum^{K_l}_{j = 1} p(j) \sum_{S_l^i \in \hat{S}_l^j}\frac{p(i|j)}{p(j)} 
D_S(S_l^i,\tilde{S}^{j}_l), \quad p(i,j) = \frac{1}{N_l},p_l(j) = 
\frac{N_j}{N_l}\ .
\end{align}
A hard partitioning is achieved by applying Lloyd's algorithm in conjunction 
with algorithm \ref{Algorithm Riemannian Mean S} for mean retrieval.
We additionally employ the more 
common soft K-means like approach for determining prototypes by employing the 
mixture exponential family model based on Stein divergence to given data  
\begin{align}
\label{EM}
p(S_l^i, \Gamma_l) = \sum_{j = 1}^K \pi_{l}^j p(S_l^i,\tilde{S}_l^j)),  
\end{align}
where the parameters 
\begin{align}
	\Gamma_l 
	= \{(\pi_l^{j}\}_{j = 1}^K,\{\tilde{S}_l^j\}_{j = 1}^K), \quad 
	(\pi_l^1,\cdots, \pi_l^{|J|}) \in S 
\end{align}
have to be adjusted to given data. The prototypes are recovered as mean 
parameters $S_l^{j,T}$ though an 
iterative process commonly refered to as \textit{expectation maximation } (EM) 
defined by alternation of the following iterations 
\begin{align}
	p_l(j|S_l^i,\Gamma^t_l) = 
	\frac{\pi_l^{(j,t)}e^{-D_S(S_l^i,\tilde{S}_l^{(j,t)})}}{\sum_{k = 
	1}\pi_l^{(k,t)}e^{-D_S(S_l^i,\tilde{S}_l^{(k,t)})}}, \hspace{1.9cm} 
	\textbf{\textit{(Expectation 
	step)}}
\end{align} 
followed by updating the marginals at each time step up to final time $T$
\begin{align}
\pi_l^{(j,t+1)} &= \sum_{i = 1}^{N_j} p_l(j|S_l^i,\Gamma^t_l)
\tilde{S}^{j,t}\\
\tilde{S}^{j,t+1}_l &=  \argmin_{S \in \mathcal{P}_d} \Big( \sum_{i =1}^n 
p(j|\Gamma_i^t) 
D_{S}(S^i_l,S)\Big), \qquad \textbf{(\textit{Maximization step)}}.
\end{align}

The decision to approximate the Riemannian metric on $\mc{P}_d$ by the Stein 
divergence \eqref{Stein_div} can be backed up empirically. To this end, we 
randomly select descriptors \eqref{Cov_Des_Feature} representing the nerve 
fibre layer in real-world OCT data and compute their Riemannian mean as well as 
their mean w.r.t. the Log-Euclidian metric \eqref{Log_Euclid} and Stein 
divergence \eqref{Stein_div}. Figure \ref{fig:Running_Time} illustrates that 
Stein divergence approximates the full Riemannian metric more precisely than 
the Log-Euclidian metric while still achieving a significant reduction in 
computational effort. Furthermore to evaluate the classification we extracted a 
dictionary of $200$ prototypes for representing each retina tissue for 
different choice of metric and subsequently evaluated the resulting 
segmentation accuracy by assigning each 
voxel to a class containing the prototype with smallest distance using a 
cropped 
OCT Volume of size $138\times100\times40$ taken from the testing set.

 Figure \ref{fig:Metric_Classification} visualizes the correct classification 
 matches for retina layers ordered by color according to Figure 
 \ref{OCT_Acquisition}. In particular, we inspect a notable gain of correct 
 matches while respecting the 
Riemannian geometry (first column) as 
opposed to Log-Euclidean setting (third column). Regarding the approximation of 
\eqref{eq:def-g-mcPd} by \eqref{Stein_div}, we are observing more effective 
detection of outer Photoreceptor Layer (PR1), Inner Nuclear Layer (INL) and 
Retinal Pigment Epithelium (RPE). Furthermore, taking a closer look at (OPL) 
and (ONL) we note a typical tradeoff between the number of prototypes and 
detection performance indicating superior retina to 
voxel allocation by applying \eqref{Log_Euclid},  whereas the surrogate 
divergence metric \eqref{Stein_div} has the tendency to improve the accuracy 
while increasing the size of evaluated prototypes in contrast to flattening 
curves when relying on \eqref{eq:LE-mean}. 
 
\begin{figure}
	\centering
	\begin{subfigure}[t]{0.32\textwidth}
		\scalebox{0.5}{\begin{tikzpicture}
\begin{axis}[
		nodes near coords,
		every other node near coord/.append style={font=\tiny},
		legend columns=4,height=8cm,width = 10cm,
		mark repeat=2,
		xlabel = Number of prototypes,
		ylabel = True positive classification rate,
		xmin=0,
		xmax=200,
		scaled ticks=false,
		/pgf/number format/.cd,
		use comma,
		1000 sep={}
		]
	\addplot[line join=round,mark = star,color = Layer_4]
	coordinates {(05,7947) (20,12906) (40,12967) (60,13432) (80,12778) 
		(100,12935) (120,12585) (140,12816) (180,12896) (200,13216)};
	]			
	\addplot[line join=round,mark = star,color = Layer_5]
	coordinates {(05,6585) (20,11103) (40,14556) (60,15535) (80,15494) 
		(100,15760) (120,15916) (140,15528) (180,15429) (200,16262)};
	]
	\addplot[line join=round,mark = star,color = Layer_6]
	coordinates {(05,7613) (20,9378) (40,9446) (60,9653) (80,9908) (100,9913) 
	(120,9929) (140,10407) (160,9753) 
		(180,10235) (200,10315)};
	]	
	\addplot[line join=round,mark = star,color = Layer_10]
coordinates {(05,10412) (20,12277) (40,12676) (60,9416) (80,8257) (100,8737)
	(120,9009) (140,9295) (160,9313) (180,9493) (200, 9892)};
		\end{axis}
\end{tikzpicture}
	\end{subfigure}
\begin{subfigure}[t]{0.32\textwidth}
	\scalebox{0.5}{\begin{tikzpicture}
\begin{axis}[nodes near coords,
every other node near coord/.append style={font=\tiny},
legend columns=4,height=8cm,width = 10cm,
		legend columns=4,
		mark repeat=2,
		xlabel = Number of prototypes,
		ylabel = True positive classification rate,
		xmin=0,
		xmax=200,
		scaled ticks=false,
		/pgf/number format/.cd,
		use comma,
		1000 sep={}
		]
	\addplot[line join=round,mark = star,color = Layer_4]
	coordinates {(5, 8627)
		(20, 11880)
		(40, 12708)
		(60, 12083)
		(80, 11790)
		(100, 11824)
		(120, 11607)
		(140, 11752)
		(160, 12128)
		(180, 12190)
		(200, 13553)};
	]			
	\addplot[line join=round,mark = star,color = Layer_5]
	coordinates {(5, 10974)
		(20, 12116)
		(40, 15291)
		(60, 15258)
		(80, 15264)
		(100, 14930)
		(120, 15131)
		(140, 15593)
		(160, 15475)
		(180, 15572)
		(200, 15821)};
	]
	\addplot[line join=round,mark = star,color = Layer_6]
	coordinates {(5, 8125)
		(20, 9301)
		(40, 8177)
		(60, 8607)
		(80, 8614)
		(100, 8217)
		(120, 8406)
		(140, 8640)
		(160, 9028)
		(180, 9102) (200,10201)};
	]
	
	
	
	\addplot[line join=round,mark = star,color = Layer_10]
coordinates {(05,4491) (20,11025) (40,12111) (60,10517) (80,9890) (100,9956)
	(120,10020) (140,9916) (160,9582) (180,9448) (200, 9193)};
		\end{axis}
\end{tikzpicture}}
\end{subfigure}
\begin{subfigure}[t]{0.32\textwidth}
	\scalebox{0.5}{\begin{tikzpicture}
\begin{axis}[nodes near coords,
every other node near coord/.append style={font=\tiny},
legend columns=4,height=8cm,width = 10cm,
		legend columns=4,
		mark repeat=2,
		xlabel = Number of prototypes,
		ylabel = True positive classification rate,
		xmin=0,
		xmax=200,
			scaled ticks=false,
	/pgf/number format/.cd,
	use comma,
	1000 sep={}
		]
	\addplot[line join=round,mark = star,color = Layer_4]
	coordinates {(5, 7896)
		(20, 12433)
		(40, 12378)
		(60, 12818)
		(80, 12184)
		(100, 11979)
		(120, 11959)
		(140, 12225)
		(160, 12366)
		(180, 12338)
		(200, 12360)};
	]			
	\addplot[line join=round,mark = star,color = Layer_5]
	coordinates {(5, 5870)
		(20, 10701)
		(40, 13408)
		(60, 14568)
		(80, 14123)
		(100, 14720)
		(120, 14398)
		(140, 14241)
		(160, 14321)
		(180, 14568)
		(200, 14694)};
	]
	\addplot[line join=round,mark = star,color = Layer_6]
	coordinates {(5, 7624)
		(20, 9581)
		(40, 9485)
		(60, 9634)
		(80, 9133)
		(100, 8958)
		(120, 8816)
		(140, 9115)
		(160, 9298)
		(180, 9346)
		(200, 9277)};
	]
	
	\addplot[line join=round,mark = star,color = Layer_10]
coordinates {(5, 11948)
	(20, 8691)
	(40, 7809)
	(60, 8202)
	(80, 8323)
	(100, 8322)
	(120, 8671)
	(140, 8688)
	(160, 8799)
	(180, 8846)
	(200, 9196)};
		\end{axis}
\end{tikzpicture}}
\end{subfigure}
	\begin{subfigure}[t]{0.32\textwidth}
	\scalebox{0.5}{\begin{tikzpicture}
\begin{axis}[
		nodes near coords,
		every other node near coord/.append style={font=\tiny},
		legend columns=4,height=8cm,width = 10cm,
		mark repeat=2,
		xlabel = Number of prototypes,
		ylabel = True positive classification rate,
		xmin=0,
		xmax=200,
		scaled ticks=false,
		/pgf/number format/.cd,
		use comma,
		1000 sep={}
		]
	\addplot[line join=round,mark = star,color = Layer_3]
		coordinates {(05,9035) (20,13959) (40,15868) (60,16159) (80,16912) 
		(100,16681) (120,16840) (140,17341) (180,17747) (200,17768)};
	]
	\addplot[line join=round,mark = star,color = Layer_7]
	coordinates {(05,13811) (20,32061) (40,34210) (60,32958) 
		(80,34231) (100,34503) (120,34158) (140,35486) (180,34852) (200,35263)};
	]
	\addplot[line join=round,mark = star,color = Layer_8]
	coordinates {(05,12811) (20,16590) (40,19910) (60,20523) (80,20769) 
		(100,21209) (120, 21705) (140,21717) (180,21722) (200,21832)};
	]
	\addplot[line join=round,mark = star,color = Layer_9]
	coordinates {(05,5496) (20,10595) (40,14192) (60,11519) (80,14222) 
	(100,16159)
		(120,14570) (140,15837) (160,15558) (180,15498) (200,17469)};	
		\end{axis}
\end{tikzpicture}}
	\end{subfigure}
	\begin{subfigure}[t]{0.32\textwidth}
		\scalebox{0.5}{\begin{tikzpicture}
\begin{axis}[nodes near coords,
every other node near coord/.append style={font=\tiny, inner ysep=0.5pt},
legend columns=4,height=8cm,width = 10cm,
		legend columns=4,
		mark repeat=2,
		xlabel = Number of prototypes,
		ylabel = True positive classification rate,
		xmin=0,
		xmax=200,
		scaled ticks=false,
		/pgf/number format/.cd,
		use comma,
		1000 sep={}
		]
	\addplot[line join=round,mark = star,color = Layer_3]
		coordinates {(5, 8609)
			(20, 11510)
			(40, 15289)
			(60, 15643)
			(80, 15959)
			(100, 16227)
			(120, 16173)
			(140, 16851)
			(160, 17103)
			(180, 17228)
			(200, 17437)};
	]
	]	
	\addplot[line join=round,mark = star,color = Layer_7]
	coordinates {(5, 26229) 
		(20, 27770)
		(40, 30841)
		(60, 32965)
		(80, 33966)
		(100, 33281)
		(120, 33979)
		(140, 34497)
		(160, 34296)
		(180, 33068) (200,34194)};
	]	
	\addplot[line join=round,mark = star,color = Layer_8]
	coordinates {(05,14022) (20,16674) (40,19228) (60,20670) (80,20599) 
		(100,20706) (120, 20522) (140,20541) (180,20635) (200,20707)};
	]
	\addplot[line join=round,mark = star,color = Layer_9]
	coordinates {(05,9358) (20,9765) (40,8957) (60,13784) (80,15851) 
	(100,15963)
		(120,16990) (140,16970) (160,16835) (180,17099) (200,17236)};
		\end{axis}
\end{tikzpicture}}
	\end{subfigure}
	\begin{subfigure}[t]{0.32\textwidth}
		\scalebox{0.5}{\begin{tikzpicture}
\begin{axis}[nodes near coords,
every other node near coord/.append style={font=\tiny},
legend columns=4,height=8cm,width = 10cm,
		legend columns=4,
		mark repeat=2,
		xlabel = Number of prototypes,
		ylabel = True positive classification rate,
		xmin=0,
		xmax=200,
			scaled ticks=false,
	/pgf/number format/.cd,
	use comma,
	1000 sep={}
		]
	\addplot[line join=round,mark = star,color = Layer_3]
		coordinates {(5, 9825)
			(20, 10945)
			(40, 15643)
			(60, 16768)
			(80, 16879)
			(100, 16988)
			(120, 16968)
			(140, 16058)
			(160, 17094 )
			(180, 17133)
			(200, 17206)};
	]
	
	\addplot[line join=round,mark = star,color = Layer_7]
	coordinates {(5, 12378)
		(20, 31535)
		(40, 34411)
		(60, 33559)
		(80, 33107)
		(100, 33361)
		(120, 33692)
		(140, 35194)
		(160, 34679)
		(180, 33886)
		(200, 33503)};
	]
	\addplot[line join=round,mark = star,color = Layer_8]
	coordinates {(5, 20070)
		(20, 17032)
		(40, 21916)
		(60, 21951)
		(80, 22063)
		(100, 22039)
		(120, 22304)
		(140, 22472)
		(160, 22399)
		(180, 22425)
		(200, 22146)};
	]
	\addplot[line join=round,mark = star,color = Layer_9]
	coordinates {(5, 5542)
		(20, 16249)
		(40, 13323)
		(60, 14673)
		(80, 14697)
		(100, 14942)
		(120, 15699)
		(140, 15748)
		(160, 15550)
		(180, 15653)
		(200, 15798)};	
		\end{axis}
\end{tikzpicture}}
	\end{subfigure}
\captionsetup{font=footnotesize}
\caption{\textbf{Top}: Metric classification evaluated on thin layers  
(IPL,INL,OPL,PR2). \textbf{Bottom}: Analogous metric evaluation for 
(GCL,ONL,PR1,RPE). \textbf{From left to right}: The number of true 
outcomes after direct comparison with ground truth, for the choice of the exact 
Riemannian geometry of $\mc{P}_d$, Stein divergence and Log-Euclidean distance 
for geometric mean 
computation. The results of first two columns indicate higher detection 
performance while respecting the Riemannian geometry of a curved manifold.   
 Enlarging the set of prototypical covariance descriptors leads to increased 
 matching accuracy which is in contrast to the observed flattening of matching 
 curves when using the Log-Euclidean distance.
 } 
\label{fig:Metric_Classification}
\end{figure}
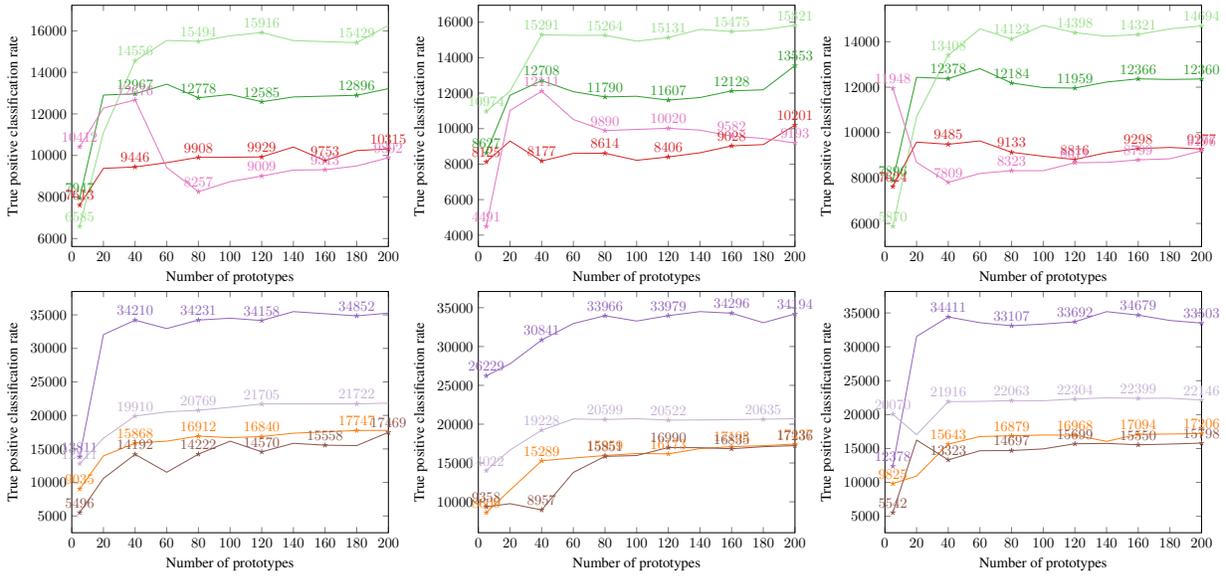

\begin{figure}[ht] 
	\begin{subfigure}[t]{0.48\textwidth}
		\begin{tikzpicture}
		\node[anchor=south,scale=1] 
	at (0,0) 
	{\includegraphics[width=8cm,height=5cm]{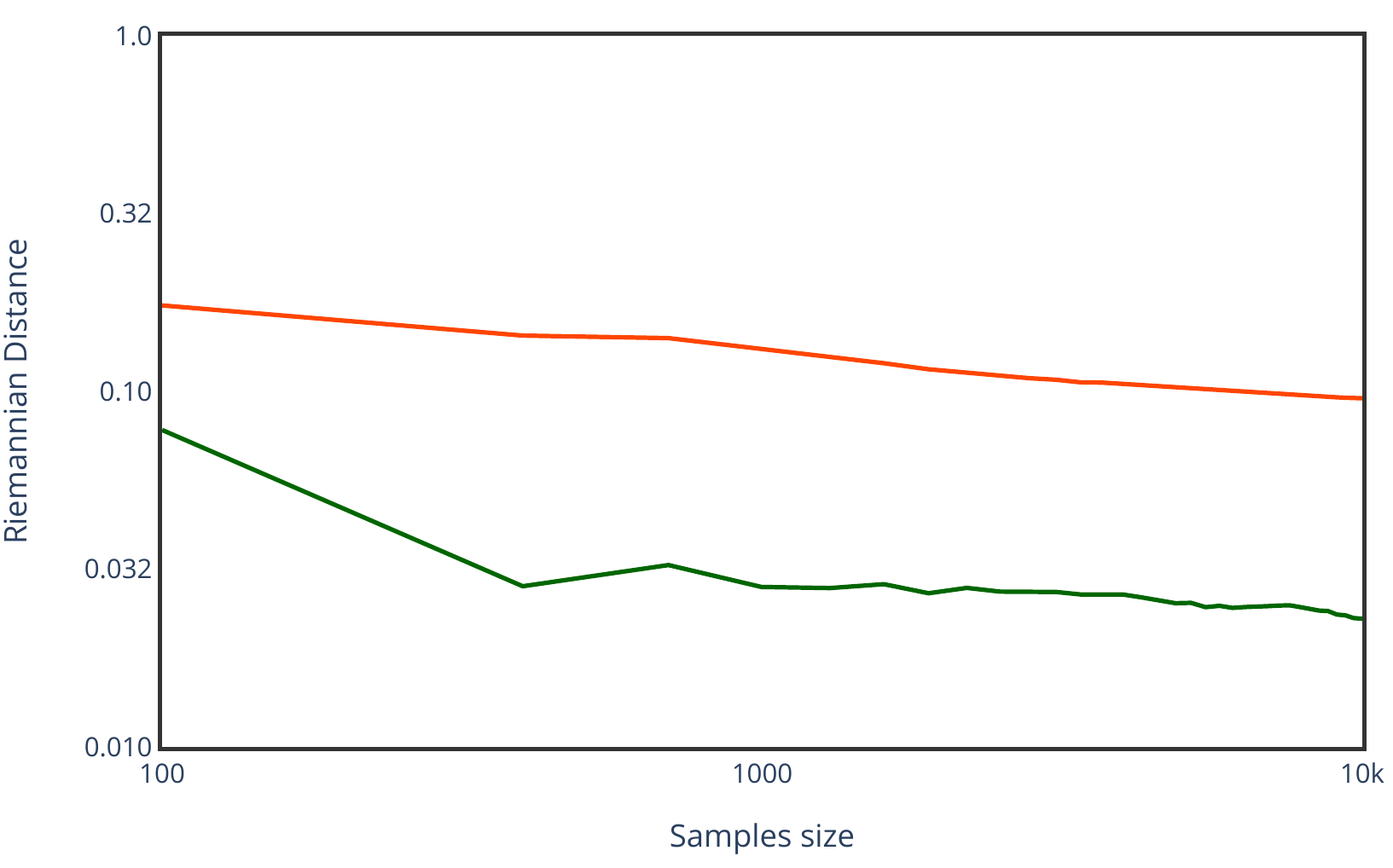}}; 
	\node[anchor=south,scale=0.5] 
	at (2.87,4.38)
	{
		\renewcommand\arraystretch{1}
		\begin{tabular}{|cp{2.8cm}|}
		\hline
		& \normalsize $d_R(M_{R},M_{Stein})$ 
		\\
		& \normalsize $d_R(M_{R},M_{Euclid})$ \\
		\hline
		\end{tabular}
	};
	\path[draw,thick, color = Stein_line] (2.1,4.8) -- (2.25,4.8);
	\path[draw,thick, color = Euc_line]   (2.1,4.58) -- (2.25,4.58);
		\end{tikzpicture}
	\end{subfigure}
	\begin{subfigure}[t]{0.48\textwidth}
		\begin{tikzpicture}
		\node[anchor=south,scale=1] 
		at (5,0) 
		{\includegraphics[width=8cm,height=5cm]{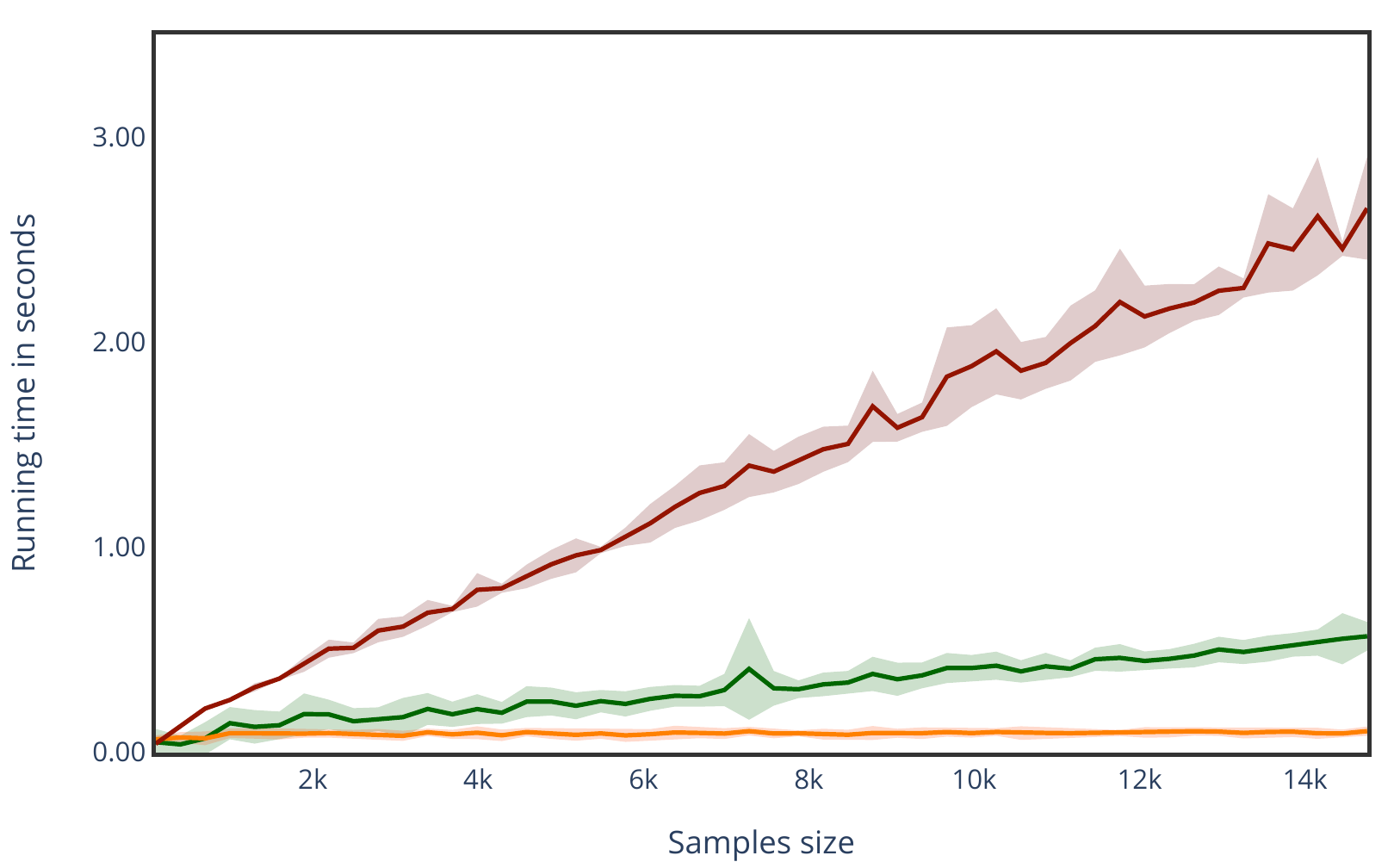}}; 
		\node[anchor=south,scale=0.5] 
		at (7.9,4.15)
		{
			\renewcommand\arraystretch{1}
			\begin{tabular}{|cp{2.8cm}|}
			\hline
			& \normalsize \textit{\hspace{0.5cm} Riemann} 
			\\
			& \normalsize \textit{\hspace{0.5cm} Log-Euclid}  \\
			& \normalsize \textit{\hspace{0.5cm} Stein}  
			\\
			\hline
			\end{tabular}
		};
		\path[draw,thick, color = Riemann] (7.2,4.8) -- (7.5,4.8);
		\path[draw,thick, color = orange]   (7.2,4.58) -- (7.5,4.58);
		\path[draw,thick, color = Stein_line] (7.2,4.36) -- (7.5,4.36);
		\end{tikzpicture}
	\end{subfigure}
	\captionsetup{font=footnotesize}
	\caption{\textbf{Left}: 
		Deviation of the geometric means computed using the Log-Euclidian metric and Stein divergence, respectively, from the true Riemannian mean.
		\textbf{Right}: Runtime for geometric mean computation using the different 
		metrics. All evaluations were performed on a randomly chosen subset of 
		covariance descriptors representing the retinal nerve fibre layer in a 
		real-world OCT scan. Both graphics clearly highlight the advantages of 
		using Stein the divergence in terms of approximation accuracy and efficient 
		numerical computation.}
	\label{fig:Running_Time}
\end{figure}

This illustrates a tradeoff between computational effort and labeling 
performance, cf. Figure \ref{fig:Running_Time}. Note that prototypes are 
computed offline, making runtime performance less relevant to medical 
practitioners. However, building a distance matrix involves computing 
$n\sum_{l\in [c]} K_l$ Riemannian distances resp. Stein divergences to 
prototypes. This still leads to a large difference in (online) runtime since 
evaluation of the Riemannian distance \eqref{eq:def-g-mcPd} involves 
generalized eigendecomposition while less costly Cholesky decomposition 
suffices to evaluate the Stein divergence \eqref{Stein_div}. 

Summarizing the discussed results concerning the application of Algorithm 
\ref{Algorithm 
	Riemannian Mean} and Algorithm \ref{Algorithm Riemannian Mean S}, we 
	point out that respecting the Riemannian geometry leads to superior 
	labeling results providing more descriptive prototypes. 
\begin{figure}[ht] 
	\begin{subfigure}[t]{0.49\textwidth}
		\centering
		\includegraphics[width=8.2cm,height=4.1cm]{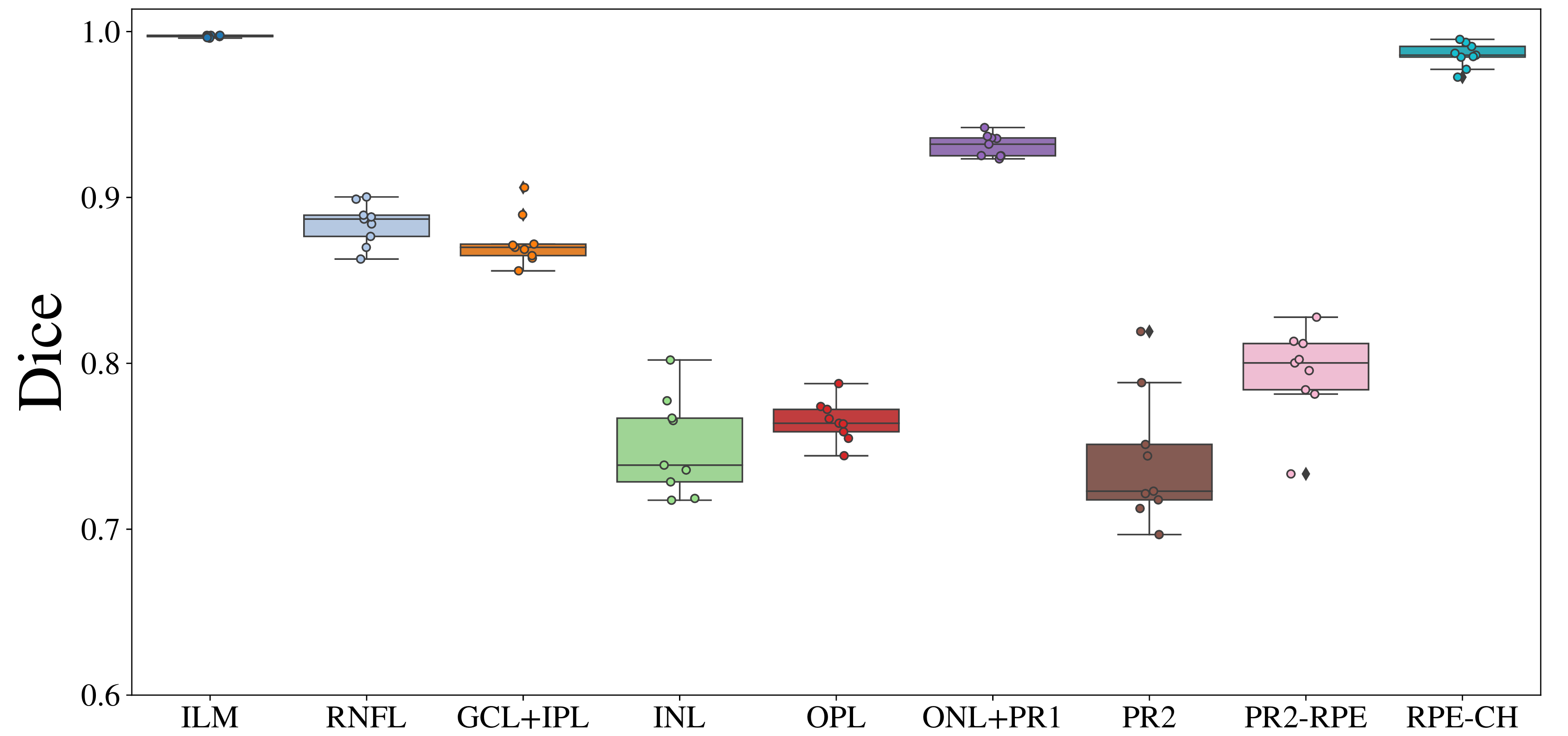}
	\end{subfigure}
	\begin{subfigure}[t]{0.49\textwidth}
		\centering
		\includegraphics[width=8.2cm,height=4.1cm]{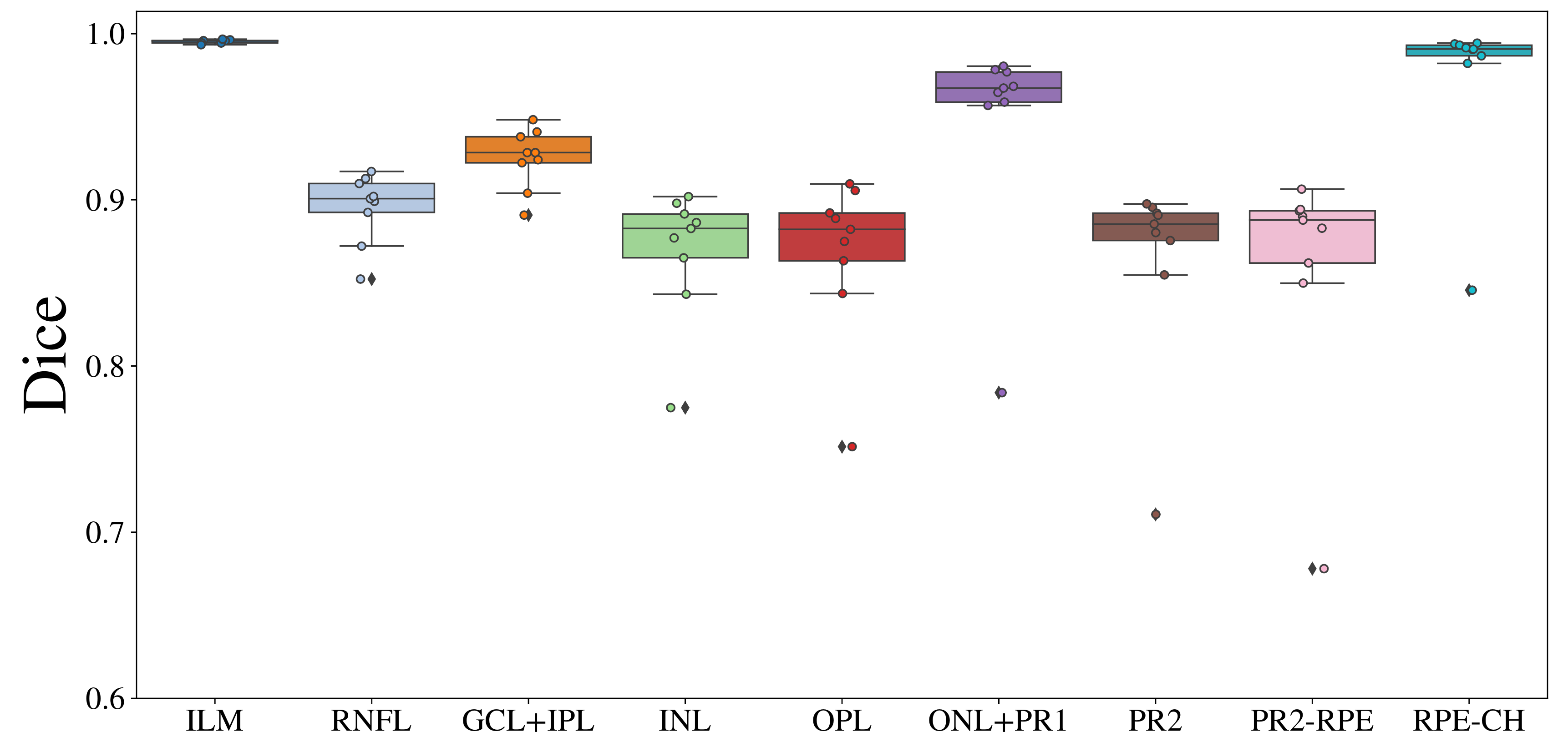}
	\end{subfigure}
	\captionsetup{font=footnotesize}
	\caption{Box plots of DICE similarity coefficients between computed 
	segmentation results and manually labeled ground truth. \textbf{Left}: 
	Probabilistic approach \eqref{Graphical_Model} proposed in 
	\cite{Rathke2014}.
	\textbf{Right}: OAF based on CNN features. See Table \ref{tab:Table_Rathke} 
	for mean and standard deviations. Direct comparison shows a notably higher 
	detection performance for segmenting the intraretinal layers using OAF (B).}
	\label{fig:Rathke_Boxplot}
\end{figure}

\begin{figure}[ht] 
	\begin{subfigure}[t]{0.49\textwidth}
		\centering
		\includegraphics[width=8.2cm,height=4.1cm]{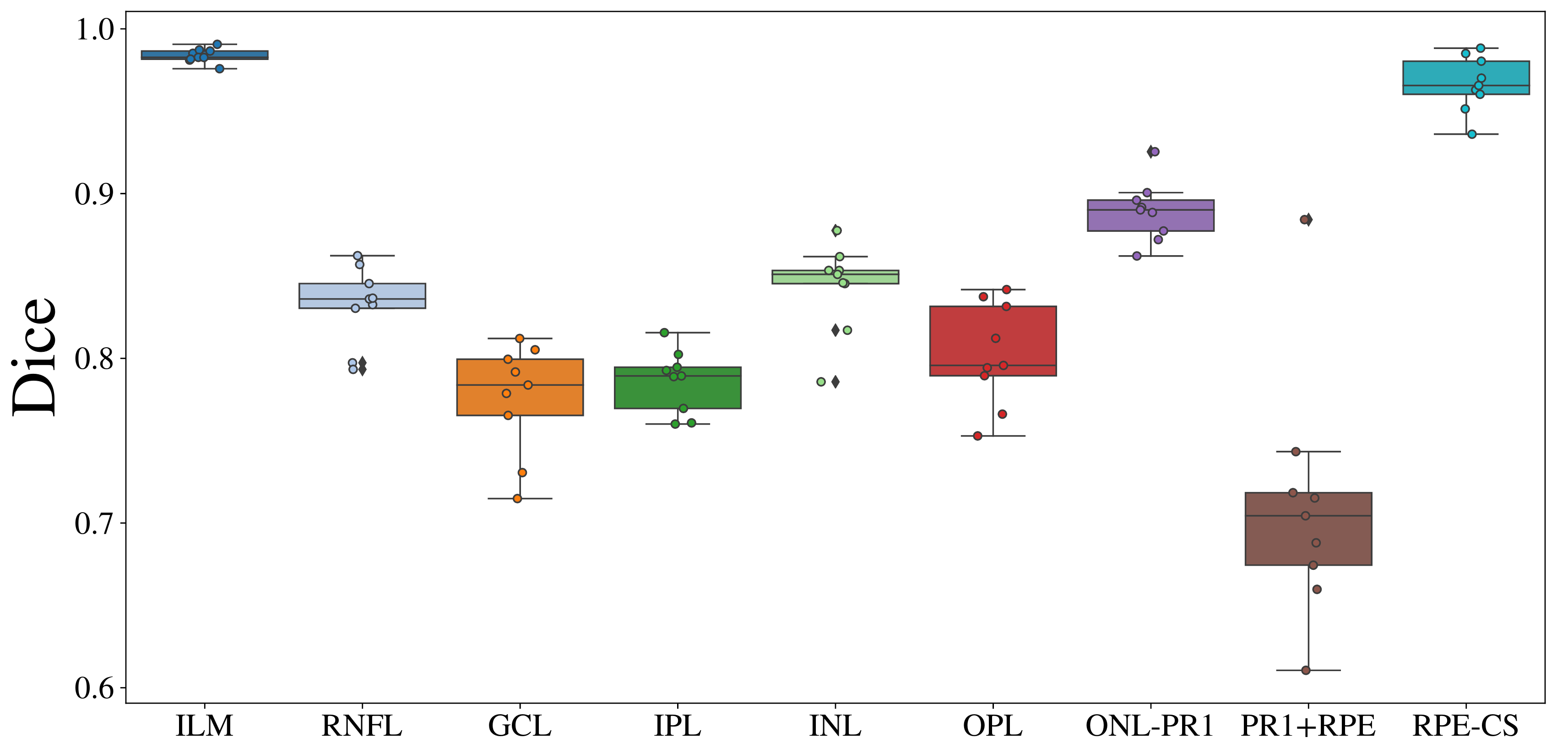}
	\end{subfigure}
	\begin{subfigure}[t]{0.49\textwidth}
		\centering
		\includegraphics[width=8.2cm,height=4.1cm]{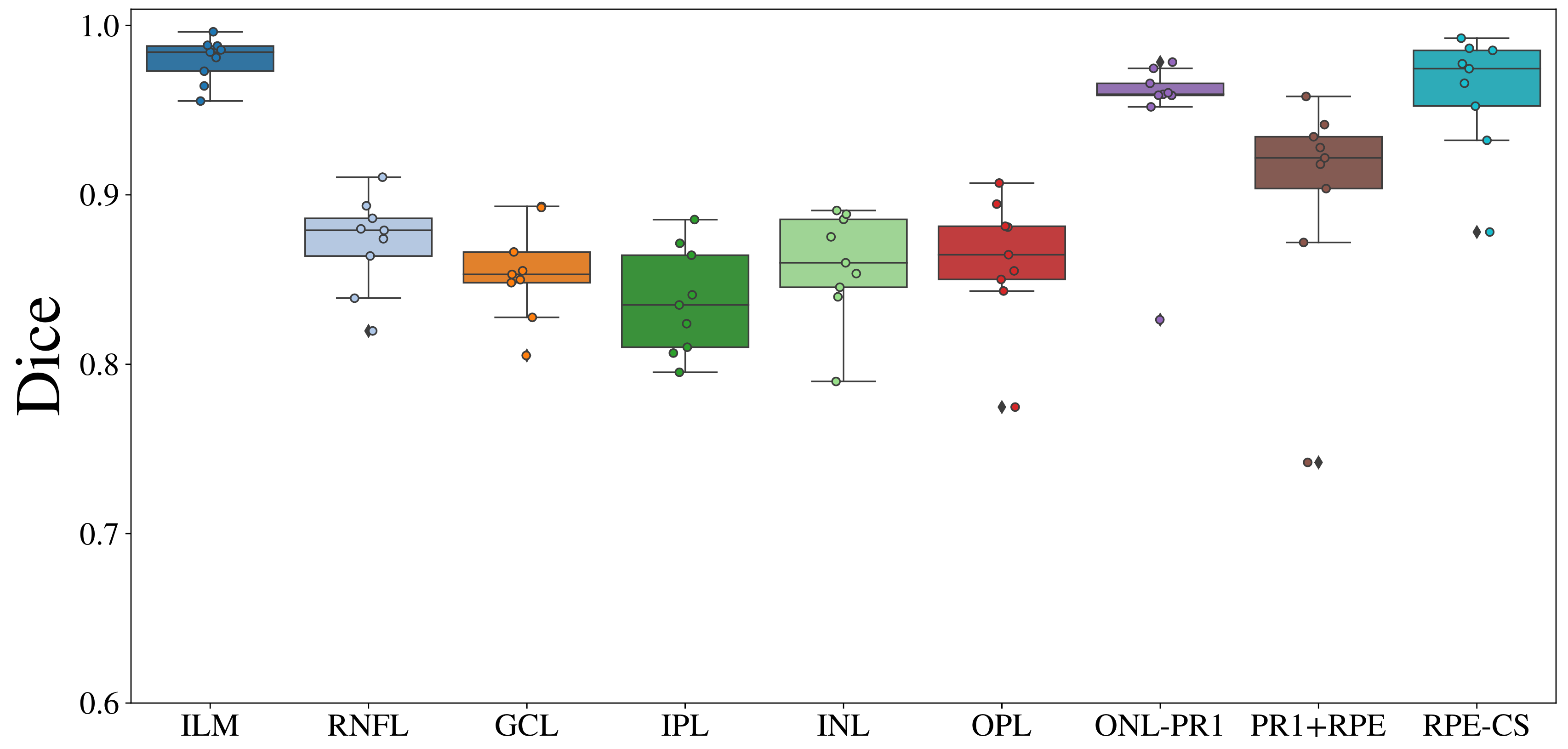}
	\end{subfigure}
    \captionsetup{font=footnotesize}
	\caption{Box plots of DICE similarity coefficients between computed 
	segmentation results and manually labeled ground truth. \textbf{Left}: IOWA 
	reference algorithm \cite{IOWA}.
	\textbf{Right}: OAF based on CNN features. See Table \ref{tab:Table_IOWA} 
	for mean and standard deviations. Exploiting OAF (B) for retina tissue 
	classification results in improved overall layer detection performance, 
	especially for the PR2-RPE region.}\label{fig:IOWA_Boxplot}
\end{figure}

\begin{figure}[ht!] 
	\begin{subfigure}[t]{0.49\textwidth}
		\centering
		\includegraphics[width=8.2cm,height=4.1cm]{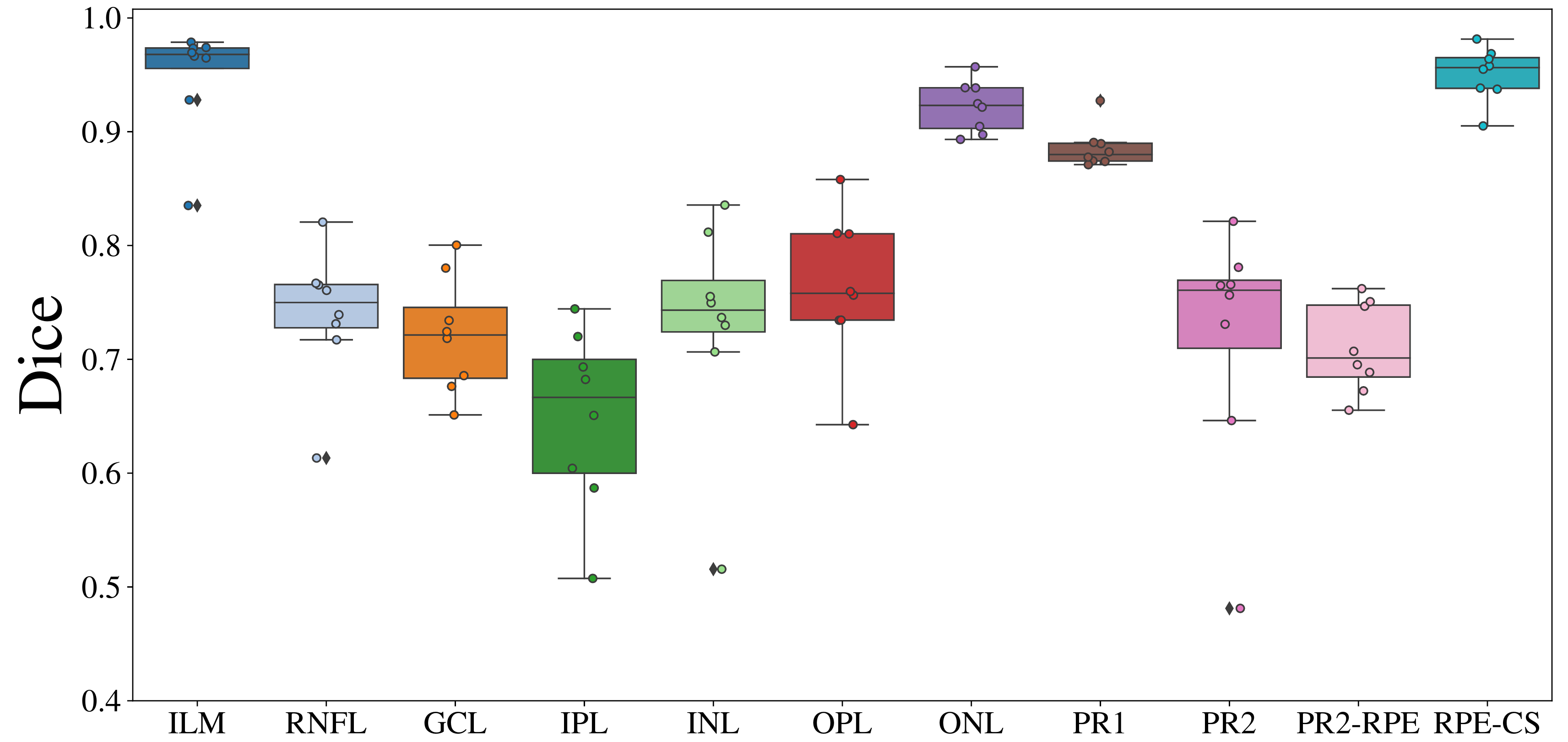}
	\end{subfigure}
	\begin{subfigure}[t]{0.49\textwidth}
		\centering
		\scalebox{1}{\includegraphics[width=8.2cm,height=4.1cm]{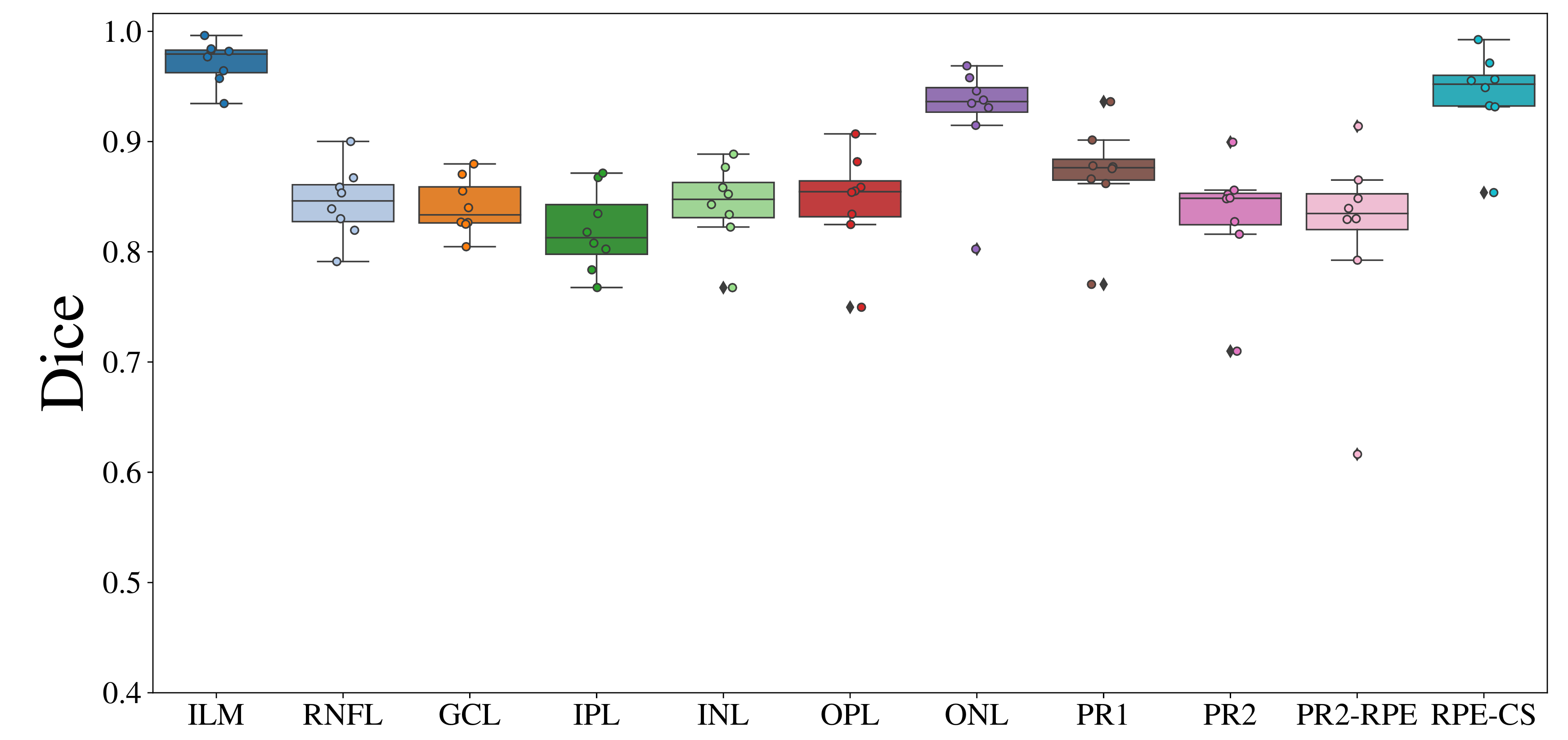}}
	\end{subfigure}
	\captionsetup{font=footnotesize}
	\caption{Box plots of DICE similarity coefficients between computed 
	segmentation results and manually labeled ground truth. \textbf{Left}: OAF 
	(A).
	\textbf{Right}: OAF (B). The OAF based on CNN features yields improved segmentations for all 
	retina layers.}
\label{fig:Cov_Descriptor_Boxplot}
\end{figure}


\subsubsection{CNN Features}\label{sec:experiments_cnn}
In addition to the covariance features in 
Section \ref{sec:experiments_cov_descr}, we
compare a second approach to local feature extraction based on a convolutional 
neural network
architecture. For each node $i\in [n]$, we trained the network to directly 
predict the correct class in
$[c]$ using raw intensity values in $\mc{N}_i$ as input. As output, we find a 
score for each layer
which can directly be transformed into a distance vector suitable as input to 
the ordered assignment flow \eqref{eq:ordered_af} via 
\eqref{eq:ordering_likelihood}.
The specific network used in our experiments has a ResNet architecture 
comprising four residually 
connected blocks of 3D convolutions and ReLU activation. Model size was 
hand-tuned for different sizes of input neighborhoods,
adjusting the number of convolutions per block as well as corresponding channel 
dimensions. In particular, labeling accuracy is increased for the detection of RPE 
and PR2 
layers, as illustrated in the last row of Figure \ref{Labeling_Cov}.

\subsection{Segmentation via Ordered Assignment}

By numerically integrating the ordered assignment flow \eqref{def:ordered_af} 
parametrized by the distance matrix $D$, an assignment state $W$ is evolved on 
$\mc{W}$ until mean entropy of pixel assignments is low. We specifically use 
geometric Euler integration steps on $T\mc{W}$ with a constant step-length of 
$h=0.1$ (see \cite{Zeilmann:2020aa} for details of this process). 
Geometric averaging with uniform weights leads to local regularization of 
assignments which smooths regions in which the features do not 
conclusively point to any label. More global knowledge about the ordering of 
cell layers is incorporated into $E_\text{ord}$ which addresses more severe 
inconsistencies between local features and global ordering.
In all experiments, the neighborhood of each voxel $i\in [n]$ is choosen as the 
voxel patch of size $5$ x $5$ x $3$ centered at $i$.

\subsection{Evaluation}

To benchmark our novel segmentation approach, we first extract local features 
for each voxel from a raw OCT volume. As described above, 
either region covariance descriptors (Section \ref{sec:experiments_cov_descr}) 
or class scores predicted by a CNN (Section \ref{sec:experiments_cnn}) are 
computed for segmenting the retina layers with ordered assignment flow which we 
in the following abbreviate as \textbf{OAF (A)} and \textbf{OAF (B)} 
respectively. 
In the former case, a set of $k=400$ prototypical cluster centers on the 
positive definite cone \eqref{eq:def-mcPd} has been determined offline for each 
cell layer. These 
are compared to descriptors extracted from the unseen volume by computing the 
pair-wise distance with respect to the metric induced by Stein divergence 
\eqref{sec:S-distance}. The minimum value corresponding to the lowest of all 
the distances for each pair of voxel 
$i\in [n]$ and cell layer $j\in [c]$ is noted as entry $d_{ij}$ of the distance 
matrix $D_\text{cov}$, i.e. for every voxel $i$ the distance to optimal fitting 
representative to layer $j$ is given by  
\begin{align}
	(D_{\text{cov}})_{ij} := \min_{k \in [400]} D_S(S_i,\tilde{S}_{j}^k).
\end{align}
In the latter case, class scores $C\in \R^{n\times c}$ predicted by the 
neuronal network \eqref{sec:experiments_cnn} are transformed into a distance 
matrix $D_\text{cnn} = -C$ simply by switching their sign followed by adjusting 
the parameter \eqref{schnoerr-eq:def-Li} to weight the data relevance in the 
likelihood matrix.

A naive way to segment the volume in accordance to the data is by choosing 
$\arg\min_{j\in [c]} D_{ij}$ for each voxel $i$. However, due to the 
challenging signal-to-noise ratio in real-world OCT data, classes will not 
usually be well-separated in the feature space at hand. The resulting 
uncertainty pertaining to the assignment of classes using exclusively local 
features is encoded into each distance matrix. To assess the segmentation 
performance of our proposed approach, we first compared 
to the state of the art graph-based retina segmentation method of 10 
intra-retinal layers developed by the Retinal Image Analysis Laboratory at the 
Iowa Institute 
for Biomedical Imaging \cite{Li:2006aa,Abramow:2010aa,Garvin:2008aa}, also 
referred to as the IOWA Reference Algorithm. We quantify the region agreement 
with manual segmentation regarded as gold standard. 
Since both the augmented 
volumes and the compared reference methods determine boundary locations of 
retina layers intersections, we first transfer the retina surfaces to a layer 
mask by rounding to the voxel size and assign to voxels within each A-scan the 
associated layer label, starting from the observed boundary to the location of 
the next detected intersection surface of two neighboring layers. 

Specifically, we calculate 
the DICE similarity coefficient \cite{Dice:1945aa} and the mean absolute error 
for segmented cell layers within the pixel size of \mum{3.87} compared to human 
grader by segmenting 8 OCT volumes consisting of $61$ B-scans. We first 
directly compare the performance accuracy of using the local features given by 
the covariance descriptor \eqref{sec:experiments_cov_descr} by constructing a 
dictionary with $400$ prototypes for each retina layer using the iterative 
clustering with \eqref{EM} against the features extracted from an CNN net 
\eqref{sec:experiments_cnn}.

\begin{figure}[ht]
	\begin{adjustwidth}{-4.5cm}{}
		\centering
		{\raisebox{10mm}{\begin{subfigure}[t]{1.6em}
					\caption[singlelinecheck=off]{}
			\end{subfigure}}\ignorespaces}
		\begin{subfigure}[t]{0.7\textwidth}
			\centering
			\scalebox{1}{\includegraphics[height=0.2\textwidth,width=1.4\textwidth]{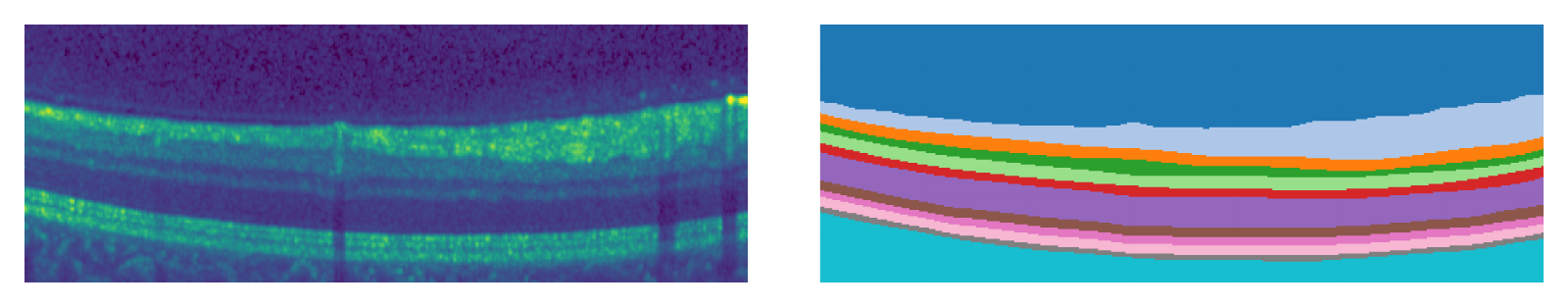}
}		\end{subfigure}\\
{\raisebox{10mm}{\begin{subfigure}[t]{1.6em}
			\caption[singlelinecheck=off]{}
	\end{subfigure}}\ignorespaces}
		\begin{subfigure}[t]{0.7\textwidth}
			\centering
			\scalebox{1}{\includegraphics[height=0.2\textwidth,width=1.4\textwidth]{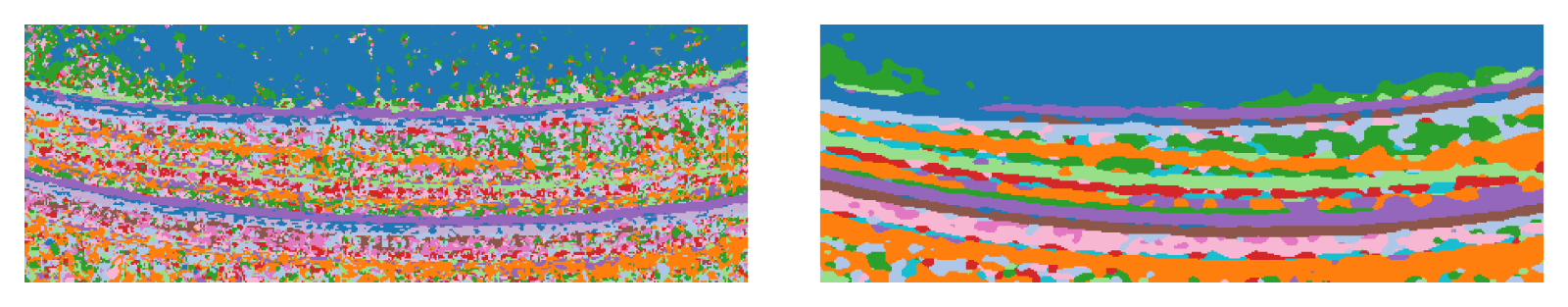}\hfill
}		\end{subfigure}\\
{\raisebox{10mm}{\begin{subfigure}[t]{1.6em}
			\caption[singlelinecheck=off]{}
	\end{subfigure}}\ignorespaces}
		\begin{subfigure}[t]{0.7\textwidth}
			\centering
			\scalebox{1}{\includegraphics[height=0.2\textwidth,width=1.4\textwidth]{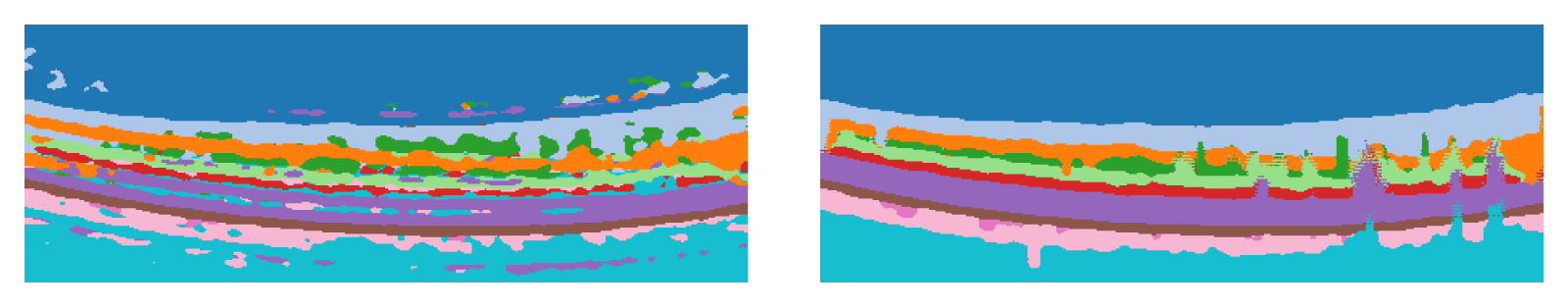}\hfill
}		\end{subfigure}\\
{\raisebox{10mm}{\begin{subfigure}[t]{1.6em}
			\caption[singlelinecheck=off]{}
	\end{subfigure}}\ignorespaces}
		\begin{subfigure}[t]{0.7\textwidth}
			\centering
			\scalebox{1}{\includegraphics[height=0.2\textwidth,width=1.4\textwidth]{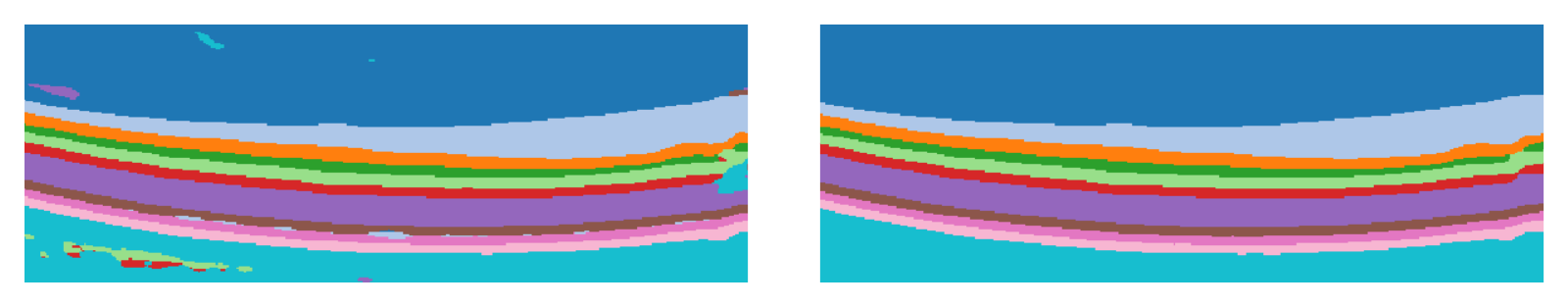}\hfill
}		\end{subfigure}
	\end{adjustwidth}
	\captionsetup{font=footnotesize}
	\caption{\textbf{From top to bottom: Row} \textbf{(a)}: One B-scan from a
	OCT-volume 
	showing the shadow effect,  
		with ground truth plot on the right. \textbf{Row} \textbf{(b)}: Local  
		nearest 
		neighbor 
		assignments based on prototypes by minimizing \eqref{K_means} computed 
		with Stein divergence, with the 
		result of the segmentation returned by the basic  assignment flow 
		(Section \ref{sec:Assignment-Flow}) on the right. \textbf{Row} 
		\textbf{(c)}: 
		Proposed \textit{layer-ordered} volume segmentation 
		based on covariance descriptors. From left to right:  ordered volume 
		segmentation for 
		different $\gamma = 0.5,\gamma = 0.1$ (cf.~Eq.~\eqref{gamma_par}). 
		\textbf{Row} \textbf{(d)}: Local rounding result extracted from 
		Res-Net on the left
		and the result of the ordered assignment flow on the right.}
	\label{Labeling_Cov}
\end{figure}

The experimental results discussed next illustrate the relative influence of 
the covariance descriptors  \eqref{Cov_Des_Feature} and regularization property 
of the ordered assignment flow, respectively.
Throughout, we fixed the grid 
connectivity $\mc{N}_{i}$ for each voxel $i\in\mc{I}$ to $3 \times 5 \times 5$. 
Figure \ref{Labeling_Cov} illustrates real-world labeling performance based 
on extracting a dictionary of $400$ prototypes per layer 
by minimizing \eqref{K_means} and employing
Algorithm  \ref{Algorithm Riemannian Mean S} for mean retrieval. Second row in 
Figure \ref{Labeling_Cov}, illustrates a typical result of nearest 
neighbor 
assignment and the volume segmentation \textit{without} ordering constraints. 
As inspected, the high texture similarity between the choroid and GCL  
layer yields wrong predictions resulting in violation of biological retina 
ordering through the whole volume which cannot be resolved with the based 
assignment flow approach given in Section \ref{sec:Assignment-Flow}. On the 
other side using pairwise correlations captured by covariance matrices provides 
accurate detection of large signal intensity internal limiting membrane (ILM) 
with its characteristic highly reflective 
boundary as well to meaningful segmentation of light rejecting fiber layers 
RNFL, PR1 and RPE. For the particularly challenging inner layers such as 
GCL, INL and ONL that are 
mainly comprised of weakly reflective neuronal cell bodies, regularization by 
imposing \eqref{eq:a_scan_order_energy} is required. In the third 
row of Figure \ref{Labeling_Cov}, we plot the 
\textit{ordered} volume segmentation by stepwise increasing the parameter 
$\gamma$ defined in \eqref{gamma_par}, which controls the ordering 
regularization 
by means of the novel generalized likelihood matrix 
\eqref{eq:ordering_likelihood}. 
The direct comparison with the ground truth  remarkably shows how the 
ordered labelings evolve on the assignment manifold while simultaneously 
giving accurate data-driven detection of RNFL, OPL, INL and the ONL layer. 
For the remaining critical inner layers, the local prototypes extracted by 
\eqref{K_means} fail to segment the retina layers properly and are still 
revealing artifacts due to the presence of vertical shadow 
regions caused by existing blood vessels, which contribute to a loss of the 
interference signal during the scanning process of the OCT-data, as shown in 
Figure 
\ref{Labeling_Cov}.

\begin{figure}[t!]
	\begin{adjustwidth}{-4.5cm}{}
		\centering
		{\raisebox{-10mm}{\begin{subfigure}[t]{1.6em}
					\caption[singlelinecheck=off]{}
			\end{subfigure}}\ignorespaces}
		\begin{subfigure}[t]{0.7\textwidth}
			\centering 
			\scalebox{1}{\includegraphics[height=0.2\textwidth,width=1.4\textwidth,valign=t]{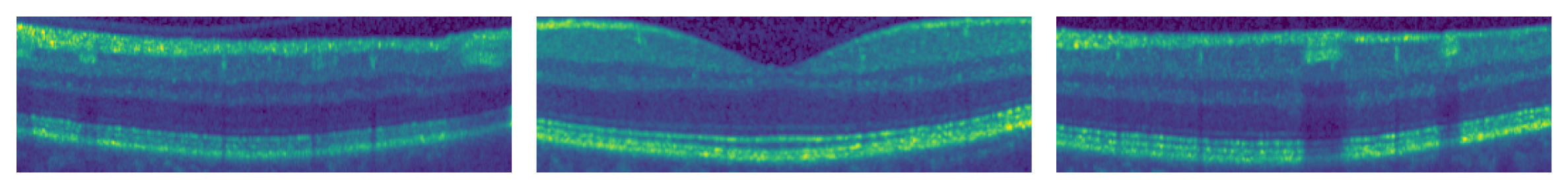}
		}		\end{subfigure}\\
		{\raisebox{10mm}{\begin{subfigure}[t]{1.6em}
					\caption[singlelinecheck=off]{}
			\end{subfigure}}\ignorespaces}
		\begin{subfigure}[t]{0.7\textwidth}
			\centering
			\scalebox{1}{\includegraphics[height=0.2\textwidth,width=1.4\textwidth]{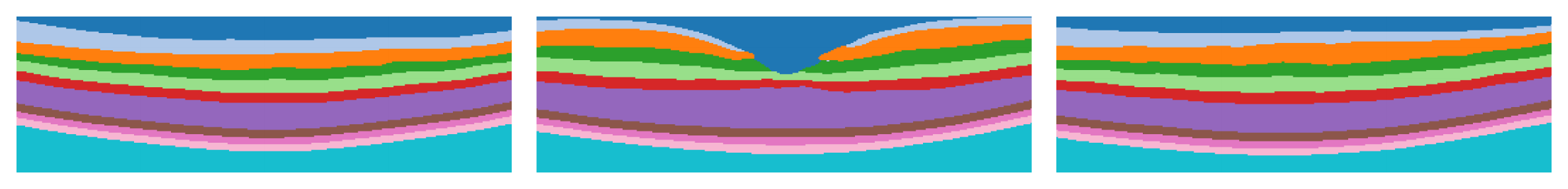}\hfill
		}		\end{subfigure}\\
		{\raisebox{10mm}{\begin{subfigure}[t]{1.6em}
					\caption[singlelinecheck=off]{}
			\end{subfigure}}\ignorespaces}
		\begin{subfigure}[t]{0.7\textwidth}
			\centering
			\scalebox{1}{\includegraphics[height=0.2\textwidth,width=1.4\textwidth]{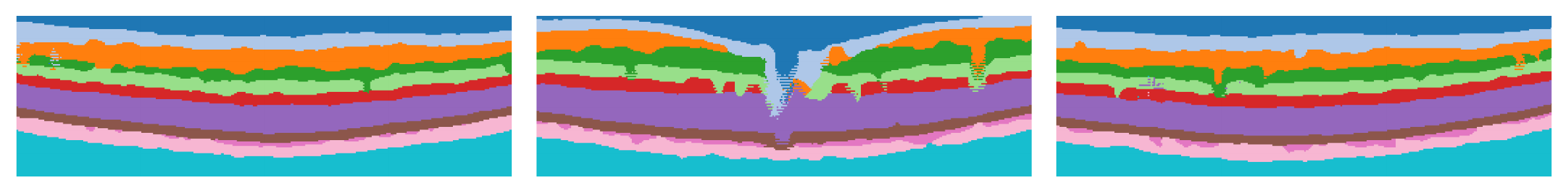}\hfill
		}		\end{subfigure}\\
		{\raisebox{10mm}{\begin{subfigure}[t]{1.6em}
					\caption[singlelinecheck=off]{}
			\end{subfigure}}\ignorespaces}
		\begin{subfigure}[t]{0.7\textwidth}
			\centering
			\scalebox{1}{\includegraphics[height=0.2\textwidth,width=1.4\textwidth]{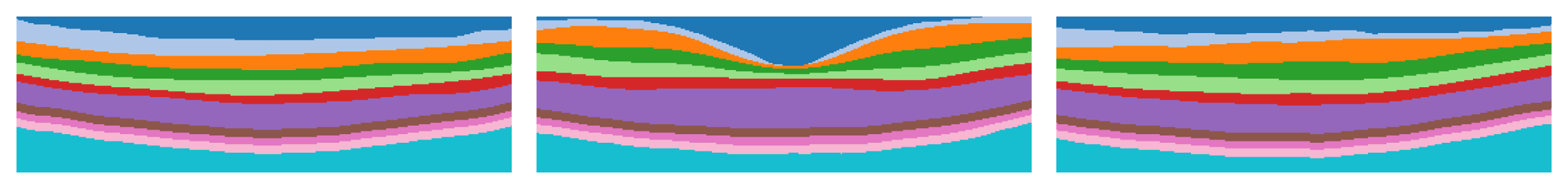}\hfill
		}		\end{subfigure}
	\end{adjustwidth}
	\captionsetup{font=footnotesize}
	\caption{\textbf{From top to bottom: Row (a)}: Three sample B-Scans 
		extracted 
		for different locations from a 
		healthy 
		OCT volume with 61 scans, with the fovea centered OCT scan visualized 
		in 
		the middle 
		column. \textbf{Row (b)}: The associated augmented labeling. 
		\textbf{Row (c)}: OAF (A) segmentation 
		using a dictionary of covariance descriptors determined by \eqref{EM}. 
		\textbf{Row (d)}: 
		OAF (B) segmentation using features determined the CNN network. In 
		contrast to 
		to results achieved by OAF (A), the above visualization indicates more 
		accurate detection of retina boundaries using OAF (B), in particular 
		near 
		the fovea region (middle column).}
	\label{Labeling_11_Layer}
\end{figure}

 After segmentation of the test data set, the mean and  
standard deviation were calculated for better assessment of the retina layer 
detection accuracy of the proposed segmentation method, according to the 
performance measures \eqref{MAE} and \eqref{DICE}. The evaluation results for 
each retina tissue as depicted in Figure \ref{OCT_Acquisition}, are
detailed in Table \ref{tab:Table}. The first row of Figure \ref{fig:Cov_Deep_Barplot} clearly shows the superior detection accuracy of 
utilizing the Ordered Assignment Flow for the first outer retina layers 
(RNFL, GCL, IPL, INL) and the (PR2-RPE) region in connection with local features 
extracted from an CNN net \eqref{sec:experiments_cnn}. Nonetheless, the 
covariance descriptor achieves comparable results for characterization of the 
outer plexiform layer (OPL) and exhibits increased retina detection regarding 
the photoreceptor region (PR1,PR2) and outer nuclear region (ONL). Additionally, 
Table \ref{tab:Table} includes the evaluation based on 
DICE similarity which is less sensitive to outliers and serves as an 
appropriate metric for calculating the performance measures across large 3D 
volumes. To obtain a consistent and clear comparability between the involved 
features on which we rely to tackle the specific problem of retina layer 
segmentation, the corresponding results are visualized in Figure 
\ref{fig:Cov_Descriptor_Boxplot}. The graphic illustrates higher Dice 
similarity and relative small standard deviation when incorporating features 
\eqref{sec:experiments_cnn} as input to 
our model, which characterizes their superior informative content. According to 
the left plot, the covariance descriptor 
performs well for detecting the prototypical textures of the internal limiting 
membrane (ILM), the (ONL) and (PR1) layers as well as the RPE boundary to the 
choroid section. Especially this highlights the ability of using gradient 
based features for accurate detection of retina tissues indicating sharp 
contrast between the neighboring layers, as is the case for ONL and PR1.

\begin{figure}[ht!]
	\begin{adjustwidth}{-4.5cm}{}
		\centering
		{\raisebox{10mm}{\begin{subfigure}[t]{1.6em}
					\caption[singlelinecheck=off]{}
			\end{subfigure}}\ignorespaces}
		\begin{subfigure}[t]{0.7\textwidth}
			\centering
			\scalebox{1}{\includegraphics[height=0.2\textwidth,width=1.4\textwidth]{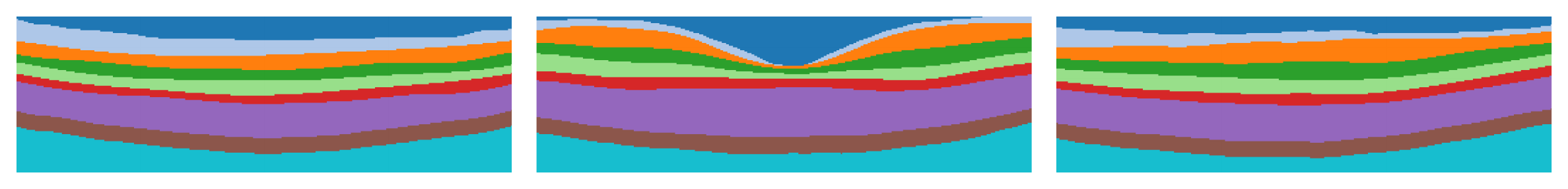}
		}		\end{subfigure}\\
		{\raisebox{10mm}{\begin{subfigure}[t]{1.6em}
					\caption[singlelinecheck=off]{}
			\end{subfigure}}\ignorespaces}
		\begin{subfigure}[t]{0.7\textwidth}
			\centering
			\scalebox{1}{\includegraphics[height=0.2\textwidth,width=1.4\textwidth]{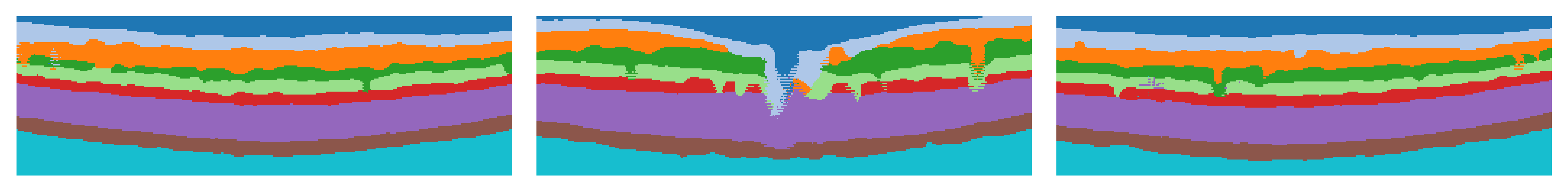}\hfill
		}		\end{subfigure}\\
		{\raisebox{10mm}{\begin{subfigure}[t]{1.6em}
					\caption[singlelinecheck=off]{}
			\end{subfigure}}\ignorespaces}
		\begin{subfigure}[t]{0.7\textwidth}
			\centering
			\scalebox{1}{\includegraphics[height=0.2\textwidth,width=1.4\textwidth]{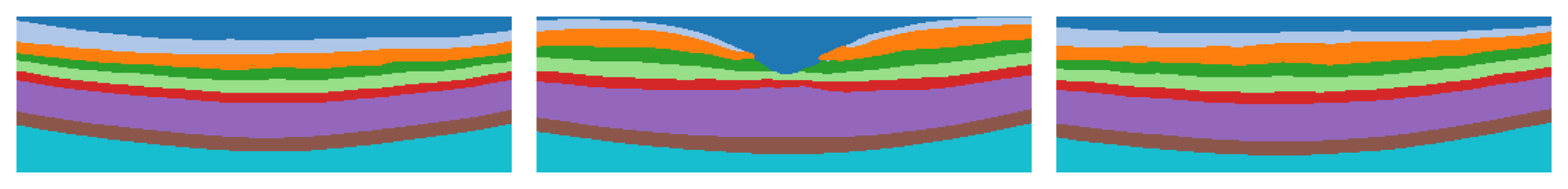}\hfill
		}		\end{subfigure}\\
		{\raisebox{10mm}{\begin{subfigure}[t]{1.6em}
					\caption[singlelinecheck=off]{}
			\end{subfigure}}\ignorespaces}
		\begin{subfigure}[t]{0.7\textwidth}
			\centering
			\scalebox{1}{\includegraphics[height=0.2\textwidth,width=1.4\textwidth]{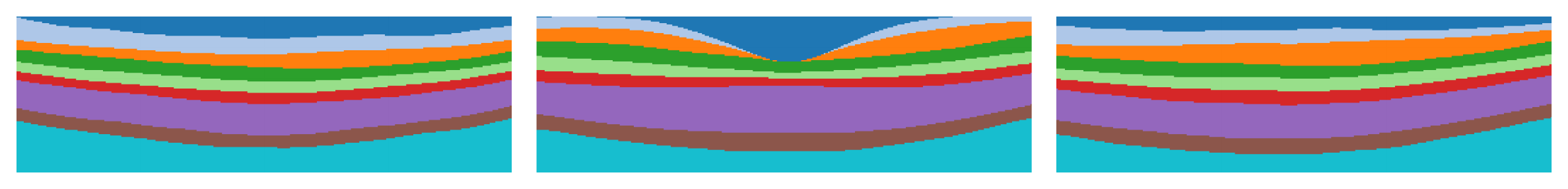}\hfill
		}		\end{subfigure}
	\end{adjustwidth}
	\captionsetup{font=footnotesize}
	\caption{Illustration of retina layer segmentation results listed in Table 
		\eqref{tab:Table_IOWA}. \textbf{Row (a)}:  Ground truth 
		labeling. 
		\textbf{Row (b-c)}: Labeled retina tissues using the proposed 
		approach 
		based on 
		covariance descriptors and CNN features,  respectively. \textbf{Row (d):} 
		The resulting 
		segmentation obtained using the IOWA reference algorithm. Visual comparison 
		with manually annotated retina layers (a) shows that segmentation with OAF (B) 
		leads to more reliable layer thickness of regions concentrated near the 
		fovea region (middle column), as opposed to IOWA reference method.}
	\label{Labeling_IOWA}
\end{figure}  

 On the contrary, the decrease of effective detection regarding 
the remaining layers reflects the present high texture variability, which can by 
tackled by increasing the size of prototypes on the cost of efficient 
computation of the generalized likelihood \eqref{eq:ordering_likelihood}. In 
general, the more robust retina detection features from an CNN net can be 
attributed to the underlying manifold geometry of symmetric positive definite 
matrices where the data partition is performed linearly by hyperplanes. This 
further indicates the nonlinear structure of the acquired volumetric OCT data. 
Figure \ref{Labeling_11_Layer} presents typical labelings of a B-scan for 
different locations in the segmented healthy OCT-volume obtained with 
the proposed approach. Direct comparison with the ground truth, as depicted in 
row (b), demonstrate higher accuracy and smoother boundary transitions by using 
CNN features instead of covariance descriptors. 
In particular, for the challenging 
segmentation of the ganglion cell layer (GCL) with a typical thinning near the 
macular region (middle scan), we report a Dice index of \mum{0.8373 +- 0.0263}
as opposed to the result \mum{0.6657 +- 0.1909}. The remaining numerical experiments are 
focusing on the validation of OAF against the retina segmentation methods serving as reference, as
summarized in Section 
\ref{sec:Reference_Methods}.

To access a quantitative direct comparison with the IOWA reference algorithm,  
the tested OCT volumes were imported into OCTExplorer 3.8.0 and segmented 
using the predefined Macular-OCT IOWA software after properly adjusting the 
resolution parameters. Additionally, we preprocessed each volume by removing 2 
B-scans from each side to get rid of boundary artifacts. We calculated and 
compared the segmentation results for layers consistent with available OCT 
volumes which were augmented by a medical expert. As before, we use the mean 
average error and the Dice index after segmenting the 8 volumes with no 
observable intraretinal diseases which are reported in Table 
\eqref{tab:Table_IOWA}. Figure \ref{fig:IOWA_Boxplot} provides a statistical 
illustration of the Dice index which reveals the high performance accuracy for 
methods which is in accordance with the mean average error shown in the last 
row of Figure \ref{fig:Cov_Deep_Barplot}. In particular, we observe a notable 
increase of performance using the OAF for detection of the ganglion cell layer 
with overall accuracy of \mum{0.8546 +- 0.0281}, see Figure \ref{Labeling_IOWA} for visualized segmentations of 3 B-scans.  

\begin{figure}[ht!]
	\begin{adjustwidth}{-4.5cm}{}
		\centering
		{\raisebox{10mm}{\begin{subfigure}[t]{1.6em}
					\caption[singlelinecheck=off]{}
			\end{subfigure}}\ignorespaces}
		\begin{subfigure}[t]{0.7\textwidth}
			\centering
			\scalebox{1}{\includegraphics[height=0.2\textwidth,width=1.4\textwidth]{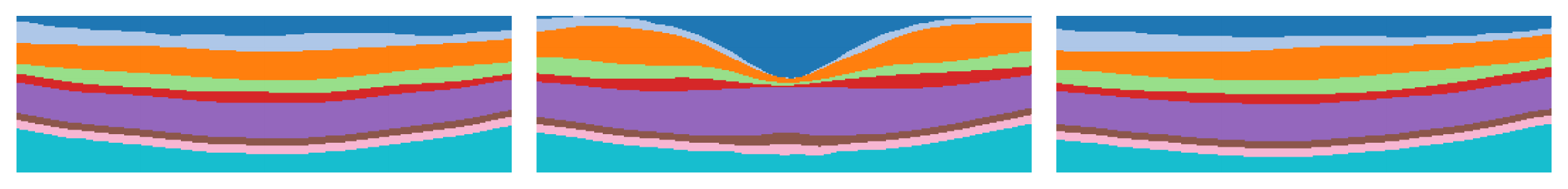}
		}		\end{subfigure}\\
		{\raisebox{10mm}{\begin{subfigure}[t]{1.6em}
					\caption[singlelinecheck=off]{}
			\end{subfigure}}\ignorespaces}
		\begin{subfigure}[t]{0.7\textwidth}
			\centering
			\scalebox{1}{\includegraphics[height=0.2\textwidth,width=1.4\textwidth]{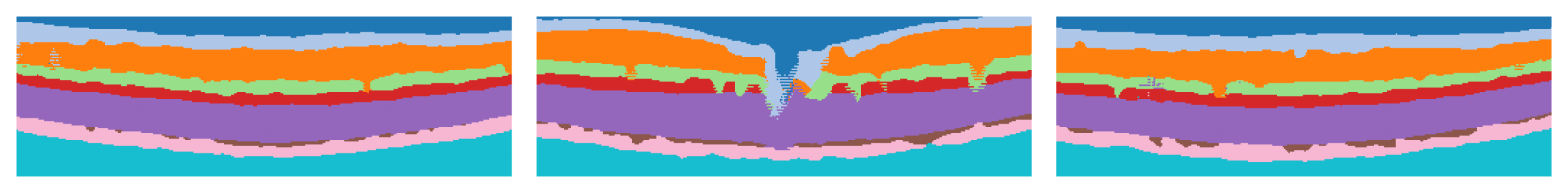}\hfill
		}		\end{subfigure}\\
		{\raisebox{10mm}{\begin{subfigure}[t]{1.6em}
					\caption[singlelinecheck=off]{}
			\end{subfigure}}\ignorespaces}
		\begin{subfigure}[t]{0.7\textwidth}
			\centering
			\scalebox{1}{\includegraphics[height=0.2\textwidth,width=1.4\textwidth]{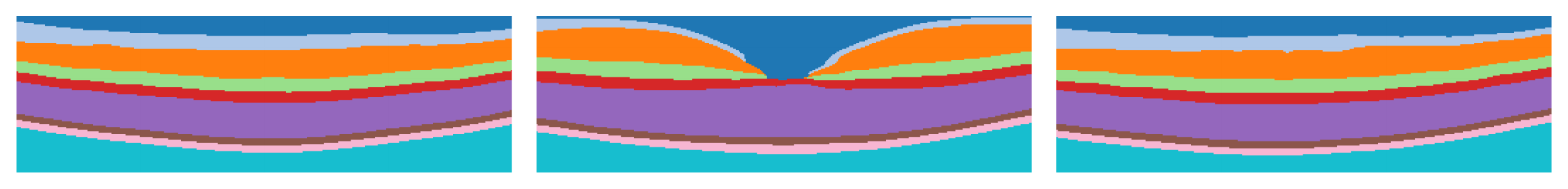}\hfill
		}		\end{subfigure}\\
		{\raisebox{10mm}{\begin{subfigure}[t]{1.6em}
					\caption[singlelinecheck=off]{}
			\end{subfigure}}\ignorespaces}
		\begin{subfigure}[t]{0.7\textwidth}
			\centering
			\scalebox{1}{\includegraphics[height=0.2\textwidth,width=1.4\textwidth]{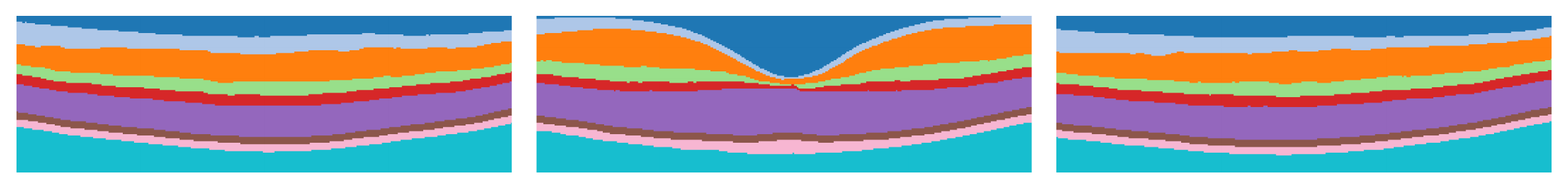}\hfill
		}		\end{subfigure}
	\end{adjustwidth}
	\captionsetup{font=footnotesize}
	\caption{\textbf{From top to bottom: Row (a)}: Ground truth for  
		the augmented retina 
		layer corresponding to Table \ref{tab:Table_Rathke}. \textbf{Row (b) 
			and (c)}: 
		Segmentation results of the OAF based on manifold valued features and on CNN features, respectively. 
		\textbf{Row (d)}: Segmentation results achieved by the 
		probabilistic 
		graphical model approach \cite{Rathke2014}. Both methods provide extraordinary performance for flat retina 
		detection whereas our method is more accurate regarding the 
		photoreceptor layers (below PR1).}
	\label{Labeling_Rathke}
\end{figure}

Next, we provide a visual and statistical comparison  of the 
proposed approach and the probabilistic state of the art retina segmentation 
approach \cite{Rathke2014} underlying Eq.~\eqref{Graphical_Model}. As before, to achieve a direct 
comparison with the proposed approach, we first adopted the OCT 
volumes to match the shape and parameters given in \cite{Rathke2014}. 
Subsequently, we removed the boundary between the GCL and IPL layers to obtain one 
characteristic layer which has to be detected. Figure \ref{Labeling_Rathke} 
displays the 
labeling accuracy.  
Both methods perform well by accurately segmenting flat shaped retina tissues, as shown in the first and last columns. However, closer inspection 
of the second column reveals a more 
accurate detection of layer thickness for the (PR2-RPE) and (INL) regions below 
the concave curved fovea region by using OAF(B). This is mainly due to the 
connectivity 
constraints imposed on boundary detection in \cite{Rathke2014}. By contrast, 
our method is capable to deal with rapidly decreasing layer thickness near the 
fovea region, as observed for  GCL and IPL layers in the middle column of 
Figure \ref{Labeling_Rathke} after visual comparison against the manual 
delineations (first row). This observation is supported by missing 
connectivity constraints for retina boundaries of the proposed method, as 
opposed to the Gaussian shape prior model in \cite{Rathke2014}, and only relying on 
layer ordering entirely included within the Fisher Rao geometry that underlies the 
assignment manifold. Therefore, the OAF 
is amenable for extension to layer detection on pathological volumes with 
vanishing retina boundaries, as for vitreomacular traction or diabetic macular 
edema. Figure \ref{fig:Exp_Surfaces_3D} additionally provides a 3D view on 
detected retina surfaces for each evaluated reference method used in this 
publication. The corresponding performance measures given in Table \ref{tab:Table_Rathke} underpin the  notably higher Dice similarity for 
(PR2-RPE) and (INL) layers with overall accuracy \SI{0.8606 \pm 
0.0706}{\micro\meter} and \SI{0.8690 \pm 0.0396}{\micro\meter}, respectively. 
The statistical plots for the mean 
average error and the Dice similarity index are given in Figures 
\eqref{fig:Rathke_Boxplot} and \eqref{fig:Cov_Deep_Barplot}, clearly showing the 
superiority of OAF (B) with both the higher Dice index and the mean average errors for all 
layers. In particular, following Table \ref{tab:Table_Rathke}, small error 
rates are observed among all the segmented layers, except for the 
(ILM) boundary which is detected by all methods with  high accuracy.  
We point out that in general our method is not limited to any number of  
segmented layers, if ground 
truth is available. 

Concluding the validation, both methods accurately detect the RNFL layer width 
whereas for the remaining retina tissues the layer extraction with the ordered 
assignment flow indicates the smallest mean absolute 
error supported by the highest Dice similarity index. This demonstrates the 
superior performance of order preserving labeling regarding accuracy and 
robustness, in view of segmenting the retina layer for classifying 
volumetric OCT data.

\begin{figure}[h] 
	{\raisebox{10mm}{\begin{subfigure}[t]{1.6em}
				\caption[singlelinecheck=off]{}
		\end{subfigure}}\ignorespaces}
	\begin{subfigure}[t]{0.32\textwidth}
		\centering
		\includegraphics[width=5cm,height=3cm]{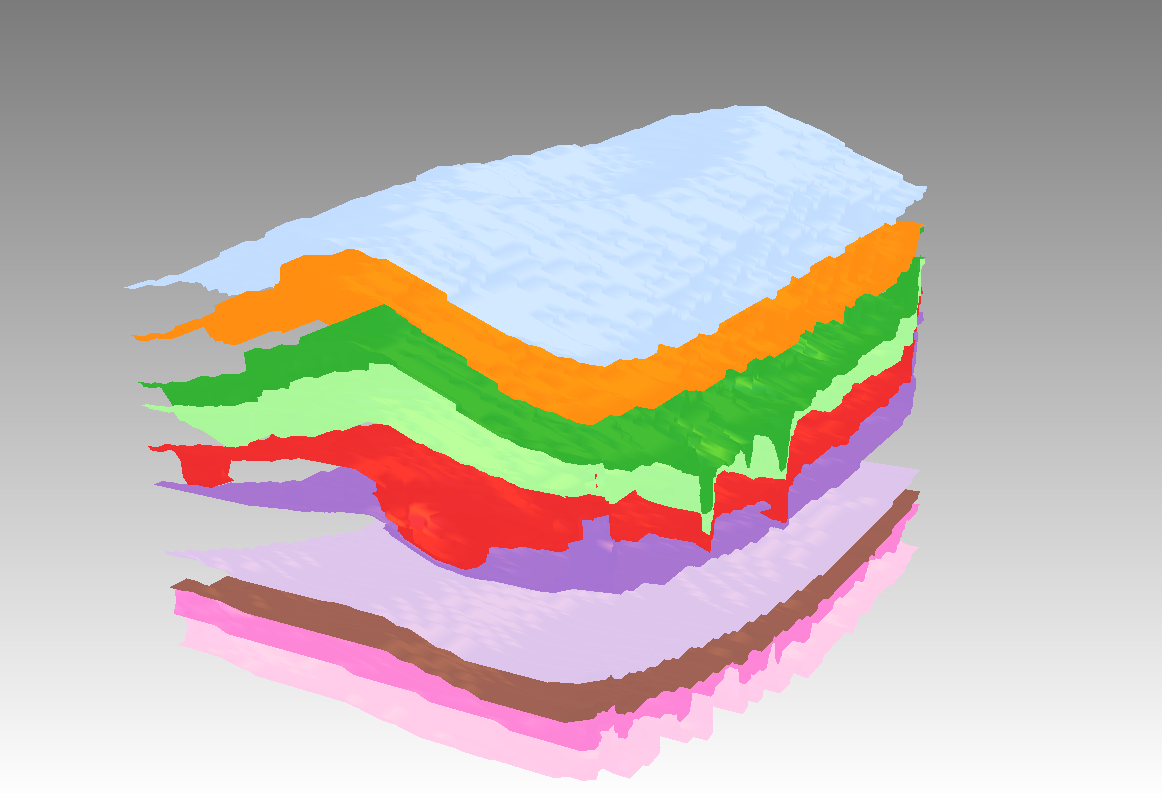}
	\end{subfigure}
	\begin{subfigure}[t]{0.31\textwidth}
		\centering
		\includegraphics[width=5cm,height=3cm]{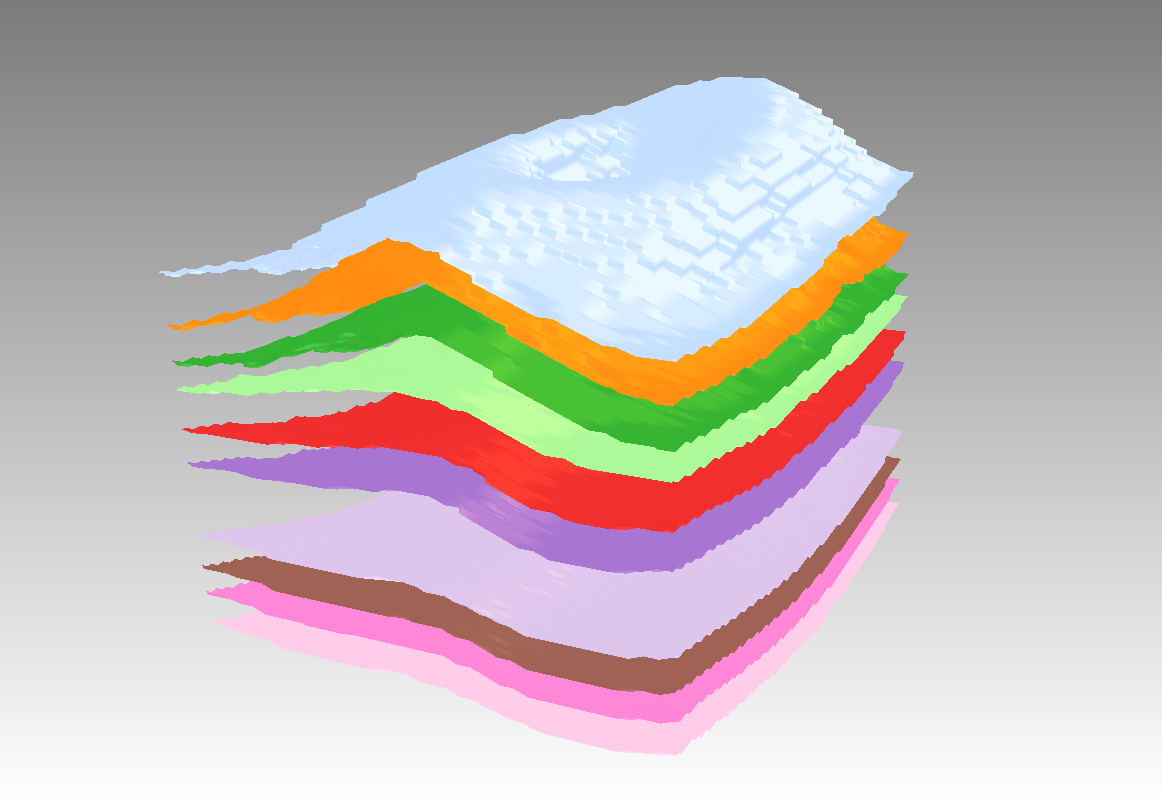}
	\end{subfigure}
	\begin{subfigure}[t]{0.31\textwidth}
		\centering
		\includegraphics[width=5cm,height=3cm]{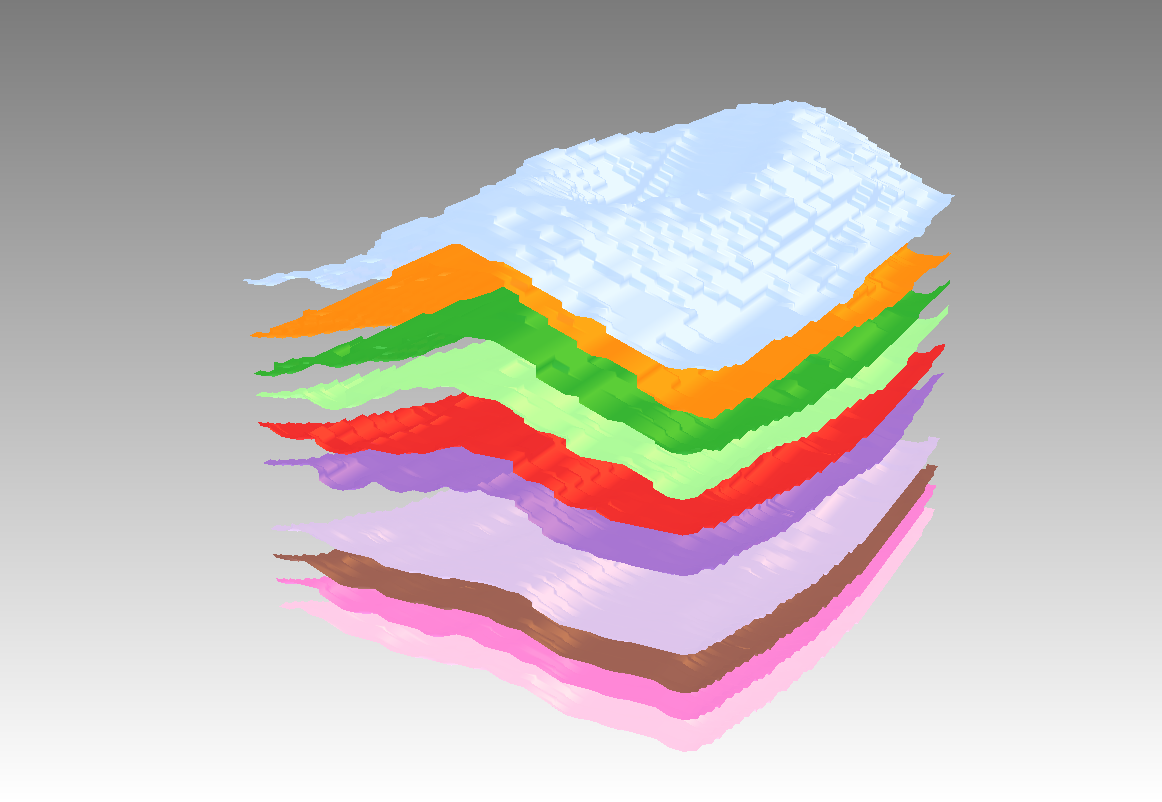}
	\end{subfigure}\\
{\raisebox{10mm}{\begin{subfigure}[t]{1.6em}
		\caption[singlelinecheck=off]{}
\end{subfigure}}\ignorespaces}
	\begin{subfigure}[t]{0.32\textwidth}
		\centering
		\includegraphics[width=5cm,height=3cm]{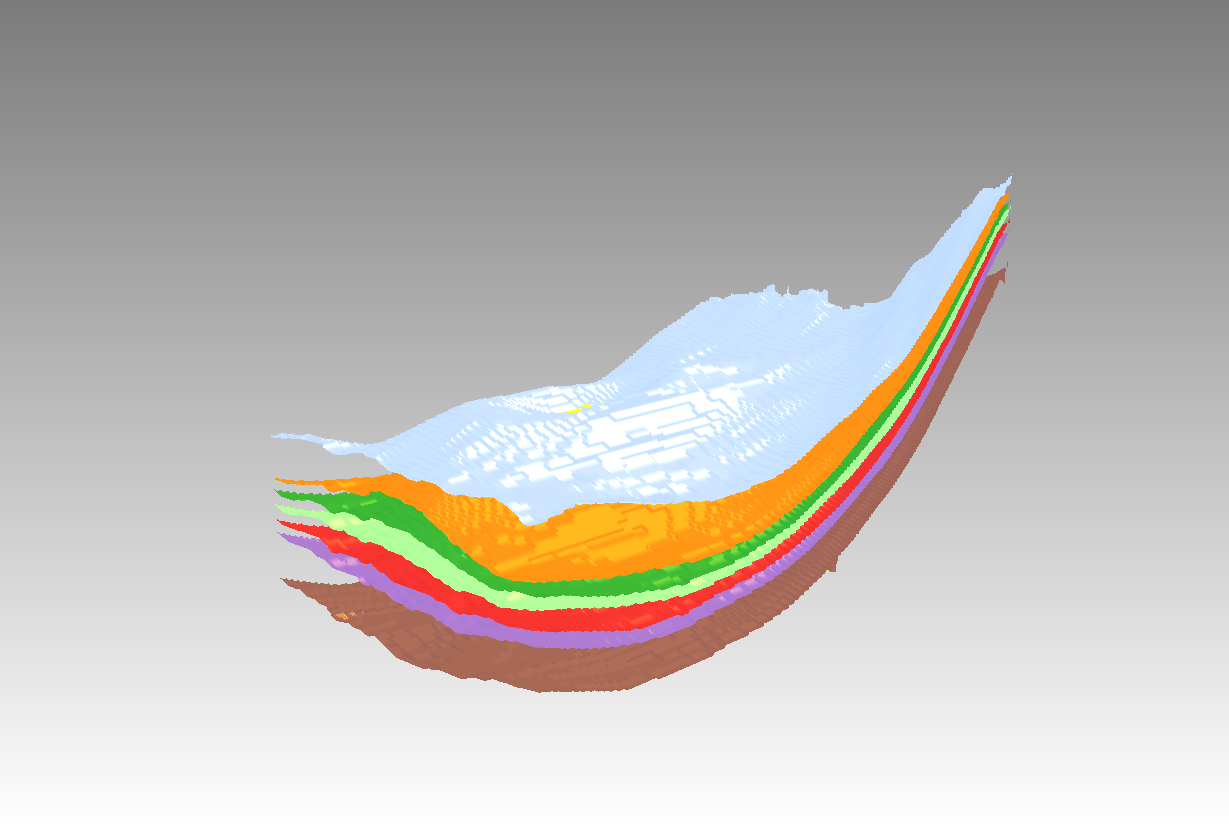}
	\end{subfigure}
	\begin{subfigure}[t]{0.31\textwidth}
	\centering
	\includegraphics[width=5cm,height=3cm]{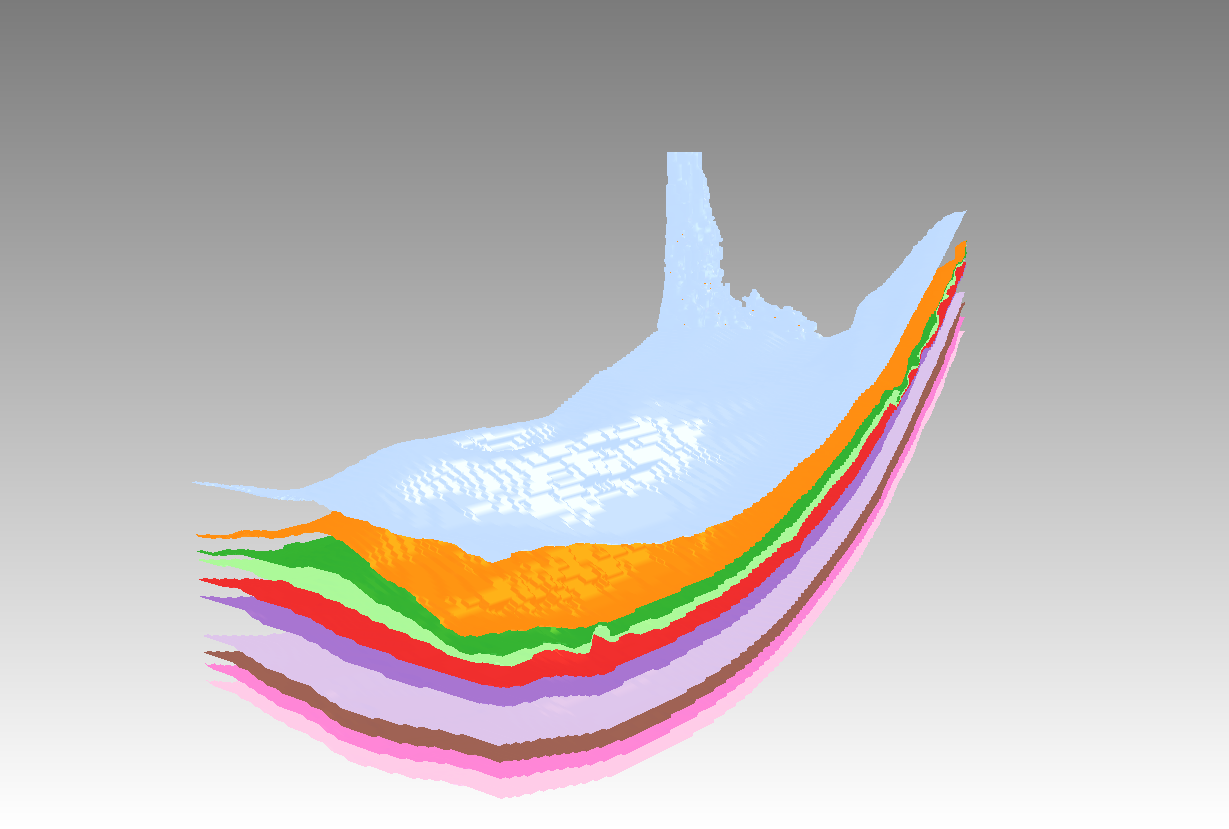}
\end{subfigure}
	\begin{subfigure}[t]{0.31\textwidth}
	\centering
	\includegraphics[width=5cm,height=3cm]{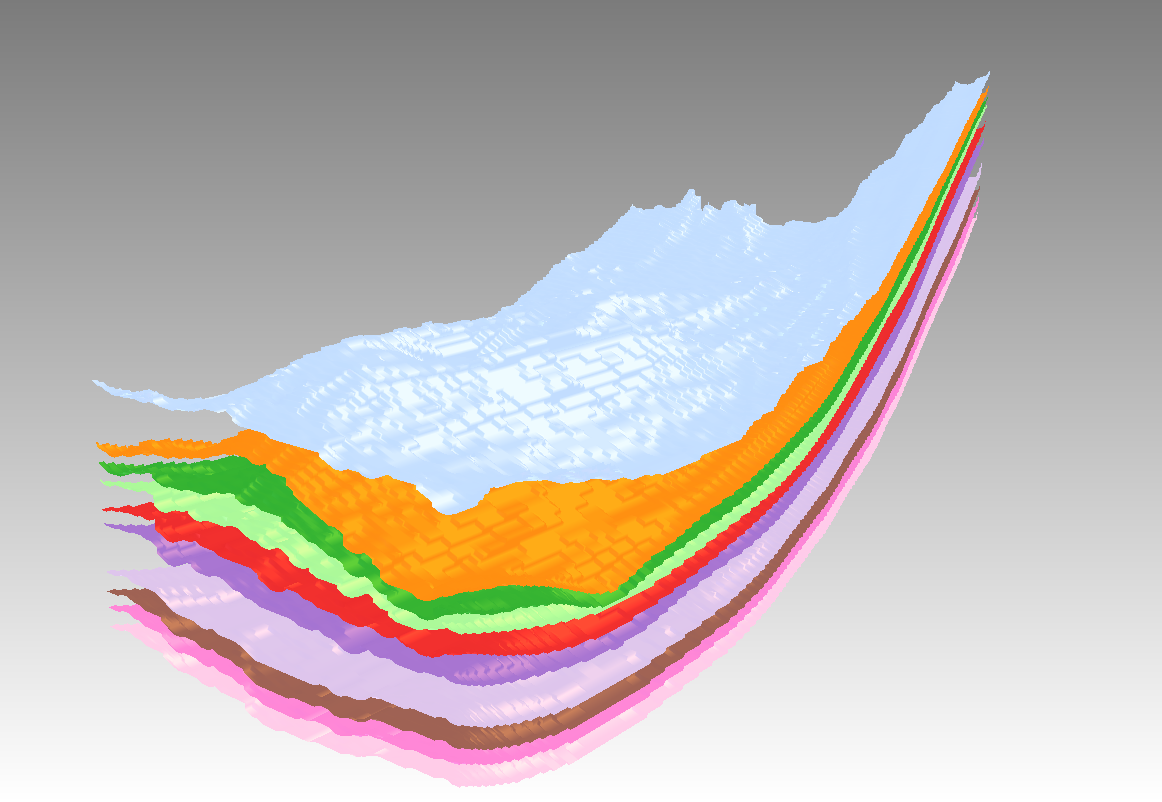}
\end{subfigure}
\\
{\raisebox{10mm}{\begin{subfigure}[t]{1.6em}
			\caption[singlelinecheck=off]{}
	\end{subfigure}}\ignorespaces}
\begin{subfigure}[t]{0.32\textwidth}
	\centering
	\includegraphics[width=5cm,height=3cm]{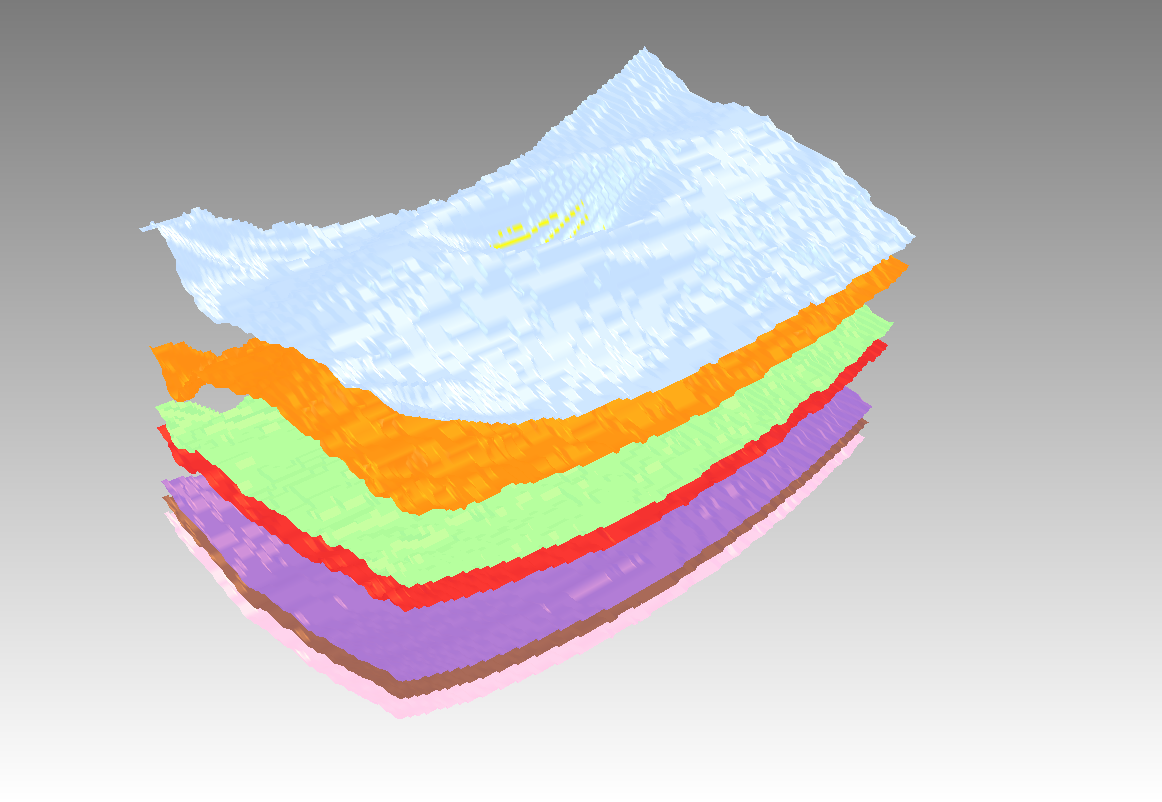}
\end{subfigure}
\begin{subfigure}[t]{0.31\textwidth}
	\centering
	\includegraphics[width=5cm,height=3cm]{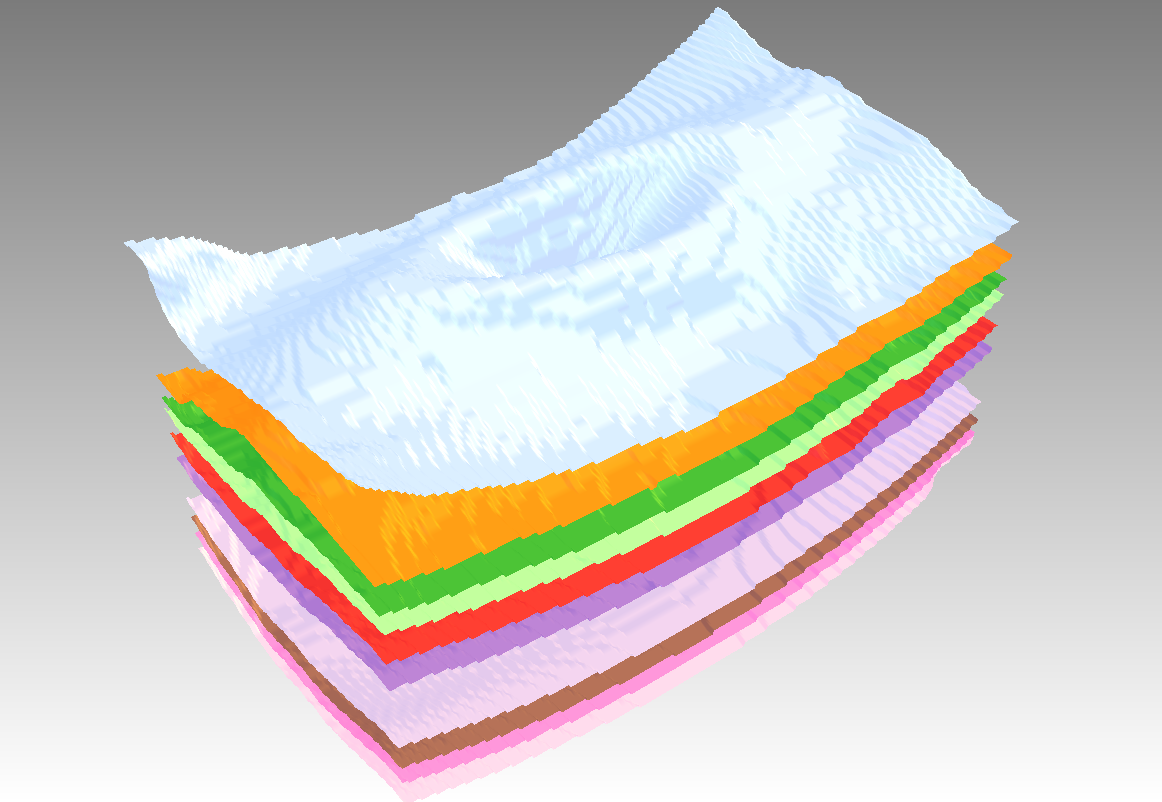}
\end{subfigure}
\begin{subfigure}[t]{0.31\textwidth}
	\centering
	\includegraphics[width=5cm,height=3cm]{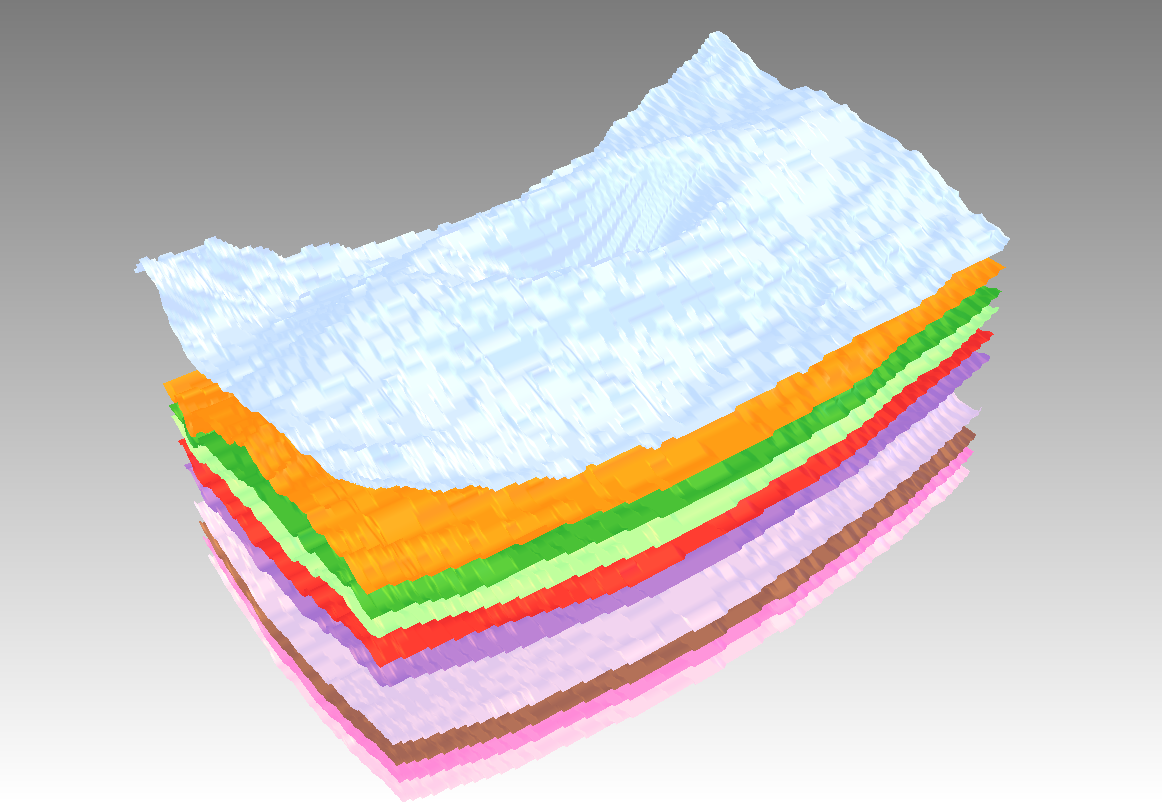}
\end{subfigure}
	\captionsetup{font=footnotesize}
	\caption{\textbf{Row (a)}: \textbf{From left to right:} 3D retinal surfaces 
	determined  
	using OAF (A) (left column) and OAF (B) (middle column). The 
	last column depicts ground truth. \textbf{Row (b)}: \textbf{From 
	left to 
	right:} Segmentation of retinal tissues with the  IOWA reference algorithm (left column) 
	with the proposed approach (middle column). \textbf{Row (c):} Visual 
	comparison of the probabilistic method  \cite{Rathke2014} (left column)  
	left and the OAF (B) (middle column). Our approach OAF (B) leads 
	to accurate retina layer segmentation with smooth layer boundaries, as 
	observed in the middle column.} 
	\label{fig:Exp_Surfaces_3D}
\end{figure}

\clearpage 
\begin{figure}[htb!]
	\begin{adjustwidth}{4.9em}{1em} 
		\centering
		\begin{subfigure}[t]{0.35\textwidth}
			\scalebox{.6}{\begin{tikzpicture}
		\begin{axis}[
		height=8cm,width = 10cm,
		ytick={0.5,1,...,3.5},xmin=0, xmax=7,
		xticklabels={0,0,RNFL,GCL,IPL,INL,OPL,ONL},
		ylabel=Mean average error,
		xlabel= {Layers},
		legend style={
			at={(0,1)},
			anchor=north west, ymax = 3.5,ymin = 0
		},
		nodes near coords,
		every node near coord/.append style={font=\tiny},
		ybar=0pt,
		bar width=11pt
		]
		\addplot[black!20!black,fill=Mea_Deep_Color!80!Mea_Deep_Color]
		coordinates {(01,1.3591) (02,2.5423)
			(03,3.0182) (04,2.61605) (05,1.60803) (06,1.6342) };
		\addplot[black!20!black,fill=Mea_Cov_Color!80!Mea_Cov_Color]
		coordinates {(01,0.8856) (02,1.47666)
			(03,1.60818) (04,1.50036) (05,1.62195) (06,1.8853)}; 
		\legend{OAF Cov., OAF CNN}
		\end{axis}
\end{tikzpicture}}
		\end{subfigure}
		\hfill
		\begin{subfigure}[t]{0.5\textwidth}
			\scalebox{.6}{\begin{tikzpicture}
\begin{axis}[
      height=8cm,width = 10cm,
      ytick={0.5,1,...,3.5},
      xlabel= {Layers},xmin=0, xmax=5,
      xticklabels={0,0,PR1,PR2,PR2-RPE,RPE-CS},
      legend style={
      	at={(0,1)},
      	anchor=north west,
      },ymax = 3.5,ymin = 0,
      nodes near coords,
      every node near coord/.append style={font=\tiny},
      ybar=0pt,
      bar width=11pt
      ]
      \addplot[black!20!black,fill=Mea_Deep_Color!80!Mea_Deep_Color]
        coordinates {(01,0.6995) (02,0.63196)
          (03,1.72436) (04,2.1354)};
      \addplot[black!20!black,fill=Mea_Cov_Color!80!Mea_Cov_Color]
        coordinates {(01,0.750033) (02,0.8458) (03,1.285) (04,2.8613)};
      \legend{Measured, Model}
    \end{axis}
\end{tikzpicture}}
		\end{subfigure}
	\begin{subfigure}[t]{0.35\textwidth}
		\scalebox{.6}{\begin{tikzpicture}
\begin{axis}[
      height=8cm,width = 10cm,
      ytick={0.5,1,...,6},
      ylabel=Mean average error,
      xlabel= {Layers},xmin=0, xmax=5,
      xticklabels={0,0,RNFL,GCL+IPL,INL,OPL},
      legend style={
      	at={(0,1)},
      	anchor=north west,
      },ymax = 6,ymin = 0,
      nodes near coords,
      every node near coord/.append style={font=\tiny},
      ybar=0pt,
      bar width=11pt
      ]
      \addplot[black!20!black,fill=Mea_Deep_Color!80!Mea_Deep_Color]
        coordinates {(01,1.3080) (02,2.9184)
          (03,5.1853) (04,4.8488)};
      \addplot[black!20!black,fill=Mea_Cov_Color!80!Mea_Cov_Color]
        coordinates {(01,2.026) (02,1.9267) (03,3.766) (04,3.701)};
      \legend{Rathke, OAF CNN}
    \end{axis}
\end{tikzpicture}}
	\end{subfigure}
	\hfill
	\begin{subfigure}[t]{0.5\textwidth}
		\scalebox{.6}{\begin{tikzpicture}
\begin{axis}[
      height=8cm,width = 10cm,
      ytick={0.5,1,...,8},
      xlabel= {Layers},xmin=0, xmax=4,
      xticklabels={0,0,,ONL+PR1, ,PR2,,PR2-RPE},
      legend style={
      	at={(0,1)},
      	anchor=north west,
      },ymax = 6,ymin = 0,
      nodes near coords,
      every node near coord/.append style={font=\tiny},
      ybar=0pt,
      bar width=11pt
      ]
      \addplot[black!20!black,fill=Mea_Deep_Color!80!Mea_Deep_Color]
        coordinates {(01,4.139) (02,5.7280)
          (03,5.276)};
      \addplot[black!20!black,fill=Mea_Cov_Color!80!Mea_Cov_Color]
        coordinates {(01,2.7209) (02,3.2814) (03,4.6269)};
      \legend{Rathke, OAF CNN}
    \end{axis}
\end{tikzpicture}}
	\end{subfigure}
	\begin{subfigure}[t]{0.35\textwidth}
	\scalebox{.6}{\begin{tikzpicture}
\begin{axis}[
      height=8cm,width = 10cm,
      ytick={0.5,1,...,6},
      ylabel=Mean average error,
      xlabel= {Layers},xmin=0, xmax=6,
      xticklabels={0,0,RNFL,GCL,IPL,INL,OPL},
      legend style={
      	at={(0,1)},
      	anchor=north west,
      },ymax = 6,ymin = 0,
      nodes near coords,
      every node near coord/.append style={font=\tiny},
      ybar=0pt,
      bar width=11pt
      ]
      \addplot[black!20!black,fill=Mea_Deep_Color!80!Mea_Deep_Color]
        coordinates {(01,2.7799) (02,2.0561) (03,3.1970) (04,2.7583) 
        (05,3.0330)};
    \addplot[black!20!black,fill=Mea_Cov_Color!80!Mea_Cov_Color]
    coordinates {(01,2.8470) (02,1.9683)
    	(03,3.400) (04,3.2094) (05,3.1217)};
      \legend{IOWA, OAF CNN}
    \end{axis}
\end{tikzpicture}}
\end{subfigure}
\hfill
\begin{subfigure}[t]{0.5\textwidth}
\scalebox{.6}{\begin{tikzpicture}
\begin{axis}[
      height=8cm,width = 10cm,
      ytick={0.5,1,...,10},
      xlabel= {Layers},xmin=0, xmax=4,
      xticklabels={0, , ,ONL+ELM+PR1, , ,PR2-RPE},
      legend style={
      	at={(0,1)},
      	anchor=north west,
      },ymax = 8,ymin = 0,
      nodes near coords,
      every node near coord/.append style={font=\tiny},
      ybar=0pt,
      bar width=11pt
      ]
      \addplot[black!20!black,fill=Mea_Deep_Color!80!Mea_Deep_Color]
        coordinates {(01,4.4292)
          (2.5,7.3738)};
      \addplot[black!20!black,fill=Mea_Cov_Color!80!Mea_Cov_Color]
        coordinates {(01,2.3843) (2.5,3.2300)};
      \legend{IOWA, OAF CNN}
    \end{axis}
\end{tikzpicture}
       }
\end{subfigure}
	\end{adjustwidth}
\captionsetup{font=footnotesize}
\caption{Performance measures per layer in terms of  the mean average error based on the segmentation of 10 healthy OCT volumes. \textbf{Top row}: Error bars 
for retina layers separated by 
the external limiting membrane (ELM) corresponding to  OAF (A) and OAF (B). 
\textbf{Middle row}: Comparison of the mean errors of 
OAF (B) and the probabilistic method \cite{Rathke2014}. \textbf{Bottom row}: 
Comparison of mean average errors of OAF (B) and the  
the IOWA reference algorithm.}
\label{fig:Cov_Deep_Barplot}
\end{figure}
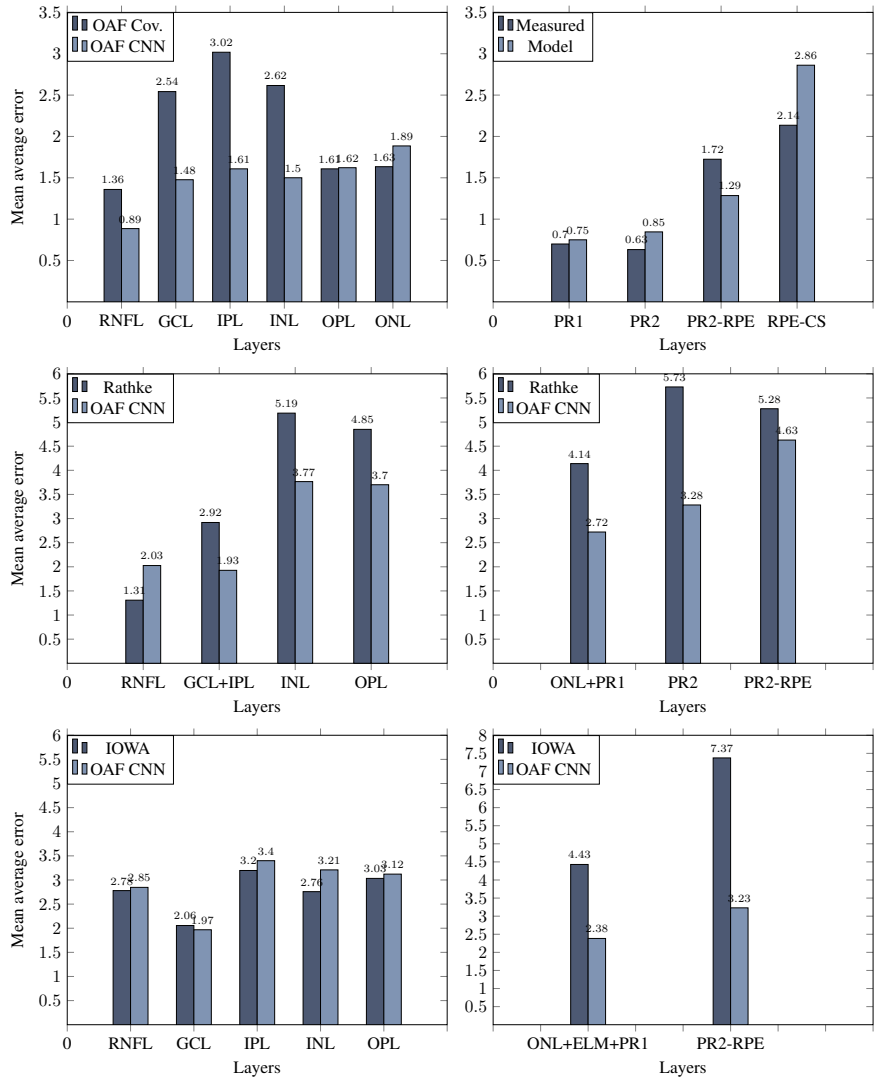
 \clearpage
\footnotesize

\begin{table}[]
\footnotesize{
\centering
\begin{tabular}{@{}lll@{}}
\toprule
OAF Cov & DICE index & Mean absolute error\\ \midrule
ILM     & $0.8837 \pm 0.2564$ &  $-$ \\
RNFL    & $0.6963 \pm 0.1998$ & $1.3590 \pm 0.4114$ \\ 
GCL     & $0.6657 \pm 0.1909$ & $2.5426 \pm 0.7819$ \\ 
IPL     & $0.5853 \pm 0.1773$ & $3.0183 \pm 1.0682$ \\ 
INL     & $0.6671 \pm 0.1773$ & $2.6160 \pm 1.1294$ \\ 
OPL     & $0.7018 \pm 0.2013$ & $1.6080 \pm 0.5120$ \\ 
ONL     & $0.8575 \pm 0.2523$ & $1.6342 \pm 0.7174$ \\
PR1     & $0.8199 \pm 0.2407$ & $0.6995 \pm 0.2467$ \\
PR2     & $0.6787 \pm 0.1976$ & $0.6320 \pm 0.2442$ \\
PR2-RPE & $0.6313 \pm 0.1821$ & $1.7244 \pm 0.6038$ \\
RPE-CS  & $0.8606 \pm 0.2469$ & $2.1354 \pm 1.0836$ \\ \bottomrule
\end{tabular}
\hspace{1em}
\begin{tabular}{@{}lll@{}}
\toprule
OAF CNN  & DICE index  & Mean absolute error\\ \midrule
ILM      & $0.9739 \pm 0.0189$ &  $-$ \\
RNFL     & $0.8842 \pm 0.0313$ & $0.8856 \pm 0.3513$  \\ 
GCL      & $0.8373 \pm 0.0263$ & $1.4767 \pm 0.5589$ \\ 
IPL      & $0.8151 \pm 0.0367$ & $1.6082 \pm 1.5291$ \\ 
INL      & $0.8414 \pm 0.0035$ & $1.5004 \pm 0.8652$  \\ 
OPL      & $0.8442 \pm 0.0437$ & $1.6220 \pm 1.0786$ \\ 
ONL      & $0.9254 \pm 0.0486$ & $1.8853 \pm 1.3951$ \\
PR1      & $0.8717 \pm 0.0441$ & $0.7500 \pm 0.3216$ \\
PR2      & $0.8330 \pm 0.0516$ & $0.8458 \pm 0.4914$ \\
PR2-RPE  & $0.8213 \pm 0.0835$ & $1.2850 \pm 1.3660$ \\
RPE-CS   & $0.9445 \pm 0.0488$ & $2.8613 \pm 2.5612$ \\ \bottomrule
\end{tabular}
\vspace{.5em}
\captionsetup{font=footnotesize}
\caption{Mean and standard deviations of the Dice index and mean absolute errors in pixels (1 pixel $=$ \mum{3.87}). \textbf{Left}: Errors of the proposed approach based on covariance descriptors (OAF (A)).
\textbf{Right}: Errors of the proposed approach based on CNN features (OAF (B)).}
\label{tab:Table}}
\end{table}

\begin{table}[]
\footnotesize{
\centering
\begin{tabular}{@{}lll@{}}
\toprule
\cite{Rathke2014} & DICE index & Mean absolute error\\ \midrule
ILM     & $0.9972 \pm 0.0006$ & $-$ \\
RNFL    & $0.8841 \pm 0.0125$ & $1.3080 \pm 0.6039$ \\ 
GCL+IPL & $0.8735 \pm 0.0152$ & $2.9180 \pm 1.0303$ \\ 
INL     & $0.7501 \pm 0.0292$ & $5.1853 \pm 1.3642$ \\ 
OPL     & $0.7651 \pm 0.0124$ & $4.8489 \pm 1.5898$ \\ 
ONL+PR1 & $0.9312 \pm 0.0068$ & $4.1490 \pm 1.2310$ \\ 
PR2     & $0.7416 \pm 0.0395$ & $5.7281 \pm 1.5411$ \\
PR2-RPE & $0.7945 \pm 0.0271$ & $5.2757 \pm 1.6452$ \\
RPE-CS  & $0.9858 \pm 0.0073$ & $-$ \\ \bottomrule
\end{tabular}
\hspace{.5em}
\begin{tabular}{@{}lll@{}}
\toprule
OAF CNN & DICE index  & Mean absolute error\\ \midrule
ILM     & $0.9953 \pm 0.0011$ & $-$ \\
RNFL    & $0.8954 \pm 0.0208$ & $2.0256 \pm 0.7660$ \\
GCL+IPL	& $0.9250 \pm 0.0180$ & $1.9267 \pm 0.7975$ \\
INL     & $0.8690 \pm 0.0396$ & $3.7660 \pm 2.3101$ \\
OPL     & $0.8680 \pm 0.0048$ & $3.7010 \pm 2.3561$ \\
ONL+PR1 & $0.9485 \pm 0.0622$ & $2.7209 \pm 2.3594$ \\
PR2     & $0.8647 \pm 0.0592$ & $3.2810 \pm 2.0854$ \\
PR2+RPE & $0.8606 \pm 0.0706$ & $4.6270 \pm 3.1891$ \\			
RPE-CS  & $0.9743 \pm 0.0484$ & $-$ \\ \bottomrule
\end{tabular}
\vspace{1em}
\captionsetup{font=footnotesize}
\caption{Mean and standard deviations of the Dice index and mean absolute errors in pixels (1 pixel $=$ \mum{3.87}).  \textbf{Left}: Errors of the probabilistic approach \cite{Rathke2014}. \textbf{Right}: Errors of the proposed approach OAF (B). These numbers demonstrate the superior performance of our novel order-preserving labeling approach.}
\label{tab:Table_Rathke}}
\end{table}
\begin{table}[]
	\footnotesize{
\centering
\begin{tabular}{@{}lll@{}}
\toprule
IOWA & DICE index & MAE\\ \midrule
ILM         & $0.9837 \pm 0.0043$ & $-$ \\
RNFL        & $0.8323 \pm 0.0236$ & $2.7799 \pm 0.9485$ \\ 
GCL         & $0.7757 \pm 0.0334$ & $2.0561 \pm 0.4978$ \\ 
IPL         & $0.7860 \pm 0.0189$ & $3.1970 \pm 1.1408$ \\ 
INL         & $0.8434 \pm 0.0269$ & $2.7583 \pm 1.3776$ \\ 
OPL         & $0.8024 \pm 0.0311$ & $3.0330 \pm 1.2837$ \\ 
ONL+ELM+PR1 & $0.8893 \pm 0.0182$ & $4.4292 \pm 1.5052$ \\
PR2-RPE     & $0.7120 \pm 0.0756$ & $7.3738 \pm 3.2031$ \\
RPE-CS      & $0.9667 \pm 0.0167$ & $-$  \\ \bottomrule
\end{tabular}
\hspace{.5em}
\begin{tabular}{@{}lll@{}}
\toprule
OAF CNN & DICE index  & MAE\\ \midrule
ILM         & $0.9795 \pm 0.0130$ & $-$ \\
RNFL        & $0.8717 \pm 0.0277$ & $2.8470 \pm 1.0758$ \\ 
GCL         & $0.8546 \pm 0.0281$ & $1.9683 \pm 0.6678$ \\ 
IPL         & $0.8370 \pm 0.0313$ & $3.4000 \pm 1.5535$ \\ 
INL         & $0.8587 \pm 0.0320$ & $3.2094 \pm 1.6950$ \\ 
OPL         & $0.8613 \pm 0.0387$ & $3.1217 \pm 1.7875$ \\ 
ONL+ELM+PR1 & $0.9482 \pm 0.0465$ & $2.3842 \pm 1.6869$ \\
PR2+RPE     & $0.9021 \pm 0.0648$ & $3.2296 \pm 1.9627$ \\			
RPE-CS      & $0.9605 \pm 0.0362$ & $-$ \\ \bottomrule
\end{tabular}
\vspace{1em}
\captionsetup{font=footnotesize}
\caption{Mean and standard deviations of the Dice index and mean absolute errors in pixels (1 pixel $=$ \mum{3.87}). \textbf{Left}: Errors of the IOWA reference algorithm \cite{Li:2006aa}. \textbf{Right}: Errors of the proposed approach OAF (B). These numbers demonstrate the superior performance of 
our novel order-preserving labeling approach.}
\label{tab:Table_IOWA}}
\end{table}

\normalsize





\section{Discussion}\label{sec:discussion}
We discuss additional aspects pertaining to the data used for training feature extractors
as well as the locality of extracted features.

\subsection{Ground Truth Generation}\label{sec:gt_discussion}
The training and evaluation of supervised models for feature extraction requires a sizeable amount 
of high-quality labeled ground truth data. This presents a commonly encountered challenge in 3D OCT 
segmentation \cite{Dufour:2013aa,Li:2006aa}, because the process of manually labeling every voxel of a 3D
volume is extremely laborious. The desire to account for inter-observer variability in manual 
segmentations further compounds this problem.
OCT volumes used for testing purposes in the present paper were initially segmented by an
automatic procedure based on hand-crafted features. In a subsequent step, each B-scan segmentation
was manually corrected by a medical practitioner.
The automatic method used for initial segmentation only explicitly regularizes on each
individual B-scan, leading to irregularity between consecutive B-scans (see Figure \ref{fig:3d_inconsistency}).
\begin{figure}[h]
	\includegraphics[width=\textwidth,height=3.5cm]{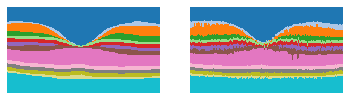}
	\captionsetup{font=footnotesize}
	\caption{\textbf{Left}: Initial automatic segmentation of individual B-scan 
	based on hand-crafted features.
		\textbf{Right}: Section of the same automatically segmented volume 
		orthogonal to each B-scan.}
	\label{fig:3d_inconsistency}
\end{figure}
Manual correction of initial automatic segmentations leads to a noticeable reduction of irregularity but does
not completely remove it. We therefore cannot rule out that a small bias towards the initial automatic segmentation
based on hand-crafted features may still be present in the ground truth segmentations that we used to quantify segmentation
performance of novel methods as well as baseline methods in this paper.
During feature extraction, deep learning models may be capable of discovering the specific hand-crafted features
used for initial automated segmentation which may in turn lead to exploitation of any bias towards them. In contrast, 
because the reference methods are not trained on the same data, they can not exploit any such bias, putting them at
a possible disadvantage.


\subsection{Feature Locality}\label{sec:locality_discussion}
The ordered assignment flow segmentation approach can work with data from any metric space and is hence completely
agnostic to the choice of preliminary feature extraction method. 
In this paper, we chose to limit the field of view of deep networks such that 
features with local discriminative 
information are extracted. This 
makes empirical results directly comparable between features based on covariance descriptors and features extracted
by these networks.
In addition, we conjecture that local features may generalize better to unseen pathologies. Specifically,
if a pathological change in retinal appearance pertains to the global shape of cell layers, local features are
largely uneffected and therefore significant for detecting  
irregularly shaped retina boundaries on unhealthy OCT data. In this way, we 
expect segmentation performance to be relatively consistent on real-world data.
Conversely, widening the field of view in feature extraction should be accompanied by a well-considered training 
procedure in order to achieve similar generalization behavior, by employing 
e.g. extensive data augmentation. While raw OCT volume data has become 
relatively plentiful in clinical settings,
large volume datasets with high-quality gold-standard segmentation are not 
widely available at the time of writing. Therefore, by representing a given OCT 
scan \emph{locally} as 
opposed to incorporating global context at every stage, it is our next 
hypothesis that superior 
generalization can be achieved in the face of limited data availability. 
Similarly, although based on local features, the method proposed by 
\cite{Rathke2014}
combines local knowledge in accordance with a \emph{global} shape prior. This 
again limits the methods ability to generalize to unseen data if large 
deviation from the expected global shape seen in training is present.



\section{Conclusion}\label{sec:Conclusion}

In this paper we presented a novel, fully automated and purely data driven 
approach for retina segmentation in OCT-volumes. Compared to methods 
\cite{Li:2006aa} \cite{Dufour:2013aa} and \cite{Rathke2014} that have proven to 
be particularly effective on tissue classification with a priory known retina 
shape orientation, our ansatz merely relies on local features and yields 
ordered labelings which are directly enforced through the underlying geometry 
of statistical manifold \eqref{schnoerr-eq:def-mcW}. To address the task of 
leveraging 3D-texture information, we proposed two different feature selection 
processes by means of region covariance descriptors  \eqref{Cov_Des_Feature} and 
the output obtained by training a CNN network \eqref{sec:experiments_cnn}, 
which are both based on the interaction between local feature responses.

 As opposed to other machine learning methods developed for segmenting human 
 retina from volumetric OCT data, the proposed method only takes the pairwise 
 distance between voxels and prototypes \eqref{eq:def-mcF-ast} as input. As a 
direct consequence our approach can be applied in connection with broader range 
of features living in any metric space and additionally provides the 
incorporation of outputs from trained neuronal convolution networks interpreted 
as image features, where a particular instance of such type was demonstrated in 
Section \eqref{sec:experiments_cnn}. Even in view of the moderate result  
achieved after 
segmentation using OAF (A) in connection with covariance descriptors, we 
observe the 
importance of our automatic algorithm by its high level of regularization. 
Compared to the approach presented in \cite{Chiu:2015aa} which  
employs a higher number of input features but still requires postprocessing 
steps to yield order preserving labeling, our approach provides a way to perform 
this tasks simultaneously.

 Using locally 
adapted features for handling volumetric OCT data sets from patients 
with observable pathological retina changes is in particular valuable to 
suppress wrong layer boundaries predictions caused by prior assumptions on 
retinal layer thicknesses typically made by graphical model approaches as in 
\cite{Dufour:2013aa} and \cite{Song:2012aa}. Our 
method overcomes this limitation by mainly avoiding any bias towards using 
priors to global retina shape and instead only relies on the natural  
biological layer ordering, which is accomplished by restricting the 
assignment manifold to probabilities that satisfy the ordering constraint 
presented in Section \eqref{sec:Ordered-Segmentation}. The experimental results 
 reported in Section \ref{sec:Experimental-Results},  and the direct 
comparison to the state of the art segmentation techniques \cite{IOWA} and 
\cite{Rathke2014} by using common validation metrics, underpin a notable 
performance and robustness of the geometric  segmentation 
approach introduced in Section \ref{sec:Assignment-Flow}, that we extended to
order-preserving labeling in Section \ref{sec:Ordered-Segmentation}.
Furthermore, the results indicate that the ordered assignment flow successfully  
tackles problems in the field of retinal tissue classification on 3D-OCT data   
which are typically corrupted by speckle noise, with achieved performance 
comparable to manual graders which makes it to a method of choice for medical 
image applications and extensions therein. We point out that our approach 
consequently differs from common deep learning methods which 
explicitly aim to incorporate global context into the feature extraction 
process. In particular, throughout the 
experiments we observed higher regularization resulting in smoother transitions 
of layer boundaries along the B-scan acquisition axis similar to the effect in 
\cite{Rathke2014} where the used smooth global Gaussian prior leads to 
limitations for pathological applications.

 To reduce the reliance of manually segmented ground truth for extracting 
 dictionaries of prototypes, our method can easily be extended to unsupervised 
 scenarios in the context of \cite{Zisler:2020aa}. To deal with highly variable 
 layer boundaries another possible extension of our method is to predict 
 weights for geometric averaging 
\eqref{eq:weights-Omega-i} in an optimal control theoretic way, to cope with the 
linearized dynamics of the assignment flow \cite{Zeilmann:2020aa} as in detail 
elaborated in \cite{Huhnerbein:2020aa}. Consequently, by building on the 
feasible concept of spatially regularized 
 assignments \cite{Schnorr:2019aa}, the ordered flow \eqref{def:ordered_af} 
 possesses the potential to be extended towards the detection of 
pathological retina changes and vascular vessel structure. We expect that the 
joint interaction of retina tissues and blood vessels during the segmentation 
with the assignment flow will lead to more effective layer detection, which is the 
objective of our current research. 

\subsection*{Acknowledgement} We thank Dr.~Stefan Schmidt and Julian Weichsel 
for sharing with us their expertise on OCT sensors, data acquisition and 
processing. In addition, we thank Fred Hamprecht and Alberto Bailoni for their 
guidance in training deep networks for feature extraction from 3D data.\\ 

This work is supported by Deutsche Forschungsgemeinschaft (DFG) under Germany’s 
Excellence Strategy EXC-2181/1 - 390900948 (the Heidelberg STRUCTURES 
Excellence Cluster).

\section*{ORCID iDs}

Dmitrij Sitenko        \orcidaffil{0000-0002-0022-3891}

Bastian Boll           \orcidaffil{0000-0002-3490-3350}

Christoph Schn\"{o}rr  \orcidaffil{0000-0002-8999-2338}

\bibliographystyle{amsalpha}
\bibliography{./OCT-3D-AFlow}
\newpage

\end{document}